
\documentclass{article}


\usepackage{amsmath,amsfonts,bm}


















\def\1{\bm{1}}



\def\rt{{\textnormal{t}}}

\def\rw{{\textnormal{w}}}
\def\rx{{\textnormal{x}}}
\def\ry{{\textnormal{y}}}
\def\rz{{\textnormal{z}}}


\def\rvc{{\mathbf{c}}}

\def\rve{{\mathbf{e}}}

\def\rvu{{\mathbf{i}}}

\def\rvt{{\mathbf{t}}}
\def\rvu{{\mathbf{u}}}
\def\rvv{{\mathbf{v}}}
\def\rvw{{\mathbf{w}}}
\def\rvx{{\mathbf{x}}}
\def\rvy{{\mathbf{y}}}
\def\rvz{{\mathbf{z}}}





\def\vtheta{{\bm{\theta}}}
\def\va{{\bm{a}}}
\def\vb{{\bm{b}}}
\def\vc{{\bm{c}}}
\def\vd{{\bm{d}}}

\def\vf{{\bm{f}}}
\def\vg{{\bm{g}}}
\def\vh{{\bm{h}}}

\def\vk{{\bm{k}}}

\def\vr{{\bm{r}}}
\def\vs{{\bm{s}}}
\def\vt{{\bm{t}}}
\def\vu{{\bm{u}}}
\def\vv{{\bm{v}}}

\def\vx{{\bm{x}}}
\def\vy{{\bm{y}}}
\def\vz{{\bm{z}}}


\def\mA{{\bm{A}}}

\def\mC{{\bm{C}}}

\def\mG{{\bm{G}}}

\def\mI{{\bm{I}}}
\def\mJ{{\bm{J}}}

\def\mL{{\bm{L}}}

\def\mO{{\bm{O}}}

\def\mR{{\bm{R}}}

\def\mX{{\bm{X}}}

\DeclareMathAlphabet{\mathsfit}{\encodingdefault}{\sfdefault}{m}{sl}
\SetMathAlphabet{\mathsfit}{bold}{\encodingdefault}{\sfdefault}{bx}{n}











\newcommand{\E}{\mathbb{E}}

\newcommand{\R}{\mathbb{R}}

\newcommand{\KL}{\mathbb{D}}
\newcommand{\Var}{\mathrm{Var}}



\usepackage{url}

\usepackage{amsmath,amssymb,amsfonts,amsthm,bm}
\newtheorem{theorem}{Theorem}
\newtheorem{proposition}{Proposition}
\newtheorem{corollary}{Corollary}
\theoremstyle{definition}
\newtheorem{definition}{Definition}
\usepackage{mathtools}
\usepackage{graphicx}
\usepackage{paralist}
\usepackage[linesnumbered,ruled,noend]{algorithm2e}
\usepackage{xcolor}

\SetCommentSty{mycommfont}
\usepackage{subcaption}
\usepackage{tabularx}
\usepackage{booktabs}
\usepackage{caption}
\captionsetup{font=footnotesize}
\usepackage{wrapfig}
\usepackage{centernot}
\usepackage{enumitem}   

\newcommand{\independent}{\raisebox{0.05em}{\rotatebox[origin=c]{90}{$\models$}}}
\newcommand{\notindependent}{\centernot{\independent}}
\newcommand{\deq}{\overset{d}{=}}

\newcommand{\xt}{|_{\x,\rt}}
\DeclareMathOperator{\diag}{diag}

\DeclareMathOperator{\bern}{Bern}
\DeclareMathOperator{\logi}{Logistic}
\newcommand{\blambda}{\bm\lambda}
\newcommand{\btheta}{\bm\theta}
\newcommand{\bphi}{\bm\phi}
\newcommand{\bgamma}{\bm\gamma}
\newcommand{\bbeta}{\bm\beta}
\newcommand{\beps}{\bm\epsilon}
\newcommand{\inv}{^{-1}}
\newcommand{\st}{^*}
\newcommand{\PS}{\mathbb{P}}
\newcommand{\BS}{\mathbb{B}}
\newcommand{\F}{\mathbb{F}}
\newcommand{\x}{\rvx}
\newcommand{\y}{\rvy}
\newcommand{\z}{\rvz}
\newcommand{\rconf}{\rvu}
\newcommand{\conf}{\vu}

\newcommand{\vaeobsparam}{p_{\vtheta}(\y|\x,\rt)}

\newcommand{\vaepostparam}{p_{\vf,\bm\lambda}(\z|\x,\y,\rt)}

\newcommand{\trueobs}{p^*(\y|\x,\rt)}

\setlist{nosep} 
\AtBeginDocument{%
\setlength{\belowdisplayskip}{2pt plus 1.0pt minus 2.0pt} \setlength{\belowdisplayshortskip}{2pt plus 1.0pt minus 2.0pt}
\setlength{\abovedisplayskip}{3pt plus 1.0pt minus 2.0pt} \setlength{\abovedisplayshortskip}{3pt plus 1.0pt minus 2.0pt}
}
\makeatletter
\def\thm@space@setup{%
  \thm@preskip=3.0pt plus 1.0pt minus 2.0pt
  \thm@postskip=2.0pt plus 1.0pt minus 2.0pt
}
\makeatother

\theoremstyle{plain}
\newtheorem{lemma}{Lemma}

\usepackage{microtype}
\usepackage{graphicx}
\usepackage{booktabs} 

\usepackage{hyperref}



\usepackage[accepted]{icml2021}

\icmltitlerunning{Intact-VAE: Estimating Treatment Effects under Unobserved Confounding}

\begin{document}

\twocolumn[
\icmltitle{Intact-VAE: Estimating Treatment Effects under Unobserved Confounding}



\icmlsetsymbol{equal}{*}

\begin{icmlauthorlist}
\icmlauthor{Pengzhou (Abel) Wu}{sokendai,ism}
\icmlauthor{Kenji Fukumizu}{sokendai,ism}

\end{icmlauthorlist}

\icmlaffiliation{sokendai}{Department of Statistical Science, The Graduate University for Advanced Studies, Tachikawa, Tokyo}
\icmlaffiliation{ism}{The Institute of Statistical Mathematics, Tachikawa, Tokyo}

\icmlcorrespondingauthor{Pengzhou (Abel) Wu}{wu.pengzhou@ism.ac.jp}

\icmlkeywords{VAE, variational autoencoder, representation Learning, treatment effects, causal inference, Unobserved Confounding, identifiability, CATE, ATE}

\vskip 0.3in
]



\printAffiliationsAndNotice{} 

\begin{abstract}
As an important problem of causal inference, we discuss the identification and estimation of treatment effects under unobserved confounding. Representing the confounder as a latent variable, we propose Intact-VAE, a new variant of variational autoencoder (VAE), motivated by the prognostic score that is sufficient for identifying treatment effects. We theoretically show that, under certain settings, treatment effects are identified by our model, and further, based on the identifiability of our model  (i.e., determinacy of representation), our VAE is a consistent estimator with representation balanced for treatment groups. Experiments on (semi-)synthetic datasets show state-of-the-art performance under diverse settings.
\end{abstract}

\section{Introduction}


Causal inference \citep{imbens2015causal, pearl2009causality}, i.e, estimating causal effects of interventions, is a fundamental problem across many domains. In this work, we focus on the estimation of treatment effects, such as effects of public policies or a new drug, based on a set of observations consisting of binary labels for treatment / control (non-treated), outcome, and other covariates. The fundamental difficulty of causal inference is that we never observe \textit{counterfactual} outcomes, which would have been if we had made another decision (treatment or control). While the ideal protocol for causal inference is randomized controlled trials (RCTs), they often have ethical and practical issues, or suffer from prohibitive costs. Thus, causal inference from observational data is indispensable. 
It introduces other challenges, however. The most crucial one is \textit{confounding}: there may be variables (called \textit{confounders}) that causally affect both the treatment and the outcome, and spurious correlation follows. 

A large majority of works in causal inference rely on the \textit{unconfoundedness}, which means that appropriate covariates are collected so that the confounding can be controlled by conditioning on those variables. That is, there is in essence \textit{no} unobserved confounders. This is still challenging, due to systematic imbalance (difference) of the distributions of the covariates between the treatment and control groups. One classical way of dealing with this difference is re-weighting \citep{horvitz1952generalization}. There are semi-parametric methods \citep[e.g. TMLE,][]{van2011targeted}, which have better finite sample performance, and also non-parametric,  tree-based methods \citep[e.g.~Causal Forests (CF),][]{wager2018estimation}. Notably, there is a recent rise of interest in learning \textit{balanced} representation of covariates, which is independent of treatment groups, starting from \citet{johansson2016learning}. 

There are a few lines of works that challenge the difficult but important problem of causal inference under \textit{unobserved confounding}, where the methods in the previous paragraph fundamentally does not work. Without covariates we can adjust for, many of them assume special structures among the variables, such as instrumental variables (IVs) \citep{angrist1996identification}, proxy variables \citep{miao2018identifying}, network structure \citep{ogburn2018challenges}, and multiple causes \citep{wang2019blessings}. Among them, instrumental variables and proxy (or surrogate) variables are most commonly exploited. \textit{Instrumental variables} are not affected by unobserved confounders, influencing the outcome only through the treatment. On the other hand, \textit{proxy variables} are causally connected to unobserved confounders, but are not confounding the treatment and outcome by themselves. Other methods use restrictive parametric models \citep{allman2009identifiability}, or only give interval estimation \citep{manski2009identification, kallus2019interval}.

In this work, we address the problem of estimating treatment effects under unobserved confounding. The naive regression with observed variables introduces bias, if the decision of treatment and the outcome are confounded, as explained in Sec.~\ref{sec:setup}. 
To model the problem, we regard the confounder as a latent variable in representation learning, and propose a VAE-based method that identifies the treatment effects. We in particular discuss the \textit{individual-level} treatment effect, which measures the treatment effect conditioned on the covariate, for example, on a patient's personal data. 


Our method is based firmly on established results in causal inference. Naturally combined with the important concepts of sufficient scores \cite{hansen2008prognostic,rosenbaum1983central}, our theory show identification of treatment effects \textit{without} needs to recover hidden confounder or true scores. 

Our method also exploits the recent advance of \textit{identifiable} VAE \citep[iVAE]{khemakhem2020variational}. 
The hallmark of deep neural networks (NNs) is that they can learn representations of data. 
A principled approach to interpretable representations is identifiability, that is, when optimizing our learning objective w.r.t.~the representation function, only a unique optimum, which represents the true latent structure, will be returned. Our method provides the stronger identifiability that gives \textit{balanced} representation.

The main \textbf{contributions} of this paper are as follows: 
\vspace{-5pt}
\begin{enumerate}[topsep=0pt, partopsep=0pt, itemsep=0pt, parsep=0pt, leftmargin=13pt]
\item[1)] A new identifiable VAE as a balanced estimator for treatment effects under unobserved confounding;
\item[2)] 
Theory, with newly introduced B*-score, of identifiability, identification, and estimation of treatment effects;
\item[3)] Experimental comparisons with state-of-the-art methods.  
\end{enumerate}


\subsection{Related Work}


\textbf{Identifiability of representation learning.} With recent advances in nonlinear ICA, identifiability of representations is proved under a number of settings, e.g., auxiliary task for representation learning \citep{hyvarinen2016unsupervised, hyvarinen2019nonlinear} and VAE \citep{khemakhem2020variational}. Recently, \citet{roeder2020linear} extends the the result to include a wide class of state-of-the-art deep discriminative models. The results are exploited in bivariate causal discovery \citep{pmlr-v108-wu20b} and structure learning \citep{yang2020causalvae}. To the best of our knowledge, this work is the \textit{first} to explore this 
identifiability in inference on treatment effects.  

\textbf{Representation learning for causal inference.} Recently, researchers start to design representation learning methods for causal inference, but mostly limited to \textit{unconfounded} settings. Some methods focus on learning a balanced representation of covariates, e.g., BLR/BNN \citep{johansson2016learning}, and TARnet/CFR \citep{shalit2017estimating}. Adding to this, \cite{yao2018representation} also exploits the local similarity of between data points. \cite{shi2019adapting} uses similar architecture to TARnet, considering the importance of treatment probability. There are also methods using GAN \citep[GANITE]{yoon2018ganite} and Gaussian process \citep{alaa2017bayesian}. Our method shares the idea of balanced representation learning, and further extends to the harder problem of \textit{unobserved confounding}.

\textbf{Causal inference with auxiliary structures.}
Both of our method and CEVAE \citep{louizos2017causal} use VAE as a learning method. Except this apparent similarity, our method is quite different to CEVAE in motivation, applicability, architecture, and, particularly, theoretical development. Note that CEVAE relies on the strong assumption that VAE can recover the true confounder distribution. More detailed comparisons are given in Appendix. Under linear model, \citet{kallus2018causal} use matrix factorization to infer the confounders from proxy variables, and give consistent ATE estimator together with its error bound. \citet{miao2018identifying} established conditions for identification using more general proxies, but without practical estimation method. Additionally, two active lines of works in machine learning exist in their own right, exploiting IV \citep{hartford2017deep} and network structure \citep{veitch2019using}. Different to the above methods, our method is based on more general concepts of sufficient scores for causal inference, \textit{without} assuming specific auxiliary structures. Also, our method gives consistent estimator for \textit{nonlinear} outcome models, given consistent learning of VAE. 


\begingroup
\setlength{\columnsep}{8pt}
\setlength{\intextsep}{5pt}
\begin{wrapfigure}{r}{0.25\columnwidth}
  \vspace{-.2in}
  \begin{center}
    \includegraphics[width=0.25\columnwidth]{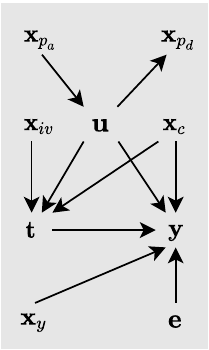}
  \end{center}
  \vspace{-.2in}

  \caption{
  \footnotesize{A typical causal graph of our setting.}
  } 
  
\vspace{-.1in}
\label{fig:setting}
\end{wrapfigure}

\section{Setup: Treatment Effects and Identification}
\label{sec:setup}

Following \citet{imbens2015causal}, we begin by introducing  \textit{potential outcomes} (or \textit{counterfactual outcomes}) $\rvy(t), t \in \{0,1\}$. $\rvy(t)$ is the outcome we \textit{would} observe, if we applied treatment value $\rt=t$. Note that, for a unit under research, we can observe only one of $\y(0)$ or $\y(1)$, corresponding to which factual treatment we have applied, do not observe the other. This is the \textit{fundamental problem of causal inference}.  We often observe relevant covariate $\rvx$, which is associated with individuals, and the observation $(\rvx,\rvy,\rt)$ is a random variable with underlying probability distribution. 

The expected potential outcome is denoted by $\mu_t(\vx) = \E(\rvy(t)|\rvx=\vx)$, conditioned on covariate $\rvx=\vx$.
The estimands in this work are the causal effects,  Conditional Average Treatment Effect (CATE) $\tau$ and Average Treatment Effect (ATE) $\nu$, defined respectively by 
\vspace{-.05in}
\begin{equation}
\label{eq:ce}
    \tau(\vx) = \mu_1(\vx) - \mu_0(\vx),\quad \nu = \E(\tau(\vx)).
\end{equation}
CATE can be understood as an \textit{individual-level} treatment effect, if conditioned on
highly discriminative covariates.

\textit{Identifiability of treatment effects} means that, the true, \textit{observational} distribution uniquely determines the ATE or, better, CATE. Further, \textit{identification of treatment effects} means that, we can derive an equation of treatment effects, given the observational distribution. 

As the standard ones \citep{rubin2005causal}\citep[Ch.~3]{hernanCausalInferenceWhat2020}, we make the following assumption for the data generating probability throughout this paper: \\
{\bf (A)} 
there exists a random variable $\rconf \in \R^n$ such that (i) together with $\rvx$, it satisfies \textit{exchangeability}
($\rvy(t) \independent \rt| \rconf, \rvx$\footnote{It is satisfied for \textit{both} $t$. In this paper, when $t$ appears in a statement without quantification, it always means ``for both $t$''.}) (ii) \textit{positivity} ($p(\rt|\rconf, \rvx) > 0$), and (iii) \textit{consistency} of counterfactuals ($\rvy = \rvy(t)$ if $\rt=t$). 
We say that \textit{weak ignorability} holds when we have both exchangeability and positivity.

Exchangeability means there is no correlation between factual assignment of treatment and counterfactual outcomes, given $\rconf,\rvx$, just as it is the case in an RCT. 
Thus, it can be understood as \textit{unconfoundedness} given $\rconf,\rvx$, and $\rconf$ can be seen as the unobserved confounder(s). Positivity says the supports of $p(\rt=t|\rconf,\rvx)$ should always be \textit{overlapped}, and this ensures there are no impossible events in the conditions after adding $\rt=t$.
Finally, consistent counterfactuals are well defined. In causal inference, we see $\rvy(t)$ as the underlying hidden variables that give \textit{factual} (observational) $\y$ if $\rt=t$ is assigned. If, say, we assigned treatment $\rt=1$, we observe $\vy=\vy(1)$, a realization of $\y(1)$. We understand that, there exists also $\y(0)$ corresponding to the random outcome we would observe, if we had applied $\rt=0$, the counterfactual assignment. 

In Figure \ref{fig:setting}, $\rvx_{c}$,$\rvx_{iv}$,$\rvx_{p_a}$,$\rvx_{p_d}$,$\rvx_{y}$ are covariates that are: (observed) confounder, IV, antecedent proxy (that is antecedent of $\rvz$), descendant proxy, and antecedent of $\rvy$, respectively. The covariate(s) $\rvx$ may \textit{not} have subsets in any categories in the graph. The assumptions may hold otherwise, e.g., $\rvx$ is a child of $\rt$. And $\rve$ is unobserved noise on $\rvy$ (used in Sec.\ref{sec:treatment}).

CATE can be given by \eqref{eq:id} (\textbf{(A)} used in the second equality).
\begin{equation}
\label{eq:id}
\begin{split}
    \mu_t(\vx) &= \E(\E(\rvy(t)|\rconf,\vx)) = \E(\E(\rvy|\rconf,\vx,\rt=t)) \\ &=\textstyle \int (\int p(y|\conf,\vx,t)ydy)p(\conf|\vx)d\conf.
\end{split}
\end{equation}

However, the variable $\rconf$ above is an \textit{unobserved confounder}.
Due to it, assumption \textbf{(A)} does not ensure identifiability and \eqref{eq:id} is not an identification. 
Note that, under unobserved confounding, the naive regression $\E[\rvy| \rvx=\vx, \rvt=t]$ based on observable variables is not equal to $\mu_t(\vx)$. In fact, if an unknown factor correlates with $\rvt$ positively and tends to give higher value for $\rvy$, the naive regression $\E[\rvy| \rvx=\vx, \rvt=1]$ should be higher than $\E[\rvy(1)|\rvx=\vx]$.

\section{Intact-VAE}
In this section, we introduce B*-scores motivated by prognostic scores (Sec.~3.1), and our VAE model and architecture, based on the distribution $p(\rvy,\rvz|\rvx,\rt)$ (Sec.~\ref{sec:arch})
The developments give many hints to causal inference, and enable us to address identification and estimation of treatment effects in Sec.~\ref{sec:treatment}. 

\subsection{Motivation}

Our method is motivated by prognostic scores \citep{hansen2008prognostic}, adapted as \textit{P*-scores} in this paper, closely related to the important concept of balancing score \citep{rosenbaum1983central}. Both are sufficient scores (statistics) for identification, but P*-score is arguably more applicable, and, combined with our model, it motivates an identifiable VAE.

\begin{definition}[P*-scores]
A \textit{P0-score} (or \textit{P1-score}) of random variable $\rvv$ is a function $\PS(\rvv)$ such that $\rvy(0)\independent\rvv|\PS(\rvv)$ (or $\rvy(1)\independent\rvv|\PS(\rvv)$). Given a P0-score $\PS_0$ and a P1-score $\PS_1$, a \textit{Pt-score} is defined as $\PS_{\rt}$ (i.e. $\PS_t$ if $\rt=t$). A Pt-score is called a \textit{P-score} if $\PS\coloneqq\PS_0=\PS_1$.
\end{definition}

The sufficiency of P*-score inspires the following definition.

\begin{definition}[B*-scores]
\label{def:bscore}
Let $p(\x,\y,\rt,\rconf,\BS_0)$ be a true generating distribution.
$\BS_0$ is a B0-score (for $p(\y|\x,\rt)$) if $\E(\rvy(0)|\BS_0,\rvx)=\E(\rvy(0)|\BS_0,\rt=0)\coloneqq \F_0(\BS_0)$ for any $\BS_0=B_0,\x=\vx$. 
B1-score, Bt-score, and B-score are also defined, as in the similar way that P*-scores are defined relative to P0-score.
\end{definition}

The point is that, Bt-score is yet another weaker, but still sufficient score. Theorem \ref{cate_by_bts} and its corollary, which is also the key of Pt-score, is clearly valid for Bt-score. 





Our goal is to build a VAE that can learn from observational data to obtain a Bt-score or, more ideally B-score, by using the latent variable $\z$ of the VAE. This latent variable can be seen as a \textit{causal representation}, which can be used to identify and estimate treatment effects by \eqref{eq:cate_by_bts} or \eqref{eq:cate_by_bs}. 
Recovering the true confounder $\rconf$ is not necessary. 

Bt-score relaxes the \textit{independence property of Pt-score} (Proposition \ref{prop:pscore} in Appendix),
\begin{equation}
\label{eq:ps_indp}
    \rvy(t)\independent\rt,\rconf,\rvx|\PS_t
\end{equation}
to the \textit{mean equality} in Definition \ref{def:bscore}, that is sufficient. Both are used in second equality of \eqref{eq:cate_by_bts}. 
\begin{theorem}[CATE by Bt/Pt-score]
\label{cate_by_bts}
If $\BS_{\rt}$ is a Bt/Pt-score and $p(\rt|\BS_{\rt})>0$ 
, then CATE can be given by
\begin{equation}
\label{eq:cate_by_bts}
    \begin{split}
        \mu_{t}(\vx) &= \E(\E(\rvy({t})|\BS_{t},\vx)) = \E(\E(\rvy|\BS_{t},\rt={t})) \\ &=\textstyle \int (\int p(y|B_{t},{t})ydy)p(B_{t}|\vx)dB_{t}
    \end{split}
\end{equation}
\end{theorem}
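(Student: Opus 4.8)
The plan is to verify the chain of three equalities in \eqref{eq:cate_by_bts} from left to right; each reduces to the law of iterated expectations together with a single property of the score, plus consistency and positivity. Throughout, I read the outer expectation in $\E(\E(\rvy(t)\mid\BS_t,\vx))$ — as in \eqref{eq:id} — as the expectation over the conditional law of $\BS_t$ given $\rvx=\vx$. For the first equality, since $\mu_t(\vx)=\E(\rvy(t)\mid\rvx=\vx)$ and $\BS_t$ is a random variable jointly distributed with $(\rvx,\rvy(t))$, the tower property immediately gives $\mu_t(\vx)=\E[\,\E(\rvy(t)\mid\BS_t,\rvx)\mid\rvx=\vx\,]$, the stated right-hand side.

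The second equality is where the score hypothesis is used. If $\BS_\rt$ is a Bt-score, Definition~\ref{def:bscore} (its version for general $t$) asserts precisely that $\E(\rvy(t)\mid\BS_t=B_t,\rvx=\vx)=\E(\rvy(t)\mid\BS_t=B_t,\rt=t)=:\F_t(B_t)$ for every $B_t,\vx$; substituting this into the integrand from the first equality — and noting that $\F_t$ no longer depends on $\vx$ — gives $\mu_t(\vx)=\E[\,\E(\rvy(t)\mid\BS_t,\rt=t)\mid\rvx=\vx\,]$. If instead $\BS_\rt$ is a Pt-score, I would first invoke Proposition~\ref{prop:pscore} / \eqref{eq:ps_indp}, $\rvy(t)\independent\rt,\rconf,\rvx\mid\PS_t$, which yields the same mean equality $\E(\rvy(t)\mid\PS_t,\rvx)=\E(\rvy(t)\mid\PS_t)=\E(\rvy(t)\mid\PS_t,\rt=t)$, so the two cases merge from here on. The hypothesis $p(\rt\mid\BS_\rt)>0$ is what makes the conditioning event $\{\BS_t=B_t,\rt=t\}$ non-null, hence $\E(\rvy(t)\mid\BS_t,\rt=t)$ well defined.

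For the third equality, on $\{\rt=t\}$ we have $\BS_\rt=\BS_t$ and, by consistency \textbf{(A)}(iii), $\rvy=\rvy(t)$, so $\E(\rvy(t)\mid\BS_t=B_t,\rt=t)=\E(\rvy\mid\BS_t=B_t,\rt=t)=\int p(y\mid B_t,t)\,y\,dy$. Expanding the remaining outer expectation as an integral against the conditional density $p(B_t\mid\vx)$ of $\BS_t$ given $\rvx=\vx$ produces the final expression in \eqref{eq:cate_by_bts}, completing the chain.

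The only real delicacy — the ``hard part'', such as it is — is measure-theoretic bookkeeping: the notion of a Bt-score presupposes a joint law $p(\x,\y,\rt,\rconf,\BS_0)$ for potential outcomes and score, and one must ensure that every conditional expectation written down is taken on a positive-probability event, which is exactly the role of the hypothesis $p(\rt\mid\BS_\rt)>0$, so that the substitutions in the second and third equalities are legitimate; one should also be careful that restricting to $\{\rt=t\}$ genuinely replaces $\BS_\rt$ by $\BS_t$. It is worth noting, finally, that a Bt-score supplies only the weaker \emph{mean} equality of Definition~\ref{def:bscore}, not the full conditional independence \eqref{eq:ps_indp}; this suffices precisely because only first moments of $\rvy(t)$ ever occur in the argument.
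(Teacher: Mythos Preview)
Your proposal is correct and follows exactly the approach the paper indicates: the authors omit the proof, stating only that it is ``rather straightforward from \eqref{eq:ps_indp} (see Proposition~\ref{prop:pscore}) and the definitions of P*/B*-scores''; your argument is precisely the natural expansion of that remark---tower property for the first equality, the mean-equality in Definition~\ref{def:bscore} (or \eqref{eq:ps_indp} in the Pt-score case) for the second, and consistency plus positivity to pass from $\rvy(t)$ to $\rvy$ and write the integral.
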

 Importantly, compared to \eqref{eq:id}, $p(y|B_{t},{t})$ in \eqref{eq:cate_by_bts} does \textit{not} depend on $\x$, easily understood as $\rvy\independent \rvx|\BS,\rt$ implied by \eqref{eq:ps_indp}. 

We have a corollary simply by $\BS_0=\BS_1$ for a B-score.
\begin{corollary}[CATE by B/P-score]
\label{cate_by_bs}
If $\BS$ is a B/P-score, then CATE can be given by
\begin{equation}
\label{eq:cate_by_bs}
    \mu_{t}(\vx)=\textstyle \int (\int p(y|B,{t})ydy)p(B|\vx)dB
\end{equation}
\end{corollary}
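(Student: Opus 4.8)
The plan is to obtain Corollary \ref{cate_by_bs} as a direct specialization of Theorem \ref{cate_by_bts} to the case $\BS_0 = \BS_1 = \BS$. First I would record that, by Definition \ref{def:bscore}, a B-score $\BS$ is \emph{simultaneously} a B0-score and a B1-score, so the associated Bt-score $\BS_{\rt}$ equals the single function $\BS$ for both values of $t$ and in particular does not depend on $\rt$. In other words, the defining mean-equalities $\E(\rvy(t)\mid\BS,\rvx) = \E(\rvy(t)\mid\BS,\rt=t) =: \F_t(\BS)$ hold for $t=0$ and $t=1$ with the \emph{same} $\BS$. The positivity hypothesis of Theorem \ref{cate_by_bts}, $p(\rt=t\mid\BS)>0$, is carried over as a standing assumption exactly as in the theorem. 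Applying \eqref{eq:cate_by_bts} for each $t$ with $\BS_{t}$ replaced by $\BS$ then gives $\mu_{t}(\vx) = \int\!\big(\int p(y\mid B,t)\,y\,dy\big)\,p(B\mid\vx)\,dB$, which is precisely \eqref{eq:cate_by_bs}; CATE and ATE follow from $\tau(\vx)=\mu_1(\vx)-\mu_0(\vx)$ and $\nu=\E(\tau(\vx))$.

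The one point worth isolating — and the reason this deserves a separate corollary rather than being folded into Theorem \ref{cate_by_bts} — is \emph{why} the mixing density $p(B\mid\vx)$ is identical for $t=0$ and $t=1$. For a genuine Bt-score built from a priori different $\BS_0,\BS_1$, Theorem \ref{cate_by_bts} would produce $p(B_0\mid\vx)$ and $p(B_1\mid\vx)$ separately; the equality $\BS_0=\BS_1$ collapses these to one law $p(B\mid\vx)$, so a single representation of $\vx$ serves both potential-outcome regressions and only the outcome kernel $p(y\mid B,t)$ carries the treatment dependence. I would emphasize that this shared-representation structure is exactly the ``balance across treatment groups'' property that the later VAE construction targets, which motivates stating the B-score case explicitly.

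There is essentially no obstacle here: the argument is a one-line substitution into Theorem \ref{cate_by_bts}, and the ``hard part'' is only bookkeeping — verifying that ``$\BS$ is a B-score'' genuinely entails both defining mean-equalities (immediate from Definition \ref{def:bscore} once $\BS_0=\BS_1$) and that the positivity condition is in force, which I would state as inherited from the theorem's hypotheses.
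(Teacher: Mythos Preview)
Your proposal is correct and matches the paper's own treatment essentially verbatim: the paper states just before the corollary that ``we have a corollary simply by $\BS_0=\BS_1$ for a B-score'' and, in the appendix, that the proof is ``rather straightforward'' and omitted. Your additional remarks on why $p(B\mid\vx)$ is shared across $t$ and on the positivity hypothesis being inherited are accurate elaborations of what the paper leaves implicit.
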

Also important here is the difference to \eqref{eq:cate_by_bts} is only\footnote{Note $p(y|B_{t},{t})=p(y|B,{t})$ also in \eqref{eq:cate_by_bts}, since $\rt=t$ is given.} in $p(B|\vx)$, which does \textit{not} depend on $\rt$ given $\x$. 

We should note that, both B0/B1-scores can depend on $\rt$, so, a B-score $\BS\coloneqq\BS_0=\BS_1$ can depend on $\rt$, despite the name might suggest. This is also true for P-score $\PS$ seeing as a random variable. We use the same symbol to denote a B*/P*-score and the random variable defined by it, when appropriate.

\subsection{Model and Architecture}
\label{sec:arch}
The generative model of our VAE is 
\begin{equation}
\label{model_indep}
    p(\rvy,\rvz|\rvx,\rt) = p(\rvy|\rvz,\rt)p(\rvz|\rvx,\rt).
\end{equation}
The correspondence between \eqref{eq:cate_by_bts} and \eqref{model_indep} lays the \textbf{first foundation} of our method. 
Note that $p(\BS_t|\x)$ in \eqref{eq:cate_by_bts} means $\BS_{\rt}$ (so, also $\z$) should depend on $\rt$ given $\x$.

We are steps away from our VAE architecture now. 
The major jump is noticing that \eqref{model_indep} has a similar factorization with the generative model of iVAE (see Appendix for details), that is $p(\rvy,\rvz|\x) = p(\rvy|\rvz)p(\rvz|\x)$. Note that the first factor does \textit{not} depend on $\x$, and this behavior is shared by our covariate $\rvx$ in \eqref{model_indep}.



Similarly to iVAE, $p(\rvy|\rvz,\rt)$ is our decoder, and $p(\rvz|\rvx,\rt)$ is our \textit{conditional} prior. Further, since we have the conditioning on treatment $\rt$ in \textit{both} factors of \eqref{model_indep}, our VAE architecture should be a combination of iVAE and conditional VAE (CVAE, see Appendix), with \textit{treatment} $\rt$ as the conditioning variable. The ELBO can be derived from
\begin{equation}
\label{elbo}
\begin{split}
    \log p(\rvy|\rvx,\rt) \geq \log p(\rvy|\rvx,\rt) - \KL(q(\rvz|\rvx,\rvy,\rt)\Vert p(\rvz|\rvx,\rvy,\rt)) \\ 
    = \E_{\vz \sim q}\log p(\rvy|\vz,\rt) - \KL(q(\rvz|\rvx,\rvy,\rt)\Vert p(\rvz|\rvx,\rt)) .
\end{split}
\end{equation}
Note again that, our decoder corresponds to outcome distribution $p(\y|\BS,\rt)$ in \eqref{eq:cate_by_bts} and our conditional prior to score distribution $p(\BS_t|\x)$.
Our encoder $q$, which conditions on all observables, is standard, and we will see its importance later. We name this architecture \textit{Intact-VAE} (\textit{I}de\textit{n}tifiable \textit{t}re\textit{a}tment-\textit{c}ondi\textit{t}ional VAE). 

\begingroup

\setlength{\columnsep}{8pt}
\setlength{\intextsep}{5pt}
\begin{wrapfigure}{r}{0.45\columnwidth}
  \vspace{-.3in}
  \begin{center}
    \includegraphics[width=0.45\columnwidth]{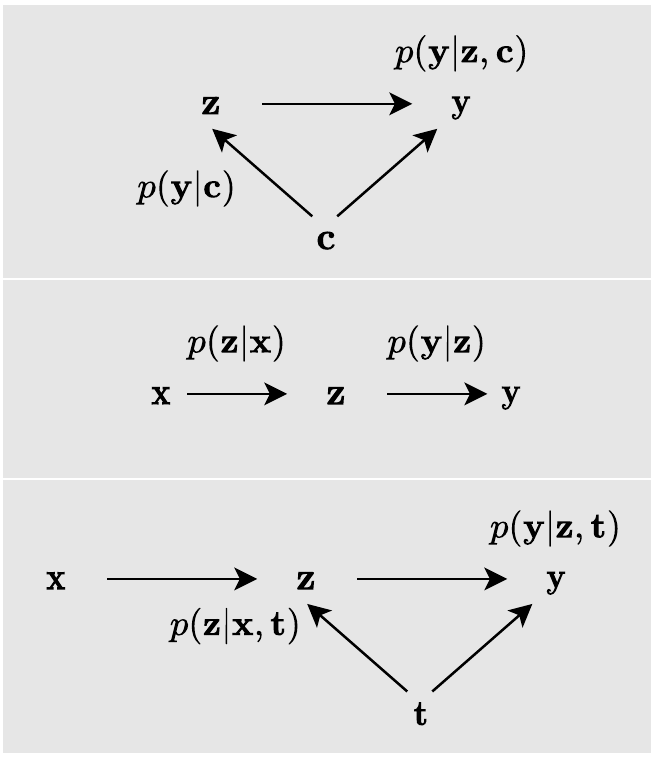}
  \end{center}
  \vspace{-.2in}
  \caption{Graphical models of the decoders. From top: CVAE, iVAE, and Intact-VAE. The encoders are similar: they take all observables and build approximate posteriors.} 
\vspace{-.1in}
\label{f:vae}
\end{wrapfigure}

Figure \ref{f:vae} depicts the relationship of CVAE, iVAE, and Intact-VAE. Do \textit{not} confuse the graphical model of our generative \textit{model} with a \textit{causal} graph. Particularly, do not confuse confounder $\rconf$ with latent variable $\z$ that corresponds to a Bt-score. For example, in Figure \ref{fig:setting}, the true, causal generating process we want to address,  confounder $\rconf$ should be the cause of $\rt$, and the causal arrow between $\rconf$ and $\rvx$ could also be reversed. 

Under reasonable assumptions in Sec.~\ref{sec:treatment}, our method is applicable to quite general causal settings in Figure \ref{fig:setting}, as we will show in Theorem \ref{id_tion}.
In particular, the \textit{observational} distribution $p(\rvy|\rvx,\rt)$ generated by our model can be the same as truTheorem

\endgroup




Readers may notice that, setting $p(\rvz|\rvx,\rt)=p(\rvz|\rvx)$ in \eqref{model_indep}
corresponds to \eqref{eq:cate_by_bs} (and also \eqref{eq:id} if we look only at $p(\rconf|\x)$). Indeed, later in Sec.~\ref{sec:estimation}, we will see that, given existence of a B/P-score, this could enable us to recover a B-score as representation, and give a \textit{balanced} estimator.

We detail the parameterization of Intact-VAE. For tractable inference and easy implementation, the decoder $p_{\vf,\vg}(\rvy|\rvz,\rt)$, conditional prior $p_{\vh,\vk}(\rvz|\rvx,\rt)$, and encoder $q_{\vr,\vs}(\rvz|\rvx,\rvy,\rt)$ are factorized Gaussians, i.e., a product of 1-dimensional Gaussian distributions, though our theory (Theorem \ref{idmodel} in Appendix) allows more general distributions. And this is not restrictive if the mean and variance are given by arbitrary nonlinear functions. 
\begin{equation} 
\label{model_param}
\textstyle
    \rvy|\rvz,\rt \sim \prod_{j}^d \mathcal{N}(y_j;f_j, g_j), 
    \rvz|\rvx,\rt \sim \prod_{i}^n \mathcal{N} ( z_i ; h_i , k_i) 
\end{equation}
\vspace{-.2in}
\begin{equation}
\label{eq:enc}
\textstyle
    \rvz|\rvx,\rvy,\rt \sim \prod_{i=1}^n \mathcal{N}(z_i ; r_i , s_i)
\end{equation}
$\vtheta=(\vf,\vg,\vh,\vk)$ and $\bm\phi=(\vr,\vs)$ are functional parameters given by NNs which take the respective conditional variables as inputs (e.g. $\vh\coloneqq(h_i(\rvx,\rt))^T$).

The \textit{identifiability} of our model, that is, \textit{parameters} $(\vf,\bm\lambda)$ can be identified (learned) up to affine transformation, is not needed until Sec.~\ref{sec:estimation} and thus saved to Theorem \ref{idmodel} in Appendix.
Throughout the paper, we will use subscript to denote the version of a quantity when $\rt=t$ is given, e.g., $\vh_t(\rvx)\coloneqq(h_i(\rvx,t))^T$

\section{Identification and Estimation}
\label{sec:treatment}

For identifiability of treatment effects, we need further assumptions on true generating process. 
In Sec.~\ref{sec:identifiability}, we give two examples of data generating process that are expressed by observables, and thus satisfy identifiability.
Then, we present the main theoretical results of this paper. First, our model can recover a Bt-score, and thus identify treatment effects (Sec.~\ref{sec:identification}), under some assumptions generalized from the examples. Second, our Intact-VAE is a consistent estimator for CATE, or even more ideally, balanced estimator (Sec.~\ref{sec:estimation}) with stronger model identifiability than iVAE.

\subsection{Examples of Identifiable Generating Process}
\label{sec:identifiability}





\textbf{Example 1.} 
Consider the generating process $\y=\vf^*(\PS^*( \rvx,\rconf,\rve_b),\rt)+\rve_a$ where $\vf^*,\PS^*$ are functions and $\rve=(\rve_a,\rve_b)$ is a noise such that $\rve_a \independent \rvx,\rconf,\rve_b|\PS^*$. Recall the definition, $\PS^*$ is a \textit{P-score} (thus also a B-score). However,  $\PS^*$ does not give identifiability since it is a function of  unobservables.
If, further, $\rve_a=\bm0$ and $\vf_t^*$ is \textit{injective}, then $\PS^*={\vf}^{*-1}_{\rt}(\rvy)$, we have identifiability. 

The next example uses a \textit{B-score} that is a function of $\x$ only. 

\textbf{Example 2.} Consider $\y=\vf^*(\BS^*(\rvx), \rconf,\rve_b),\rt)+\rve_a$.
Again by \eqref{eq:ps_indp}, $\BS^*$ is not a P-score, because $\y\notindependent\rconf|\BS\st$. 
However, $\BS^*$ can be a B-score in some cases, e.g., under linear $\vf\st$ (see Appendix for detail). We then have identifiability, because $\BS^*$ depends only on observable $\x$.



In Example 1, we prove that a Pt-score (but not P-score, recall Corollary \ref{cate_by_bs} for diffrence) is given by ${\bar{\vf}}^{-1}_{\rt}(\rvy)$ where $\bar{\vf}$ is \textit{any injective} functional parameter. The injectivity of $\bar{\vf},\vf\st_t$ is the key, it implies, up to an injective mapping, ${\bar{\vf}}^{-1}_{\rt}(\rvy)$ and $\PS\st$ have the same distribution.
To our comfort, corresponding to $\rve_a=\bm0$, set $\vg_t(\rvz) = \bm0$ in model, then $\rvz=\bar{\vf}^{-1}_{\rt}(\rvy)$ is given by our degenerate \textit{posterior} model $p_{\bar{\vf},\bm\lambda}(\z|\rvy,\rvx,\rt)=\delta(\rvz-\bar{\vf}^{-1}_{\rt}(\rvy))$ ($\delta$ function).

In Example 2, setting $\vk_t(\rvz) = \bm0$ in model, we prove that a Bt-score is given by $\bar{\vh}_{\rt}(\rvx)$, the degenerate conditional \textit{prior}. Again, 
the distributions of $\z$ (model) and $\BS\st$ (truth) is different only up to an injective mapping. Inspired by Example 1, we require again parameter $\vf_t$ \textit{and} the $\F_t$ corresponding to $\BS\st$ to be \textit{injective}. As to the \textit{noise}, to match our model and truth, we assume $\rve_a$ to be factorized Gaussian, and our model can learn it.

To sum up, in Example 1, we have P-score by our posterior model depending \textit{only} on $\y$, but we require zero noise on $\y$ to recover it. In Example 2, we recover B-score by our prior model depending \textit{only} on $\x$, while we can have noise. These are ample evidence for generalization. Particularly, our posterior depends on \textit{both} $\x,\y$, there should be more general cases where our method recovers a B-score depending on $\x,\y$. Indeed, this is what we have in Theorem \ref{id_tion}. 






\subsection{Identification}
\label{sec:identification}
We first establish identification, which means the best functional parameters can express the true treatment effects, before discussing learning in Sec.~\ref{sec:estimation}. The identification can be regarded as nonparametric, since we allow the functional parameters to realize arbitrary complex functions. When we realize the functions by NNs, by the universal approximator property for distributions \citep{lu2020universal}, they can be approximated with arbitrary accuracy. 

We summarize lessons learned from the examples formally. 
\begin{lemma}[Noise model]
Let a distribution $p(\x,\y,\rt,\BS,\bm\epsilon)$ be generated by $\y=F_{\rt}(\BS)+\bm\epsilon$ where $F_t$ is a function and $\BS,\bm\epsilon$ are random variables, we have the following.

\renewcommand{\labelenumi}{\arabic{enumi})}
\begin{enumerate}
\def\theenumi{\arabic{enumi})}
\item \hspace{-3mm} \footnote{See Appendix for variations, which are used in Theorem \ref{id_tion}.} \hspace{-2mm} $\BS$ is a B-score with $\F_t=F_t+G$ \emph{iff}.~$\E(\bm\epsilon|\BS,\x)=\E(\bm\epsilon|\BS,\rt)\coloneqq G(\BS)$.
\item 
Add
$\bm\epsilon\independent \BS|\x,\rt$ to the setting.
If there exists another $p'(\x,\y,\rt,\BS,\bm\epsilon)$ generated by $\y'=F'_{\rt}(\BS')+\bm\epsilon'$ such that
$p'(\y|\x,\rt)=p(\y|\x,\rt)$ 
(denoted by $\y\overset{d}{=}\y'|_{\x,\rt}$) 
and $\bm\epsilon\overset{d}{=}\bm\epsilon'|_{\x,\rt}$, then $F'_{\rt}(\BS')\overset{d}{=}F_{\rt}(\BS)|_{\x,\rt}$. 
\item \label{inj_model} \emph{\textbf{(Injective outcome model).}} Assume further that $F_{t}$ and $F'_t$ are injective. $p'(\x,\y,\rt,\BS,\bm\epsilon)$ as in 2). 
    \begin{enumerate}
    \item $\BS'\overset{d}{=}D'_t(\BS)|_{\x,\rt}$ where $D'_t\coloneqq{F'}_{t}\inv\circ F_{t}$. 
    \item \emph{(Identity of CATE).} If $\BS$ and $\BS'$ are Bt-scores for $p(\y|\x,\rt)$ and $p'(\y|\x,\rt)$, respectively, and $G_t$'s are zeros, then 
    \begin{equation}
        \mu'_t(\vx)=\mu_t(\vx) \text{ for any } \vx \in \mathcal{X} \bigcap \mathcal{X}',
    \end{equation}
    where $\mathcal{X}$ denotes support of $\x$, $\mathcal{X}'$ similarly.
    \end{enumerate}
\end{enumerate}
\end{lemma}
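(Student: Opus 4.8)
The plan is to prove the three parts in order, each building on the previous one, using only elementary manipulations of conditional expectations and distributions together with injectivity.

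\textbf{Part 1.} This is a direct unpacking of Definition~\ref{def:bscore}. Since $\y=F_{\rt}(\BS)+\bm\epsilon$, we have $\E(\rvy(t)|\BS,\x)=F_t(\BS)+\E(\bm\epsilon|\BS,\x)$ and likewise $\E(\rvy(t)|\BS,\rt=t)=F_t(\BS)+\E(\bm\epsilon|\BS,\rt=t)$ (here I use consistency of counterfactuals from \textbf{(A)}, and the fact that $F_t(\BS)$ is constant given $\BS$). So the two conditional means agree — which is exactly the B-score mean-equality condition with $\F_t(\BS)\coloneqq F_t(\BS)+G(\BS)$ — if and only if $\E(\bm\epsilon|\BS,\x)=\E(\bm\epsilon|\BS,\rt=t)\eqqcolon G(\BS)$. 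First I would write out these two lines carefully, being explicit about which conditioning the $F_t(\BS)$ term survives.

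\textbf{Part 2.} Here I would argue at the level of characteristic functions or moment generating functions of $\y$ conditional on $(\x,\rt)$. Under the added independence $\bm\epsilon\independent\BS\mid\x,\rt$, the conditional law of $\y=F_{\rt}(\BS)+\bm\epsilon$ given $(\x,\rt)$ is the convolution of the law of $F_{\rt}(\BS)\mid\x,\rt$ with the law of $\bm\epsilon\mid\x,\rt$; similarly for the primed system. Taking (conditional) Fourier transforms turns convolution into a product: $\widehat{p}_{\y\xt}=\widehat{p}_{F_{\rt}(\BS)\xt}\cdot\widehat{p}_{\bm\epsilon\xt}$. Since $\y\deq\y'\xt$ and $\bm\epsilon\deq\bm\epsilon'\xt$, the two product identities share the left factor on the total and the right factor, so the $F_{\rt}(\BS)$-factors must be equal wherever $\widehat{p}_{\bm\epsilon\xt}$ is nonzero; one then needs to note this pins down the full distribution (e.g.\ the noise characteristic function is nonzero a.e., or one passes to the quotient in the relevant sense) to conclude $F_{\rt}(\BS)\deq F_{\rt}'(\BS')\xt$. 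I expect the mild technical caveat about zeros of the noise characteristic function to be the one point needing care; for the Gaussian noise used elsewhere in the paper it is vacuous.

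\textbf{Part 3.} For 3(a), injectivity of $F_t$ and $F'_t$ makes $D'_t\coloneqq {F'}_t\inv\circ F_t$ well defined on the relevant range, and $F_{\rt}(\BS)=F'_{\rt}(D'_{\rt}(\BS))$ pointwise; pushing the equality-in-distribution from Part~2 through the measurable bijection ${F'}_{\rt}\inv$ gives $\BS'\deq D'_{\rt}(\BS)\xt$. For 3(b), with $G_t\equiv0$ the B-score mean functions are $\F_t=F_t$ and $\F'_t=F'_t$, so by Theorem~\ref{cate_by_bts} (its equation \eqref{eq:cate_by_bts}) we have $\mu_t(\vx)=\int \F_t(B_t)\,p(B_t\mid\vx)\,dB_t=\E(F_t(\BS)\mid\x=\vx,\rt=t)$ — using here that the inner integral $\int p(y|B_t,t)y\,dy$ equals $\F_t(B_t)=F_t(B_t)$ since the noise is mean-zero given the score — and likewise $\mu'_t(\vx)=\E(F'_t(\BS')\mid\x=\vx,\rt=t)$. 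By 3(a), $\E(F'_t(\BS')\xt)=\E(F'_t(D'_t(\BS))\xt)=\E(F_t(\BS)\xt)$, so $\mu'_t(\vx)=\mu_t(\vx)$ for every $\vx\in\mathcal{X}\cap\mathcal{X}'$, where the intersection is exactly the set of $\vx$ on which both conditional expectations are defined.

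\textbf{Main obstacle.} The substantive step is Part~2: going from equality of the \emph{observational} conditional law of $\y$ plus equality of the noise law to equality of the law of the \emph{signal} $F_{\rt}(\BS)$. This is a deconvolution/identifiability argument, and the only delicate point is ensuring the cancellation is legitimate (nonvanishing characteristic function of the noise, or an equivalent hypothesis); everything downstream is then bookkeeping with injective maps and the already-established CATE formula \eqref{eq:cate_by_bts}.
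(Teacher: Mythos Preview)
Your proposal is correct and follows essentially the same route as the paper. The only notable difference is in Part~2: the paper argues informally that since the noise law is ``fixed'' given $(\x,\rt)$, matching observational laws forces the signal laws $F_{\rt}(\BS)$ and $F'_{\rt}(\BS')$ to match, whereas you make this deconvolution step explicit via conditional characteristic functions and correctly flag the nonvanishing caveat (which, as you note, is vacuous for the Gaussian noise actually used in Theorem~\ref{id_tion}). Parts~1, 3(a), and 3(b) are identical in substance to the paper's proof, including the use of \eqref{eq:cate_by_bts} and the change of variable $B'=D'_t(B)$ in 3(b).
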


The idea of injective outcome model lays the \textbf{second foundation} of our method. It is general, particularly about the noise. In fact, all the true data generating processes we discuss in this paper are special case of it.  


The following is our general identification result. In particular, conclusion 3) shows the CATEs given by our model are same as truth.
\begin{theorem}[Identification by model]
\label{id_tion}
Given the family $p_{{\vf},{\bm\lambda}}(\rvy,\rvz|\rvx,\rt)$ specified by \eqref{model_indep} and \eqref{model_param}, 
assume that the true data distribution $p^*(\rvx, \rvy, \rt,\rconf, \rve)$ satisfies 
\renewcommand{\labelenumi}{\roman{enumi})}
\begin{enumerate}
\def\theenumi{\roman{enumi})}
    \item \label{ass:gen} $\vf^*(\BS^*( \rvx,\rconf,\rve_b),\rt)+\rve_a=\rvy$ (see Example 1),
    \item \label{ass:gen_f} $\vf^*_t$ is injective,
    \item \label{ass:gen_n} $\rve_a$ is factorized Gaussian with zero-mean and $\bm\sigma_{a,\rt}$,
    
\hspace{-8mm} and our model satisfies 

    \item \label{ass:model_f} $\vf_t$ is injective,
    \item \label{ass:model_g} $\vg_t(\rvz)= \bm\sigma_{a,t}^2$,
    \item \label{ass:model_latent} $\rvz$ is not lower-dimensional than $\BS^*$.
\end{enumerate}

Then, if $p_{\bar{\vf},\bar{\bm\lambda}}(\rvy|\rvx,\rt)=p^*(\rvy|\rvx,\rt)$, we have
\renewcommand{\labelenumi}{\arabic{enumi})}
\begin{enumerate}
\def\theenumi{\arabic{enumi})}
\item $\z_{\bar{\bm\lambda}}|\rvx,\rt \sim p_{\bar{\bm\lambda}}$ and $\z_{\bar{\vf},\bar{\bm\lambda}}|\rvx,\rvy,\rt \sim p_{\bar{\vf},\bar{\bm\lambda}}$ are Bt-scores for $p_{\bar{\vf},\bar{\bm\lambda}}(\rvy|\rvx,\rt)=p^*(\rvy|\rvx,\rt)$;
\item $\z_{\bar{\bm\lambda}}\overset{d}{=}\vd_{\rt}(\BS\st)|_{\x,\rt}$, and $\z_{\bar{\vf},\bar{\bm\lambda}}\overset{d}{=}\vd_{\rt}(\BS\st)|_{\x,\y,\rt}$, where $\vd_t\coloneqq\bar{\vf}_t^{-1}\circ\vf^*_t$.
\item Let $\bar{\y}(t)\coloneqq\vf_t(\z)+\bm\epsilon_t,\bm\epsilon_t \sim \mathcal{N}(\bm 0, \diag(\bm\sigma_{a,t}))$ be \emph{counterfactual} outcomes of the decoder. Then 
\begin{equation*}
\begin{split}
    \bar{\mu}_t(\vx)&=\E(\E(\bar{\y}(t)|\z_{\bar{\bm\lambda}},\vx))=\E(\bar{\vf}_t(\z_{\bar{\bm\lambda}})|\vx,\rt=t), \\
    \bar{\mu}_t(\vx)&=\E(\bar{\vf}_t(\z_{\bar{\vf},\bar{\bm\lambda}})|\vx,\rt=t), \\
    \mu_t(\vx)&=\bar{\mu}_t(\vx).
\end{split}
\end{equation*}

\end{enumerate}
\end{theorem}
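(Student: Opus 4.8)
The plan is to recognise both the true generating process and the decoder of the learned model as instances of the \emph{injective outcome model} of the preceding Lemma (Noise model), and then to read off conclusions 1)--3) from its parts together with \eqref{eq:cate_by_bts}. On the truth side, assumption~\ref{ass:gen} gives $\y=\vf^*_{\rt}(\BS^*)+\rve_a$ with $\vf^*_t$ injective (assumption~\ref{ass:gen_f}), $\rve_a$ the zero-mean factorized Gaussian of assumption~\ref{ass:gen_n}, and $\rve_a$ independent of $\BS^*$ given $\x,\rt$ by the construction in Example~1; hence, via part~1) of the Lemma, $\BS^*$ is a B-score for $p^*(\y|\x,\rt)$ with vanishing correction term, i.e.\ $\F^*_t=\vf^*_t$. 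On the model side, \eqref{model_indep}, \eqref{model_param} and assumption~\ref{ass:model_g} give a decoder $\bar\y(t)=\bar\vf_t(\z)+\bm\epsilon_t$, with $\bm\epsilon_t\sim\mathcal{N}(\bm{0},\diag(\bm\sigma_{a,t}))$ drawn independently of $(\z,\x,\rt)$ and $\bar\vf_t$ injective (assumption~\ref{ass:model_f}); by assumptions~\ref{ass:gen_n},\ref{ass:model_g} the decoder noise matches the truth, $\bm\epsilon_t\deq\rve_a|_{\rt=t}$. The hypothesis $p_{\bar\vf,\bar{\bm\lambda}}(\y|\x,\rt)=p^*(\y|\x,\rt)$ is precisely $\y\deq\bar\y|_{\x,\rt}$, and assumption~\ref{ass:model_latent} rules out a dimension mismatch between $\z$ and $\BS^*$, so that $\vd_t\coloneqq\bar\vf_t^{-1}\circ\vf^*_t$ is well defined on $\mathrm{supp}(\BS^*)$. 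Thus the hypotheses of parts 2) and 3) of the Lemma, in the treatment-conditional form of the appendix, are in force.

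For conclusion 1): since $\bm\epsilon_t$ is additive noise independent of everything, $\E(\bar\y(t)|\z,\x)=\bar\vf_t(\z)=\E(\bar\y(t)|\z,\rt=t)$ holds pointwise with $\z=\z_{\bar{\bm\lambda}}$ and equally with $\z=\z_{\bar{\vf},\bar{\bm\lambda}}$, so by Definition~\ref{def:bscore} each is a Bt-score for $p_{\bar\vf,\bar{\bm\lambda}}(\y|\x,\rt)$, with $\bar\F_t=\bar\vf_t$. For conclusion 2), the prior part $\z_{\bar{\bm\lambda}}\deq\vd_{\rt}(\BS^*)|_{\x,\rt}$ is conclusion 3(a) of the Lemma applied to these two models (deconvolve the matching Gaussian noises using $\y\deq\bar\y|_{\x,\rt}$ and $\bm\epsilon_t\deq\rve_a|_{\rt=t}$, then invert the injective maps). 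For the posterior part, observe that in the true joint $\y=\bar\vf_{\rt}(\vd_{\rt}(\BS^*))+\rve_a$ while in the model $\y=\bar\vf_{\rt}(\z_{\bar{\bm\lambda}})+\bm\epsilon$, with the two ``signal'' laws now equal given $\x,\rt$, the noise laws equal, and noise independent of signal given $\x,\rt$ on both sides; hence $(\z_{\bar{\bm\lambda}},\y)\deq(\vd_{\rt}(\BS^*),\y)|_{\x,\rt}$, and conditioning additionally on $\y$ gives $\z_{\bar{\vf},\bar{\bm\lambda}}\deq\vd_{\rt}(\BS^*)|_{\x,\y,\rt}$, since the model posterior is exactly the $\y$-conditional of the model joint.

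For conclusion 3): applying Theorem~\ref{cate_by_bts} to the model with the Bt-score $\z_{\bar{\bm\lambda}}$ (legitimate by 1), once the positivity hypothesis $p(\rt=t|\z_{\bar{\bm\lambda}})>0$ is inherited from \textbf{(A)}) gives $\bar\mu_t(\vx)=\E(\E(\bar\y(t)|\z_{\bar{\bm\lambda}},\vx))=\E(\bar\vf_t(\z_{\bar{\bm\lambda}})|\vx,\rt=t)$, because $\bar\F_t=\bar\vf_t$. Substituting the prior part of 2) and $\bar\vf_t\circ\vd_t=\vf^*_t$ turns this into $\bar\mu_t(\vx)=\E(\vf^*_t(\BS^*)|\vx,\rt=t)$; the same computation run with $\z_{\bar{\vf},\bar{\bm\lambda}}$ (now also averaging over $\y$, using the posterior part of 2)) yields the second displayed identity. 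Finally, since $\BS^*$ is a B-score for $p^*$ with zero correction and $\F^*_t=\vf^*_t$, Theorem~\ref{cate_by_bts} applied to the truth gives $\mu_t(\vx)=\E(\vf^*_t(\BS^*)|\vx,\rt=t)$, whence $\mu_t(\vx)=\bar\mu_t(\vx)$ for all $\vx$; equivalently one may quote conclusion 3(b) of the Lemma with both correction terms zero.

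The crux is not any single computation but the treatment-conditional bookkeeping that the clean Lemma statement hides: that the score depends on $\rt$ (so $p(B_t|\vx)$ in \eqref{eq:cate_by_bts} is the $\rt{=}t$ conditional, and the model's counterfactual $\bar\y(t)$ must feed $t$ into \emph{both} conditional prior and decoder); that the noise-deconvolution step $\bar\vf_t(\z)\deq\vf^*_t(\BS^*)$ needs the Gaussian characteristic function to be nowhere vanishing and the noise genuinely independent of the signal given $\x,\rt$; that the correction terms $G_t$ really vanish under assumptions~\ref{ass:gen_n},\ref{ass:model_g}; and that positivity $p(\rt=t|\z)>0$ is transported so that Theorem~\ref{cate_by_bts} applies. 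These are exactly the ``variations'' of the Lemma deferred to the appendix, and checking them is where the real work sits; upgrading 2) from a prior-level equality to one conditional on $\y$ as well is the other delicate point.
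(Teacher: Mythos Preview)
Your proposal is correct and follows essentially the same route as the paper: verify that both the true process and the decoder are instances of the injective outcome model of Lemma~1, check the noise-matching and co-injectivity hypotheses, read off 1) and 2) from parts 1) and 3(a) of the Lemma, and obtain 3) from \eqref{eq:cate_by_bts} (equivalently Lemma~1 3(b)). Your handling of the posterior case via the joint law $(\z_{\bar{\bm\lambda}},\y)\deq(\vd_{\rt}(\BS^*),\y)|_{\x,\rt}$ is a slightly cleaner phrasing of the paper's argument that the noise distribution does not depend on $\y$, and your explicit mention of the nowhere-vanishing Gaussian characteristic function for the deconvolution step makes precise what the paper's proof of Lemma~1 part 2) leaves implicit.
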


\textit{Intuitions.} 
We confirm that both the true generating process and our decoder satisfy \ref{inj_model} in Lemma 1. So, conclusion 2) and 3) are applications of (a) and (b) in Lemma 1. More intuitions follows.
$\BS^*$ is a P-score of $(\rvx,\rconf,\rve_b)$ (compare Example 1), though in the proof we only need that it is a Bt-score. Also,
the $\z$'s
are Bt-scores for our generating model (the decoder). And the co-injectivity ensures the distributions of $\z$'s are the same as $\BS^*$ up to injective $\vd_t$. Due to the similar forms
of true generating process and our decoder, fitting model to the truth, the $\z$'s are also able to identify CATE for the true distribution. Finally, \ref{ass:model_latent} ensures $\z$ can contain all the useful information of $\BS\st$. In practice, we can just use higher dimensional $\z$ that is computationally tractable, as supported by our experiments. 

Note that, \ref{ass:gen_n} and \ref{ass:model_g} about the noise are \textit{not} critical for identification. We need them mainly to satisfy the ``noise matching'' in 2) of Lemma 1. Particularly, the Gaussians in the assumptions are readily relaxed, because, the outcome distribution in our decoder can be non-Gaussian, and the inference is still tractable. 

In fact, $\bm\epsilon\independent \BS|\x,\rt$ in 2) of Lemma 1 suggests us to learn noise model $\vg_t(\z)$, since there is no conditioning on $\x$ in our decoder. Desirably, $\vg$ obtains the relevant information of $\x$ through $\z$ by learning. And our experiments support this. We conjecture that an extended identifiability will show $\vg$ is learnable (though, following iVAE, $\vg$ is fixed in our current Theorem \ref{idmodel}).

According to Theorem \ref{id_tion}, our conditional prior also recovers a Bt-score, but the approximate posterior from encoder removes more uncertainty on $\z$, and is better under finite sample. As in previous, there may exist settings violating the noise assumptions of Theorem \ref{id_tion}, but we still give good estimation depending on $\x$ and $\y$.

\subsection{Estimation}
\label{sec:estimation}
We next discuss learning parameters that give the true observational distribution, and, more importantly, calculation of the latent representation $\z$ that is the Bt-score. Both are archived by the consistency of VAE, which can be based on some general assumptions

Consistent estimation of CATE follows directly from the identification. The following is a corollary of Theorem \ref{id_tion}.
\begin{corollary}[Estimation by VAE]
\label{th:estimation}

In Theorem \ref{id_tion}, 
further assume the consistency of Intact-VAE \eqref{model_indep}--\eqref{eq:enc}, which implies $p_{\vtheta'}(\rvz|\rvx,\rvy,\rt)=q_{\bm\phi'}(\rvz|\rvx,\rvy,\rt)$ where $\vtheta',\bm\phi'$ are the optimal parameters learned by the VAE.
Then, in the limit of infinite data $\mathcal{D}\coloneqq\{(\x,\y,\rt)\}\sim p\st(\x,\y,\rt)$, $\z_{\bm\phi',\rt} \sim  q_{\bm\phi'}(\rvz|\rvx,\rvy,\rt)$
is a Bt-score, and
\begin{equation}
\label{estimator}
    \mu_{t}(\vx)=\E_{\mathcal{D}|\vx,t}(\vf'_t(\vz_{\bm\phi',t})) 
\end{equation}


\end{corollary}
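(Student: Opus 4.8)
\emph{Proof plan.} The statement is, in essence, Theorem~\ref{id_tion} re-read once the hypothesis ``consistency of Intact-VAE'' is unpacked, so the plan has three steps. First I would translate consistency into two concrete consequences; second, apply Theorem~\ref{id_tion} with the learned optimum $(\vf',\bm\lambda')$ in the role of $(\bar{\vf},\bar{\bm\lambda})$; third, rewrite its conclusions~1) and~3) into the two assertions of the corollary.

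For the first step, note that as $|\mathcal{D}|\to\infty$ the empirical law of $(\x,\y,\rt)$ converges to $p^*$, so maximizing the sample average of the ELBO \eqref{elbo} converges to maximizing $\E_{p^*}\!\big[\log p_{\vtheta}(\y|\x,\rt)-\KL\big(q_{\bm\phi}(\z|\x,\y,\rt)\,\Vert\,p_{\vtheta}(\z|\x,\y,\rt)\big)\big]$. Its supremum equals $\E_{p^*}[\log p^*(\y|\x,\rt)]$ and is attained, because $p^*$ has the form in \ref{ass:gen}--\ref{ass:gen_n} and the model family, via the parameterization \eqref{model_param} and the expressivity underlying Theorem~\ref{idmodel}, can reproduce it. Attaining the supremum forces the two summands to their individual maxima at once: the KL term vanishes, i.e.\ $q_{\bm\phi'}(\z|\x,\y,\rt)=p_{\vtheta'}(\z|\x,\y,\rt)$ (the identity named in the hypothesis), and the expected log-likelihood is maximal, i.e.\ $p_{\vtheta'}(\y|\x,\rt)=p^*(\y|\x,\rt)$.

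For the second and third steps, put $(\bar{\vf},\bar{\bm\lambda})\coloneqq(\vf',\bm\lambda')$. Then $p_{\bar{\vf},\bar{\bm\lambda}}(\y|\x,\rt)=p^*(\y|\x,\rt)$ is exactly the premise of Theorem~\ref{id_tion}, whose structural hypotheses \ref{ass:gen}--\ref{ass:model_latent} all hold here. Conclusion~1) then gives that $\z_{\bar{\vf},\bar{\bm\lambda}}|\x,\y,\rt\sim p_{\bar{\vf},\bar{\bm\lambda}}(\z|\x,\y,\rt)$ is a Bt-score (Definition~\ref{def:bscore}); since this law coincides with $q_{\bm\phi'}(\z|\x,\y,\rt)$ by the first step, the encoder sample $\z_{\bm\phi',\rt}$ is a Bt-score. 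Conclusion~3), second line, gives $\mu_t(\vx)=\bar{\mu}_t(\vx)=\E\big(\bar{\vf}_t(\z_{\bar{\vf},\bar{\bm\lambda}})\,\big|\,\vx,\rt=t\big)$ on the common support. I would finish by noting that this conditional expectation means precisely: draw $\y\sim p^*(\y|\vx,t)$, then $\z\sim q_{\bm\phi'}(\z|\vx,\y,t)$, and average $\bar{\vf}_t(\z)=\vf'_t(\z)$ --- i.e.\ $\E_{\mathcal{D}|\vx,t}(\vf'_t(\vz_{\bm\phi',t}))$ --- which is \eqref{estimator}.

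I expect the first step to be the crux: arguing that the single ELBO optimization simultaneously yields the posterior identity \emph{and} the observational-law identity hinges on the maximizer being attained inside the model class (the expressivity facts behind Theorem~\ref{idmodel}) and on reading ``consistency'' as convergence of the optimal \emph{parameters}, not merely of the objective value. What remains is bookkeeping --- identifying the abstract conditional expectation in conclusion~3) of Theorem~\ref{id_tion} with the sampling estimator $\E_{\mathcal{D}|\vx,t}(\cdot)$, and recalling that $\mu_t(\vx)$ is asserted only where the positivity/overlap needed by that theorem holds.
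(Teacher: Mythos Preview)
Your proposal is correct and follows essentially the same route as the paper: unpack VAE consistency into the two identities $p_{\vtheta'}(\y|\x,\rt)=p^*(\y|\x,\rt)$ and $q_{\bm\phi'}=p_{\vtheta'}$ on the posterior, then invoke Theorem~\ref{id_tion} at $(\bar{\vf},\bar{\bm\lambda})=(\vf',\bm\lambda')$ and read off conclusions~1) and~3). The paper factors your first step out as a separate consistency proposition (your ELBO-supremum argument is its proof) and then gives a one-line derivation of the corollary; one small attribution slip is that the expressivity you need in step~1 is the assumption that some $(\bar{\vtheta},\bar{\bm\phi})$ realizes $p^*$ and the exact posterior, not anything coming from Theorem~\ref{idmodel}, which concerns identifiability rather than expressivity.
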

This estimator is highly nontrivial because it works under \textit{unobserved} confounder $\rconf$. In essence, we recovered a Bt-score containing sufficient information of $\rconf$ for identification, in $\z$. And $\z$ is calculated by our encoder. 

On the other hand, as mentioned in Introduction, a whole line of work aims to design better, balanced, estimator with observed confounding. Recall that the main problem of naive regression (e.g., \eqref{eq:id} would be naive if $\rconf$ was observed) is imbalance. If $p\st(\x|t)$'s are very different for some $\vx$, then we have few data points for one of $\mathcal{D}|\vx,t\coloneqq\{(\vx,\y,t)\}$, resulting in poor estimation. The estimator \eqref{estimator} also addresses imbalance to some extent, by learning a representation that is \textit{lower} dimensional than $\x$ (see also \citep{d2020overlap}), as Bt-scores often are. 

We take this idea further. With our next result, the sample size of \textit{each} treatment group is always the same as that of \textit{whole} dataset, regardless of covariate distribution. Based on our model identifiability (Theorem \ref{idmodel} in Appendix), we give a balanced estimator. In the proof, we require $\BS^*$ is a only B-score.
\begin{theorem}[Balanced Estimation]
\label{th:bestimation}

\begin{inparaenum}[i]
In addtion to the assumptions of Corollary \ref{th:estimation}, further assume 
\renewcommand{\labelenumi}{\roman{enumi})}
\def\theenumi{\roman{enumi})}
\item $\vf_t$ is differentiable, and
\item \label{ass:bcovar} $\blambda_0(\x)=\blambda_1(\x)$.
Then, 
$\z_{\bm\phi',0},\z_{\bm\phi',1}$
are B-scores that have the same distribution, and
\begin{equation}
\label{bestimator}
\begin{split}
    \mu_{\hat{t}}(\vx)&=\E_{\mathcal{D}|\vx}(\vf'_{\hat{t}}(\vz_{\bm\phi',t})), \hat{t} \in \{0,1\}.   
\end{split}
\end{equation}
\end{inparaenum}
\end{theorem}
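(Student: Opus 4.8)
The plan is to derive this as a refinement of Corollary 1, using the extra assumptions to upgrade the Bt-scores $\z_{\bm\phi',t}$ into genuine B-scores with a common distribution, and then to rewrite the CATE estimator so that both treatment arms integrate over the \emph{same} conditional distribution of $\z$ given $\vx$. First I would recall from Corollary 1 (hence Theorem \ref{id_tion}) that $\z_{\bm\phi',t}\sim q_{\bm\phi'}(\z|\vx,\y,\rt)$ is a Bt-score for $p_{\vtheta'}(\y|\vx,\rt)=p^*(\y|\vx,\rt)$, with $\z_{\bar{\bm\lambda}}\deq \vd_{\rt}(\BS^*)|_{\x,\rt}$ and $\z_{\bm\phi',\rt}\deq\vd_{\rt}(\BS^*)|_{\x,\y,\rt}$ for $\vd_t=\bar{\vf}_t\inv\circ\vf_t^*$. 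The key new input is that here $\BS^*$ is assumed to be a \emph{B-score} (not merely Bt), i.e. $\BS^*_0=\BS^*_1$; combined with assumption \ref{ass:bcovar}, $\blambda_0(\x)=\blambda_1(\x)$, the conditional prior $p_{\bar{\bm\lambda}}(\z|\vx,\rt)=p_{\bar{\bm\lambda}}(\z|\vx)$ no longer depends on $\rt$. The differentiability of $\vf_t$ (assumption i) is what I expect to need to make the change-of-variables/injectivity argument from model identifiability (Theorem \ref{idmodel}) go through cleanly, so that $\vd_t$ is a well-behaved injection and the pushforward statements are literal equalities of distributions rather than just equalities of the relevant conditional means.

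The main steps, in order: (1) Show $\z_{\bm\phi',0}$ and $\z_{\bm\phi',1}$ are each B-scores. For this I would argue that since $\BS^*$ is a B-score and $\vf^*_t$ is injective, the relation $\E(\bar\y(t)|\z,\vx)=\E(\bar\y(t)|\z,\rt=t)$ holds for \emph{both} $t$ simultaneously with the same score variable, invoking conclusion 1) of Theorem \ref{id_tion} together with the ``same score'' structure; the injective-outcome-model machinery of Lemma 1, part \ref{inj_model}, gives that the B0- and B1-requirements are both met. (2) Show the two scores have the same distribution. Using $\z_{\bm\phi',\rt}\deq\vd_{\rt}(\BS^*)|_{\x,\y,\rt}$ and the fact that model identifiability pins down $\bar{\vf},\bar{\bm\lambda}$ up to an affine transformation shared across $t$ (Theorem \ref{idmodel}), I would argue $\vd_0$ and $\vd_1$ coincide (or differ by the same affine map absorbed into both), so that the marginal of $\z_{\bm\phi',t}$ over $\mathcal{D}|\vx$ is $t$-independent; assumption \ref{ass:bcovar} is exactly what forces the prior-side marginal $\z_{\bar{\bm\lambda}}|\vx$ to be $t$-free, and consistency of the VAE ($p_{\vtheta'}(\z|\x,\y,\rt)=q_{\bm\phi'}(\z|\x,\y,\rt)$) transfers this to the encoder side. (3) Derive the estimator \eqref{bestimator}: starting from \eqref{estimator}, $\mu_t(\vx)=\E_{\mathcal{D}|\vx,t}(\vf'_t(\z_{\bm\phi',t}))$; since the law of $\z_{\bm\phi',t}$ given $\vx$ no longer depends on which arm the data point came from, I can replace the conditional expectation $\E_{\mathcal{D}|\vx,t}$ by $\E_{\mathcal{D}|\vx}$ (the full conditional-on-$\vx$ sample, both arms pooled), and also decouple the subscript on $\vf'$ from the subscript on $\z$, yielding $\mu_{\hat t}(\vx)=\E_{\mathcal{D}|\vx}(\vf'_{\hat t}(\z_{\bm\phi',t}))$ for either $\hat t\in\{0,1\}$. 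This is the ``balanced'' phenomenon: every $\vx$-slice now uses the entire dataset rather than a single arm.

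The hard part will be step (2): rigorously justifying that $\z_{\bm\phi',0}$ and $\z_{\bm\phi',1}$ have \emph{literally} the same distribution given $\vx$, not just the same first moment. The distribution-level claim needs the injectivity/affine-identifiability of $(\bar{\vf},\bar{\bm\lambda})$ from Theorem \ref{idmodel} to be invoked uniformly over both treatment arms — i.e. that the same affine ambiguity is in force for $t=0$ and $t=1$ — and it needs $\BS^*$ being a B-score (so there is one underlying score, with $\BS_0^*=\BS_1^*$) to line up with assumption \ref{ass:bcovar} on the model side. I would be careful to state precisely which support conditions (overlap of $\mathcal{X}$ across arms, positivity $p(\rt|\BS)>0$ inherited from Theorem \ref{cate_by_bts}) are needed for the pooled expectation $\E_{\mathcal{D}|\vx}$ to be well-defined and to agree with each arm's conditional expectation. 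Everything else — the B-score verification in step (1) and the algebraic rewriting in step (3) — should follow routinely from Lemma 1, Theorem \ref{id_tion}, and Corollary \ref{th:estimation} once the common-distribution fact is in hand.
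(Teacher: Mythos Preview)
Your proposal is correct and follows essentially the same approach as the paper: the crux is exactly your step (2), and the paper proves it by observing (from inside the proof of Theorem~\ref{idmodel}) that the affine ambiguity $\mathcal{A}_t$ depends on $t$ only through $\blambda_t$, so $\blambda_0=\blambda_1$ forces $\mathcal{A}_0=\mathcal{A}_1$ and hence $\vd_0=\vd_1=\mathcal{A}^{-1}$, giving $\z_{\btheta',0}\deq\z_{\btheta',1}\deq\mathcal{A}^{-1}(\BS^*)|_{\x,\y}$. Your one soft spot is the ordering of steps (1) and (2)---the B-score upgrade in (1) does not go through until you know $\vd_0=\vd_1$---but you already acknowledge this at the end, so the logic is sound.
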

Assumption \ref{ass:bcovar} is the key, and ensures learning B-scores, not only Bt-scores. It introduces stronger model identifiability than iVAE. i) is a technical assumption inherited from iVAE\footnote{We also omit technical assumption iii) of Theorem \ref{idmodel}, which is not very relevant, and our general balancing assumption implies it.}. 

\ref{ass:bcovar} adds balancing into our estimator. The prior $p_{\bm\lambda}(\rvz|\rvx,\rt)=p_{\bm\lambda}(\rvz|\rvx)$ is unconditional, independent of $\rt$ given $\rvx$. Just like balancing score gives balanced estimator, with B-score, we have correspondence between \eqref{bestimator} and \eqref{eq:cate_by_bs} where $p(\BS|\x)$ is the score distribution, in addition to that between \eqref{estimator} and \eqref{eq:cate_by_bts}.
The same prior for the treatment groups, i.e., the \textit{balanced} prior of latent representation, is similar to balanced representation learning \citep{johansson2016learning, shalit2017estimating}, where balanced representation is favored by ad hoc regularization. This is also related to the fact that, when building CVAE, unconditional prior can achieve better performance \citep{kingma2014semi}. Actually, \ref{ass:bcovar} is an extreme case of a very general but technically involved assumption we present in Appendix, which is a natural form of balancing.

Our overall \textbf{algorithm} steps should be clear. \textit{After} training Intact-VAE, we feed data
into the encoder $q(\rvz'|\rvx=\vx,\rvy=y,\rt=t)$, and draw posterior sample from it. 
Then, we follow \eqref{bestimator} closely. Setting $\rt=\hat{t}$ in the decoder, feed the posterior sample into it, 
we get counterfactual sample $\vy'(\hat{t})$ as output of the decoder.
Finally, we estimate ATE by taking average $\E_{\mathcal{D}}(\vy'(1)-\vy'(0))$, and CATE by $\E_{\mathcal{D}|\vx}(\vy'(1)-\vy'(0))$, adding conditioning on $\vx$.

As mentioned, by taking $\y$, posterior model (the encoder) is better than conditional prior.
On the other hand, sampling posterior requires \textit{post-treatment} observation $y$. Often, it is desirable that we can also have \textit{pre-treatment} prediction for a new subject, with only the observation of its covariate $\rvx=\vx$. To this end, we use conditional prior $p(\rvz'|\rvx)$ as a pre-treatment predictor for $\rvz'$: input $\vx$ and draw sample from $p(\rvz'|\rvx=\vx)$ instead of $q$, and all the others remain the same. We will also have sensible pre-treatment estimation of treatment effects, as ensured by Theorem \ref{id_tion}.

\section{Experiments}

We use the proposed Intact-VAE model for three types of data, and compare the results with existing methods.

Unless otherwise indicated, for VAE models we use a multilayer perceptron (MLP) that has 3*200 hidden units with ReLU activation, for each function $\vf,\vg,\vh,\vk,\vr,\vs$ in \eqref{model_param}\eqref{eq:enc}, and $\bm\lambda=(\vh,\vk)$ depends only on $\rvx$. 
Note that, while Theorem~\ref{id_tion} assumes the outcome noise $\vg$ is fixed and known, we train $\vg$ also.  
The Adam optimizer with initial learning rate $10^{-4}$ and batch size 100 is employed. More details on hyper-parameters and experimental settings are given in each experiment and Appendix.

All experiments use early-stopping of training by evaluating the ELBO on a validation set. We test post-treatment results on training and validation set jointly. This is non-trivial. Recall the fundamental problem of causal inference in Introduction and Sec.~\ref{sec:setup}. The treatment and (factual) outcome should not be observed for pre-treatment predictions, so we report them on a testing set (see the end of Sec.~\ref{sec:treatment}).

As in previous works \citep{shalit2017estimating, louizos2017causal}, we report the absolute error of ATE $\epsilon_a\coloneqq |\E_{\mathcal{D}}(y(1)-y(0)) - \E_{\mathcal{D}}(y'(1)-y'(0))|$, and the square root of empirical PEHE \citep{hill2011bayesian} $\epsilon_i\coloneqq \E_{\mathcal{D}}((y(1)-y(0))-(y'(1)-y'(0)))^2$ for individual-level treatment effects.

\subsection{Synthetic Dataset}

We generate synthetic datasets following \eqref{art_model}. For variations (see Appendix), we introduce three different causal settings: unobserved confounder $\rz$, IV $\x$, and unconfounded (conf., inst., and ig., respectively, in Figure \ref{nonl_art}). $\mu_i$ and $\sigma_i$ are randomly generated. The functions $h,k,l$ are linear with random coefficients. And $f_0,f_1$ is built by separated NNs.
We generate linear and nonlinear (invertible) outcome models, and set the outcome and proxy noise level by $\alpha,\beta$ respectively. See Appendix for more details.

\begin{equation}
\label{art_model}
\begin{split}
\textstyle
    \rvx \sim \prod_{i=1}^3 \mathcal{N}(\mu_i, \sigma_i); \quad
    \rz|\rvx \sim \mathcal{N}(h(\rvx), \beta k(\rvx)); \\
    \rt|\rvx,\rz \sim \bern(\logi(l(\rvx,\rz))); \\
    \rvy|\rz,\rt \sim \mathcal{N}(C_{\rt}^{-1}f(\rz,\rt), \alpha). 
\end{split}
\end{equation}

\begingroup

\setlength{\columnsep}{8pt}
\setlength{\intextsep}{5pt}
\begin{wrapfigure}{r}{0.25\textwidth}
\vspace{-.2in}
  \begin{center}
    \includegraphics[width=0.25\textwidth]{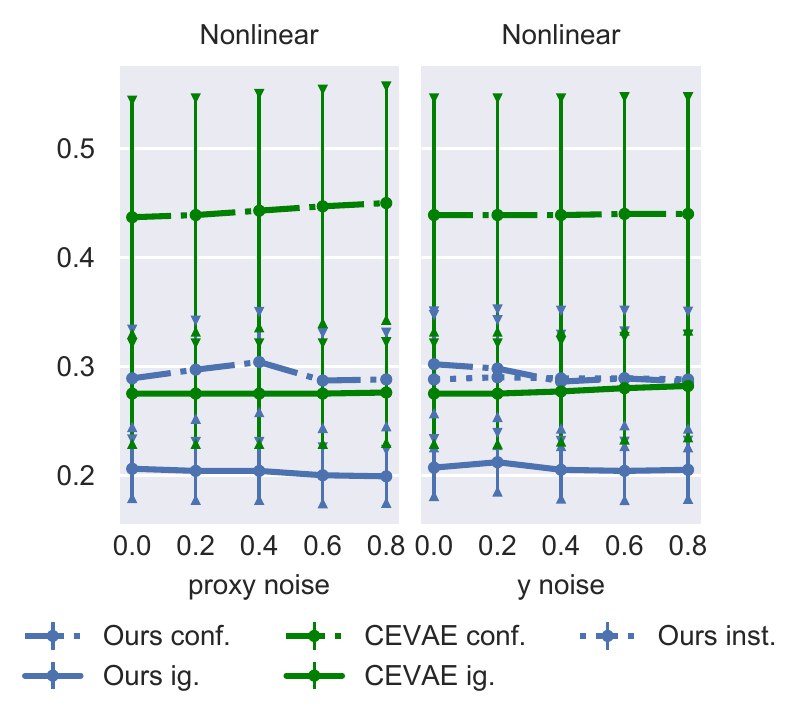}
  \end{center}
  \vspace{-.3in}
  \caption{Pre-treatment \footnotesize{$\sqrt{\epsilon_i}$ on nonlinear synthetic dataset. Error bar on 100 random models. We adjust one of the noise levels $\alpha,\beta$ in each panel, with another fixed to 0.2. See Appendix for results on linear outcome. Results for ATE and post-treatment are similar.}}
\vspace{-.05in}
\label{nonl_art}
\end{wrapfigure}


In each causal setting, and with the same kind of outcome models, and the same noise levels ($\alpha,\beta$), we evaluate Intact-VAE and CEVAE on 100 random data generating models, with different sets of functions $f,h,k,l$ in \eqref{art_model}. For each model, we sample 1500 data points, and split them into 3 equal sets for training, validation, and testing. Both the methods use 1-dimensional latent variable in VAE. For fair comparison, all the hyper-parameters, including type and size of NNs, learning rate, and batch size, are the same for both the methods. 

Figure \ref{nonl_art} shows our method significantly outperforms CEVAE on all cases; CEVAE does not use conditional prior and has no theoretical guarantee in the current setting. Both methods work the best under unconfoundedness (``ig.''), as expected. The performances of our method on IV (``inst.'') and proxy (``conf.'') settings match that of CEVAE under ignorability, showing the effective deconfounding.

\setlength{\columnsep}{8pt}
\setlength{\intextsep}{5pt}
\begin{wrapfigure}{r}{0.25\textwidth}
\vspace{-.2in}
  \begin{center}
    \includegraphics[width=0.25\textwidth]{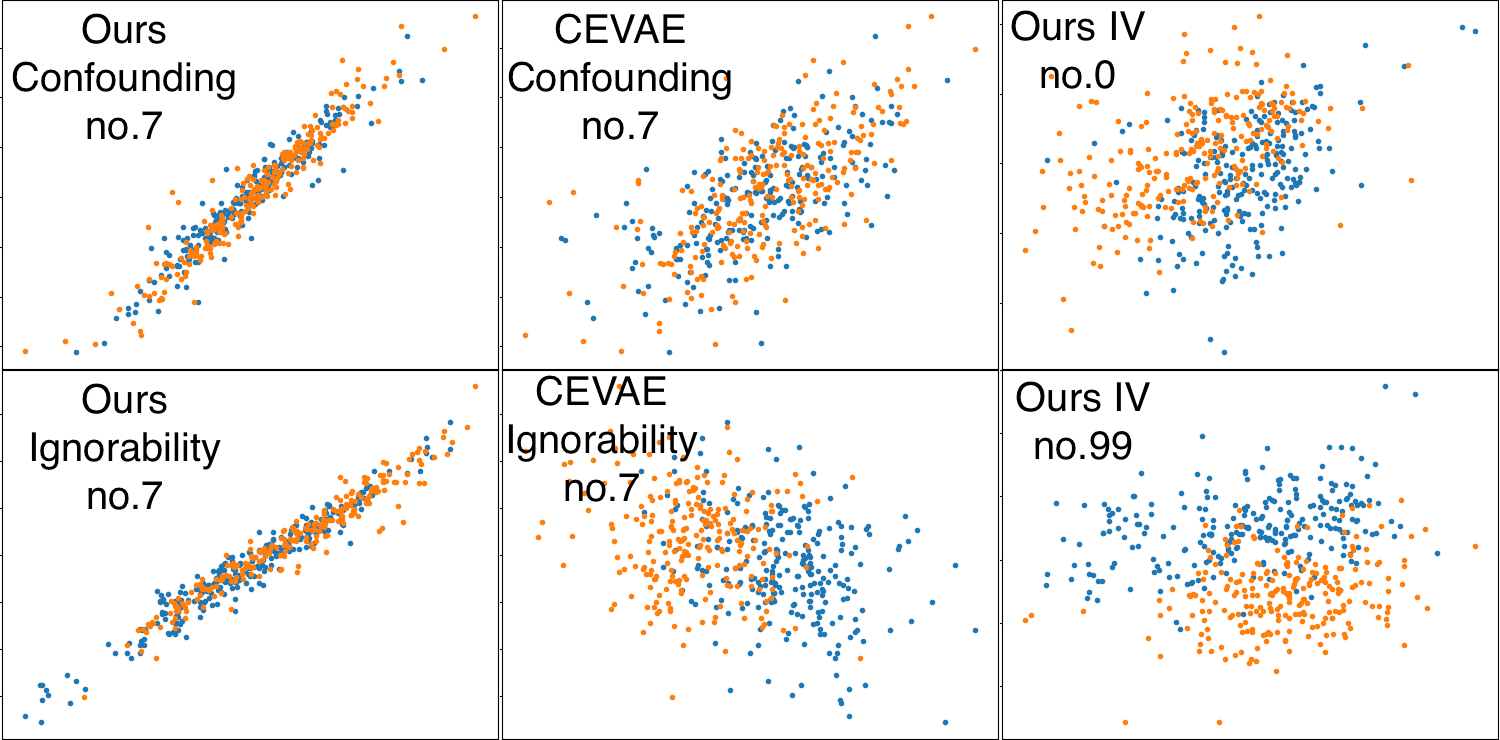}
  \end{center}
   \vspace{-.2in}
 
  \caption{\footnotesize{Plots of recovered 
  - true 
  latent on the nonlinear outcome. Blue: $t=0$, Orange: $t=1$. $\alpha,\beta= 0.4$. ``no.'' indicates index among the 100 random models.}} 
\vspace{-.1in}
\label{recover}
\end{wrapfigure}

Here, the true latent $\rz$ is a B-score. 
And there are no better candidate B-scores than $\rz$, because $f_t$ is invertible and no information can be dropped from $\rz$. 
Thus, as shown in Figure \ref{recover}, our method learns representation as an approximate affine transformation of the true latent value, as a result of our model identifiability. As expected, both recovery and estimation are better with \textit{unconditional} prior $p(\rz'|\rvx)$, and we can see an example of bad recovery using conditional $p(\rz'|\rvx,\rt)$ in Appendix. CEVAE shows much lower quality of recovery, particularly with large noises. Under IV setting, while treatment effects are estimated as well as for confounding, the relationship to the true latent is significantly obscured, because the true latent is correlated to IV $\x$ only given $\rt$, while we model it by $p(\rz'|\rvx)$. This experimentally confirms that our method does not need to recover the true score distribution.

We can see our method is robust to the unknown noise level.
This indicates that noises are learned by our VAE. 
We can see in Appendix that the noise level affects how well we recover the latent variable. 

\endgroup

\subsection{IHDP Benchmark Dataset}
This experiment shows our balanced estimator matches the state-of-the-art methods specialized for unconfoundedness. The IHDP dataset \citep{hill2011bayesian} is widely used to evaluate machine learning based causal inference methods, e.g. \cite{shalit2017estimating, shi2019adapting}. Here, \textit{unconfoundedness} holds given the covariate, and thus the covariate is just a B-score. However, conditioning on covariate is sufficient but not necessary, and the necessary B-score is a linear combination of the covariates. See Appendix for details.

As shown in Table \ref{t:ihdp}, Intact-VAE outperforms or matches the state-of-the-art methods. To see our balancing property clearly, we add two components specialized for balancing from \citet{shalit2017estimating} into our method (whose results is  shown in the caption of Table \ref{t:ihdp}), and compare to our unmodified estimator. First, we build the two outcome functions $\vf_t(\rvz),t=0,1$ in our learning model \eqref{model_param}, using two separate NNs. Second, we add to our ELBO \eqref{elbo} a regularization term, which is the Wasserstein distance \citep{cuturi2013sinkhorn} between the learned $p(\rvz'|\rvx,\rt=0)$ and $p(\rvz'|\rvx,\rt=1)$.

In particular, our method has the \textit{best} ATE estimation \textit{without} the additional components; and it has the \textit{best} individual-level estimation, adding the two components from \cite{shalit2017estimating}. 
We can see in the caption of Table \ref{t:ihdp}, the specialized additions do \textit{not} really improve our method, only causing a tradeoff between CATE and ATE estimation, and this may due to the tradeoff between fitting and balancing.  
And notably, our method outperforms other generative models (CEVAE and GANITE) by large margins.

We find higher than 1-dimensional latent variable in Intact-VAE gives better results, because we have \textit{discrete} true B-score due to the existence of discrete covariates.
We report results with 10-dimensional latent variable. The robustness of VAE under model misspecification was also observed by \citet{louizos2017causal}, where they used 5-dimensional Gaussian latent variable to model a binary ground truth.

\begin{table}[h]

\centering
\scriptsize
\caption{Errors on IHDP. The mean and std are calculated over 1000 random data generating models. *Results with modifications are $\epsilon_a=.31_{\pm .01}/.30_{\pm .01}$ and $\sqrt{\epsilon_i}=\textbf{.77}_{\pm .02}/\textbf{.69}_{\pm .02}$. \textbf{Bold} indicates method(s) that are \textit{significantly} better than all the others. The results of the others are taken from \cite{shalit2017estimating}, except GANITE \citep{yoon2018ganite} and CEVAE \citep{louizos2017causal}.} 

\begin{tabular}{p{1.cm}p{.5cm}p{.5cm}p{.5cm}p{.5cm}p{.5cm}p{.6cm}p{.6cm}}
\toprule 
Method &{TMLE} &{BNN} &{CFR} &{CF} &{CEVAE} &{GANITE} &{Ours*}\\
\midrule 
pre-$\epsilon_a$ &NA &.42$_{\pm .03}$
&.27$_{\pm.01}$ &.40$_{\pm.03}$ &.46$_{\pm.02}$ &.49$_{\pm.05}$  &\textbf{.21}$_{\pm .01}$
\\
\midrule 
post-$\epsilon_a$ &.30$_{\pm .01}$ &.37$_{\pm .03}$  
&.25$_{\pm.01}$  &\textbf{.18}$_{\pm.01}$ &.34$_{\pm.01}$   &.43$_{\pm.05}$  &\textbf{.17}$_{\pm .01}$
\\
\midrule 
pre-$\sqrt{\epsilon_i}$ &NA &2.1$_{\pm .1}$ 
&\textbf{.76}$_{\pm.02}$ &3.8$_{\pm.2}$ &2.6$_{\pm.1}$ &2.4$_{\pm .4}$ &1.0$_{\pm .05}$
\\
\midrule 
post-$\sqrt{\epsilon_i}$ &5.0$_{\pm .2}$ &2.2$_{\pm .1}$  
&\textbf{.71}$_{\pm.02}$ &3.8$_{\pm.2}$ &2.7$_{\pm.1 }$  &1.9$_{\pm .4}$ &.97$_{\pm .04}$
\\
\bottomrule 
\end{tabular}
\label{t:ihdp}
\end{table}

\subsection{Pokec Social Network Dataset}
We show our method is the best compared with the methods specialized for networked deconfounding, a challenging problem in its own right. Pokec \citep{leskovec2014snap} is a real world social network dataset. We experiment on a semi-synthetic dataset based on Pokec, which was introduced in \citet{veitch2019using}, and use exactly the same pre-processing and generating procedure. The pre-processed network has about 79,000 vertexes (users) connected by 1.3 $\times 10^6$ undirected edges.  The subset of users used here are restricted to three living districts that are within the same region. The network structure is expressed by binary adjacency matrix $\mG$. Following \citet{veitch2019using}, we split the users into 10 folds, test on each fold and report the mean and std of pre-treatment ATE predictions. We further separate the rest of users (in the other 9 folds) by $6:3$, for training and validation. 

Each user has 12 attributes, among which  \texttt{district}, \texttt{age}, or \texttt{join date} is used as a confounder $\rz$ to build 3 different datasets, with remaining 11 attributes used as covariate $\rvx$. Treatment $\rt$ and outcome $\rvy$ are synthesised as following:
\begin{equation}
\label{pokec}
    \rt \sim \bern(g(\rz)), \quad \rvy = \rt + 10(g(\rz)-0.5) + \epsilon,
\end{equation}
where $\epsilon$ is standard normal. Note that \texttt{district} is of 3 categories; \texttt{age} and \texttt{join date} are also discretized into three bins. $g(\rz)$, which is a B-score, maps these three categories and values to $\{0.15, 0.5, 0.85\}$. 

Intact-VAE is expected to learn a B-score from $\mG, \rvx$, if we can exploit the network structure effectively. 
Given the huge network structure, most users can practically be identified by their attributes and neighborhood structure, which means $\rz$ can be roughly seen as a deterministic function of $\mG, \rvx$. 
This idea is comparable to Assumptions 2 and 4 in \citet{veitch2019using}, which postulate directly that a balancing score can be learned in the limit of infinite large network. 
To extract information from the network structure, we use Graph Convolutional Network (GCN) \citep{DBLP:conf/iclr/KipfW17} in conditional prior and encoder of Intact-VAE. The implementation details are given in Appendix.

Table \ref{tbl:Pokec} shows the results. We report pre-treatment PEHE of our method in the Appendix, while \cite{veitch2019using} does not give individual-level prediction.

\begin{table}[h]
\centering
\scriptsize

\caption{Pre-treatment ATE on Pokec. Ground truth ATE is 1, as we can see in \eqref{pokec}. ``Unadjusted'' estimates ATE by $\E_{\mathcal{D}}(y_1)-\E_{\mathcal{D}}(y_0)$. ``Parametric'' is a stochastic block model for networked data \citep{gopalan2013efficient}. ``Embed-'' denotes the best alternatives given by \citet{veitch2019using}. \textbf{Bold} indicates method(s) that are \textit{significantly} better than all the others. 20-dimensional latent variable in Intact-VAE works better, and its result is reported. The results of the other methods are taken from \citet{veitch2019using}.} \label{tbl:Pokec}

\begin{tabular}{l|c|c|c}
\toprule 
              & \texttt{Age}   & \texttt{District} & \texttt{Join Date} \\
              \midrule
Unadjusted    & 4.34 $\pm$ 0.05  & 4.51 $\pm$ 0.05  & 4.03 $\pm$ 0.06   \\
Parametric  & 4.06 $\pm$ 0.01   & 3.22 $\pm$ 0.01  & 3.73 $\pm$ 0.01   \\
Embedding-Reg.  & 2.77 $\pm$ 0.35  & \textbf{1.75} $\pm$ 0.20   & 2.41 $\pm$ 0.45   \\
Embedding-IPW  & 3.12 $\pm$ 0.06  & \textbf{1.66} $\pm$ 0.07   & 3.10 $\pm$ 0.07   \\
Ours           &\textbf{2.08} $\pm$ 0.32   & \textbf{1.68} $\pm$ 0.10   & \textbf{1.70} $\pm$ 0.13  \\
\bottomrule
\end{tabular}
\end{table}


Finally, we note that, in all the experiments, learned noise at least matches fixed one, and sometimes significantly better. Also refer to the end of Sec.~\ref{sec:identification} for rationale. 

It is also noteworthy that, both IHDP and Pokec semi-synthetic datasets are special cases of the injective noise model, and both are commonly used in previous works. This shows the significance and generality of our theory. And the comparisons are fair: in implementation, our method does \textit{not} enforce learning injective outcome models (e.g., by normalizing flows \citep{kobyzev2020normalizing}).

\section{Conclusion}

In this work, we proposed a new VAE architecture for estimating causal effects under unobserved confounding, with theoretical analysis and state-of-the-art performance. To the best of our knowledge, this is the \textit{first} generative learning method that provably identifies treatment effects, without assuming that the hidden confounder can be recovered. Following the line of sufficient scores in causal inference, we introduced B*-scores, which is new to machine learning methods, together with the related mean exchangeability \citep{dahabreh2019generalizing}. The the generality and properties of injective outcome model is another key to our method, as we showed in Lemma 1 (Sec.~\ref{sec:identification}) and Experiments.

We emphasize that, recovery of the true distribution of Bt-score is also \textit{not} needed, because Bt-score is based on conditional mean and mismatched score distribution is allowed by our theory. This is why we saw in experiments that Gaussian latent variable works well. On the other hand, our method can be easily extended to learn arbitrary observational and latent distributions, because our theory extends to exponential family distributions of latent variables (see Theorem \ref{idmodel} in Appendix, also \citep{khemakhem2020variational}), and VAE can use non-Gaussian latent \citep{maddison2016concrete}.
More discussions can be found in Appendix.

\bibliography{example_paper}

\begin{thebibliography}{8}
\providecommand{\natexlab}[1]{#1}
\providecommand{\url}[1]{\texttt{#1}}
\expandafter\ifx\csname urlstyle\endcsname\relax
  \providecommand{\doi}[1]{doi: #1}\else
  \providecommand{\doi}{doi: \begingroup \urlstyle{rm}\Url}\fi

\bibitem[Author(2021)]{anonymous}
Author, N.~N.
\newblock Suppressed for anonymity, 2021.

\bibitem[Duda et~al.(2000)Duda, Hart, and Stork]{DudaHart2nd}
Duda, R.~O., Hart, P.~E., and Stork, D.~G.
\newblock \emph{Pattern Classification}.
\newblock John Wiley and Sons, 2nd edition, 2000.

\bibitem[Kearns(1989)]{kearns89}
Kearns, M.~J.
\newblock \emph{Computational Complexity of Machine Learning}.
\newblock PhD thesis, Department of Computer Science, Harvard University, 1989.

\bibitem[Langley(2000)]{langley00}
Langley, P.
\newblock Crafting papers on machine learning.
\newblock In Langley, P. (ed.), \emph{Proceedings of the 17th International
  Conference on Machine Learning (ICML 2000)}, pp.\  1207--1216, Stanford, CA,
  2000. Morgan Kaufmann.

\bibitem[Michalski et~al.(1983)Michalski, Carbonell, and
  Mitchell]{MachineLearningI}
Michalski, R.~S., Carbonell, J.~G., and Mitchell, T.~M. (eds.).
\newblock \emph{Machine Learning: An Artificial Intelligence Approach, Vol. I}.
\newblock Tioga, Palo Alto, CA, 1983.

\bibitem[Mitchell(1980)]{mitchell80}
Mitchell, T.~M.
\newblock The need for biases in learning generalizations.
\newblock Technical report, Computer Science Department, Rutgers University,
  New Brunswick, MA, 1980.

\bibitem[Newell \& Rosenbloom(1981)Newell and Rosenbloom]{Newell81}
Newell, A. and Rosenbloom, P.~S.
\newblock Mechanisms of skill acquisition and the law of practice.
\newblock In Anderson, J.~R. (ed.), \emph{Cognitive Skills and Their
  Acquisition}, chapter~1, pp.\  1--51. Lawrence Erlbaum Associates, Inc.,
  Hillsdale, NJ, 1981.

\bibitem[Samuel(1959)]{Samuel59}
Samuel, A.~L.
\newblock Some studies in machine learning using the game of checkers.
\newblock \emph{IBM Journal of Research and Development}, 3\penalty0
  (3):\penalty0 211--229, 1959.

\end{thebibliography}


\begin{thebibliography}{54}
\providecommand{\natexlab}[1]{#1}
\providecommand{\url}[1]{\texttt{#1}}
\expandafter\ifx\csname urlstyle\endcsname\relax
  \providecommand{\doi}[1]{doi: #1}\else
  \providecommand{\doi}{doi: \begingroup \urlstyle{rm}\Url}\fi

\bibitem[Alaa \& van~der Schaar(2017)Alaa and van~der Schaar]{alaa2017bayesian}
Alaa, A.~M. and van~der Schaar, M.
\newblock Bayesian inference of individualized treatment effects using
  multi-task gaussian processes.
\newblock In \emph{Advances in Neural Information Processing Systems}, pp.\
  3424--3432, 2017.

\bibitem[Allman et~al.(2009)Allman, Matias, Rhodes,
  et~al.]{allman2009identifiability}
Allman, E.~S., Matias, C., Rhodes, J.~A., et~al.
\newblock Identifiability of parameters in latent structure models with many
  observed variables.
\newblock \emph{The Annals of Statistics}, 37\penalty0 (6A):\penalty0
  3099--3132, 2009.

\bibitem[Angrist et~al.(1996)Angrist, Imbens, and
  Rubin]{angrist1996identification}
Angrist, J.~D., Imbens, G.~W., and Rubin, D.~B.
\newblock Identification of causal effects using instrumental variables.
\newblock \emph{Journal of the American statistical Association}, 91\penalty0
  (434):\penalty0 444--455, 1996.

\bibitem[Cuturi(2013)]{cuturi2013sinkhorn}
Cuturi, M.
\newblock Sinkhorn distances: Lightspeed computation of optimal transport.
\newblock In \emph{Advances in neural information processing systems}, pp.\
  2292--2300, 2013.

\bibitem[Dahabreh et~al.(2019)Dahabreh, Robertson, Tchetgen, Stuart, and
  Hern{\'a}n]{dahabreh2019generalizing}
Dahabreh, I.~J., Robertson, S.~E., Tchetgen, E.~J., Stuart, E.~A., and
  Hern{\'a}n, M.~A.
\newblock Generalizing causal inferences from individuals in randomized trials
  to all trial-eligible individuals.
\newblock \emph{Biometrics}, 75\penalty0 (2):\penalty0 685--694, 2019.

\bibitem[Doersch(2016)]{doersch2016tutorial}
Doersch, C.
\newblock Tutorial on variational autoencoders.
\newblock \emph{arXiv preprint arXiv:1606.05908}, 2016.

\bibitem[D’Amour et~al.(2020)D’Amour, Ding, Feller, Lei, and
  Sekhon]{d2020overlap}
D’Amour, A., Ding, P., Feller, A., Lei, L., and Sekhon, J.
\newblock Overlap in observational studies with high-dimensional covariates.
\newblock \emph{Journal of Econometrics}, 2020.

\bibitem[Gopalan \& Blei(2013)Gopalan and Blei]{gopalan2013efficient}
Gopalan, P.~K. and Blei, D.~M.
\newblock Efficient discovery of overlapping communities in massive networks.
\newblock \emph{Proceedings of the National Academy of Sciences}, 110\penalty0
  (36):\penalty0 14534--14539, 2013.

\bibitem[Greenland(1980)]{greenland1980effect}
Greenland, S.
\newblock The effect of misclassification in the presence of covariates.
\newblock \emph{American journal of epidemiology}, 112\penalty0 (4):\penalty0
  564--569, 1980.

\bibitem[Hansen(2008)]{hansen2008prognostic}
Hansen, B.~B.
\newblock The prognostic analogue of the propensity score.
\newblock \emph{Biometrika}, 95\penalty0 (2):\penalty0 481--488, 2008.

\bibitem[Hartford et~al.(2017)Hartford, Lewis, Leyton-Brown, and
  Taddy]{hartford2017deep}
Hartford, J., Lewis, G., Leyton-Brown, K., and Taddy, M.
\newblock Deep iv: A flexible approach for counterfactual prediction.
\newblock In \emph{International Conference on Machine Learning}, pp.\
  1414--1423, 2017.

\bibitem[Hernan \& Robins(2020)Hernan and
  Robins]{hernanCausalInferenceWhat2020}
Hernan, M.~A. and Robins, J.~M.
\newblock \emph{Causal {{Inference}}: {{What If}}}.
\newblock {CRC Press}, 1st edition, 2020.
\newblock ISBN 978-1-4200-7616-5.

\bibitem[Higgins et~al.(2016)Higgins, Matthey, Pal, Burgess, Glorot, Botvinick,
  Mohamed, and Lerchner]{higgins2016beta}
Higgins, I., Matthey, L., Pal, A., Burgess, C., Glorot, X., Botvinick, M.,
  Mohamed, S., and Lerchner, A.
\newblock beta-vae: Learning basic visual concepts with a constrained
  variational framework.
\newblock 2016.

\bibitem[Hill(2011)]{hill2011bayesian}
Hill, J.~L.
\newblock Bayesian nonparametric modeling for causal inference.
\newblock \emph{Journal of Computational and Graphical Statistics}, 20\penalty0
  (1):\penalty0 217--240, 2011.

\bibitem[Horvitz \& Thompson(1952)Horvitz and
  Thompson]{horvitz1952generalization}
Horvitz, D.~G. and Thompson, D.~J.
\newblock A generalization of sampling without replacement from a finite
  universe.
\newblock \emph{Journal of the American statistical Association}, 47\penalty0
  (260):\penalty0 663--685, 1952.

\bibitem[Hyv{\"a}rinen \& Morioka(2016)Hyv{\"a}rinen and
  Morioka]{hyvarinen2016unsupervised}
Hyv{\"a}rinen, A. and Morioka, H.
\newblock Unsupervised feature extraction by time-contrastive learning and
  nonlinear {ICA}.
\newblock In \emph{Advances in Neural Information Processing Systems}, pp.\
  3765--3773, 2016.

\bibitem[Hyv{\"a}rinen et~al.(2019)Hyv{\"a}rinen, Sasaki, and
  Turner]{hyvarinen2019nonlinear}
Hyv{\"a}rinen, A., Sasaki, H., and Turner, R.
\newblock Nonlinear ica using auxiliary variables and generalized contrastive
  learning.
\newblock In \emph{The 22nd International Conference on Artificial Intelligence
  and Statistics}, pp.\  859--868, 2019.

\bibitem[Imbens \& Rubin(2015)Imbens and Rubin]{imbens2015causal}
Imbens, G.~W. and Rubin, D.~B.
\newblock \emph{Causal inference in statistics, social, and biomedical
  sciences}.
\newblock Cambridge University Press, 2015.

\bibitem[Johansson et~al.(2016)Johansson, Shalit, and
  Sontag]{johansson2016learning}
Johansson, F., Shalit, U., and Sontag, D.
\newblock Learning representations for counterfactual inference.
\newblock In \emph{International conference on machine learning}, pp.\
  3020--3029, 2016.

\bibitem[Kallus et~al.(2018)Kallus, Mao, and Udell]{kallus2018causal}
Kallus, N., Mao, X., and Udell, M.
\newblock Causal inference with noisy and missing covariates via matrix
  factorization.
\newblock In \emph{Advances in neural information processing systems}, pp.\
  6921--6932, 2018.

\bibitem[Kallus et~al.(2019)Kallus, Mao, and Zhou]{kallus2019interval}
Kallus, N., Mao, X., and Zhou, A.
\newblock Interval estimation of individual-level causal effects under
  unobserved confounding.
\newblock In \emph{The 22nd International Conference on Artificial Intelligence
  and Statistics}, pp.\  2281--2290, 2019.

\bibitem[Khemakhem et~al.(2020)Khemakhem, Kingma, Monti, and
  Hyvarinen]{khemakhem2020variational}
Khemakhem, I., Kingma, D., Monti, R., and Hyvarinen, A.
\newblock Variational autoencoders and nonlinear ica: A unifying framework.
\newblock In \emph{International Conference on Artificial Intelligence and
  Statistics}, pp.\  2207--2217, 2020.

\bibitem[Kingma \& Welling(2014)Kingma and
  Welling]{DBLP:journals/corr/KingmaW13}
Kingma, D.~P. and Welling, M.
\newblock Auto-encoding variational bayes.
\newblock In Bengio, Y. and LeCun, Y. (eds.), \emph{2nd International
  Conference on Learning Representations, {ICLR} 2014, Banff, AB, Canada, April
  14-16, 2014, Conference Track Proceedings}, 2014.
\newblock URL \url{http://arxiv.org/abs/1312.6114}.

\bibitem[Kingma et~al.(2014)Kingma, Mohamed, Rezende, and
  Welling]{kingma2014semi}
Kingma, D.~P., Mohamed, S., Rezende, D.~J., and Welling, M.
\newblock Semi-supervised learning with deep generative models.
\newblock In \emph{Advances in neural information processing systems}, pp.\
  3581--3589, 2014.

\bibitem[Kingma et~al.(2019)Kingma, Welling, et~al.]{kingma2019introduction}
Kingma, D.~P., Welling, M., et~al.
\newblock An introduction to variational autoencoders.
\newblock \emph{Foundations and Trends{\textregistered} in Machine Learning},
  12\penalty0 (4):\penalty0 307--392, 2019.

\bibitem[Kipf \& Welling(2017)Kipf and Welling]{DBLP:conf/iclr/KipfW17}
Kipf, T.~N. and Welling, M.
\newblock Semi-supervised classification with graph convolutional networks.
\newblock In \emph{5th International Conference on Learning Representations,
  {ICLR} 2017, Toulon, France, April 24-26, 2017, Conference Track
  Proceedings}. OpenReview.net, 2017.
\newblock URL \url{https://openreview.net/forum?id=SJU4ayYgl}.

\bibitem[Kobyzev et~al.(2020)Kobyzev, Prince, and
  Brubaker]{kobyzev2020normalizing}
Kobyzev, I., Prince, S., and Brubaker, M.
\newblock Normalizing flows: An introduction and review of current methods.
\newblock \emph{IEEE Transactions on Pattern Analysis and Machine
  Intelligence}, 2020.

\bibitem[Kocaoglu et~al.(2017)Kocaoglu, Snyder, Dimakis, and
  Vishwanath]{kocaoglu2017causalgan}
Kocaoglu, M., Snyder, C., Dimakis, A.~G., and Vishwanath, S.
\newblock Causalgan: Learning causal implicit generative models with
  adversarial training.
\newblock \emph{arXiv preprint arXiv:1709.02023}, 2017.

\bibitem[Kuroki \& Pearl(2014)Kuroki and Pearl]{kuroki2014measurement}
Kuroki, M. and Pearl, J.
\newblock Measurement bias and effect restoration in causal inference.
\newblock \emph{Biometrika}, 101\penalty0 (2):\penalty0 423--437, 2014.

\bibitem[Leskovec \& Krevl(2014)Leskovec and Krevl]{leskovec2014snap}
Leskovec, J. and Krevl, A.
\newblock Snap datasets: Stanford large network dataset collection, 2014.

\bibitem[Louizos et~al.(2017)Louizos, Shalit, Mooij, Sontag, Zemel, and
  Welling]{louizos2017causal}
Louizos, C., Shalit, U., Mooij, J.~M., Sontag, D., Zemel, R., and Welling, M.
\newblock Causal effect inference with deep latent-variable models.
\newblock In \emph{Advances in Neural Information Processing Systems}, pp.\
  6446--6456, 2017.

\bibitem[Lu \& Lu(2020)Lu and Lu]{lu2020universal}
Lu, Y. and Lu, J.
\newblock A universal approximation theorem of deep neural networks for
  expressing distributions.
\newblock \emph{arXiv preprint arXiv:2004.08867}, 2020.

\bibitem[Maddison et~al.(2016)Maddison, Mnih, and Teh]{maddison2016concrete}
Maddison, C.~J., Mnih, A., and Teh, Y.~W.
\newblock The concrete distribution: A continuous relaxation of discrete random
  variables.
\newblock \emph{arXiv preprint arXiv:1611.00712}, 2016.

\bibitem[Manski(2009)]{manski2009identification}
Manski, C.~F.
\newblock \emph{Identification for prediction and decision}.
\newblock Harvard University Press, 2009.

\bibitem[Miao et~al.(2018)Miao, Geng, and
  Tchetgen~Tchetgen]{miao2018identifying}
Miao, W., Geng, Z., and Tchetgen~Tchetgen, E.~J.
\newblock Identifying causal effects with proxy variables of an unmeasured
  confounder.
\newblock \emph{Biometrika}, 105\penalty0 (4):\penalty0 987--993, 2018.

\bibitem[Ogburn(2018)]{ogburn2018challenges}
Ogburn, E.~L.
\newblock Challenges to estimating contagion effects from observational data.
\newblock In \emph{Complex Spreading Phenomena in Social Systems}, pp.\
  47--64. Springer, 2018.

\bibitem[Pearl(2009)]{pearl2009causality}
Pearl, J.
\newblock \emph{Causality: models, reasoning and inference}.
\newblock Cambridge University Press, 2009.

\bibitem[Roeder et~al.(2020)Roeder, Metz, and Kingma]{roeder2020linear}
Roeder, G., Metz, L., and Kingma, D.~P.
\newblock On linear identifiability of learned representations.
\newblock \emph{arXiv preprint arXiv:2007.00810}, 2020.

\bibitem[Rosenbaum \& Rubin(1983)Rosenbaum and Rubin]{rosenbaum1983central}
Rosenbaum, P.~R. and Rubin, D.~B.
\newblock The central role of the propensity score in observational studies for
  causal effects.
\newblock \emph{Biometrika}, 70\penalty0 (1):\penalty0 41--55, 1983.

\bibitem[Rubin(2005)]{rubin2005causal}
Rubin, D.~B.
\newblock Causal inference using potential outcomes: Design, modeling,
  decisions.
\newblock \emph{Journal of the American Statistical Association}, 100\penalty0
  (469):\penalty0 322--331, 2005.

\bibitem[Shalit et~al.(2017)Shalit, Johansson, and
  Sontag]{shalit2017estimating}
Shalit, U., Johansson, F.~D., and Sontag, D.
\newblock Estimating individual treatment effect: generalization bounds and
  algorithms.
\newblock In \emph{International Conference on Machine Learning}, pp.\
  3076--3085. PMLR, 2017.

\bibitem[Shi et~al.(2019)Shi, Blei, and Veitch]{shi2019adapting}
Shi, C., Blei, D., and Veitch, V.
\newblock Adapting neural networks for the estimation of treatment effects.
\newblock In \emph{Advances in Neural Information Processing Systems}, pp.\
  2507--2517, 2019.

\bibitem[Sohn et~al.(2015)Sohn, Lee, and Yan]{sohn2015learning}
Sohn, K., Lee, H., and Yan, X.
\newblock Learning structured output representation using deep conditional
  generative models.
\newblock In \emph{Advances in neural information processing systems}, pp.\
  3483--3491, 2015.

\bibitem[Sorrenson et~al.(2019)Sorrenson, Rother, and
  K{\"o}the]{sorrenson2019disentanglement}
Sorrenson, P., Rother, C., and K{\"o}the, U.
\newblock Disentanglement by nonlinear ica with general incompressible-flow
  networks (gin).
\newblock In \emph{International Conference on Learning Representations}, 2019.

\bibitem[Srivastava et~al.(2014)Srivastava, Hinton, Krizhevsky, Sutskever, and
  Salakhutdinov]{srivastava2014dropout}
Srivastava, N., Hinton, G., Krizhevsky, A., Sutskever, I., and Salakhutdinov,
  R.
\newblock Dropout: a simple way to prevent neural networks from overfitting.
\newblock \emph{The journal of machine learning research}, 15\penalty0
  (1):\penalty0 1929--1958, 2014.

\bibitem[Suter et~al.(2019)Suter, Miladinovic, Sch{\"o}lkopf, and
  Bauer]{suter2019robustly}
Suter, R., Miladinovic, D., Sch{\"o}lkopf, B., and Bauer, S.
\newblock Robustly disentangled causal mechanisms: Validating deep
  representations for interventional robustness.
\newblock In \emph{International Conference on Machine Learning}, pp.\
  6056--6065. PMLR, 2019.

\bibitem[Van~der Laan \& Rose(2011)Van~der Laan and Rose]{van2011targeted}
Van~der Laan, M.~J. and Rose, S.
\newblock \emph{Targeted learning: causal inference for observational and
  experimental data}.
\newblock Springer Science \& Business Media, 2011.

\bibitem[Veitch et~al.(2019)Veitch, Wang, and Blei]{veitch2019using}
Veitch, V., Wang, Y., and Blei, D.
\newblock Using embeddings to correct for unobserved confounding in networks.
\newblock In \emph{Advances in Neural Information Processing Systems}, pp.\
  13792--13802, 2019.

\bibitem[Wager \& Athey(2018)Wager and Athey]{wager2018estimation}
Wager, S. and Athey, S.
\newblock Estimation and inference of heterogeneous treatment effects using
  random forests.
\newblock \emph{Journal of the American Statistical Association}, 113\penalty0
  (523):\penalty0 1228--1242, 2018.

\bibitem[Wang \& Blei(2019)Wang and Blei]{wang2019blessings}
Wang, Y. and Blei, D.~M.
\newblock The blessings of multiple causes.
\newblock \emph{Journal of the American Statistical Association}, 114\penalty0
  (528):\penalty0 1574--1596, 2019.

\bibitem[Wu \& Fukumizu(2020)Wu and Fukumizu]{pmlr-v108-wu20b}
Wu, P. and Fukumizu, K.
\newblock Causal mosaic: Cause-effect inference via nonlinear ica and ensemble
  method.
\newblock volume 108 of \emph{Proceedings of Machine Learning Research}, pp.\
  1157--1167, Online, 26--28 Aug 2020. PMLR.
\newblock URL \url{http://proceedings.mlr.press/v108/wu20b.html}.

\bibitem[Yang et~al.(2020)Yang, Liu, Chen, Shen, Hao, and
  Wang]{yang2020causalvae}
Yang, M., Liu, F., Chen, Z., Shen, X., Hao, J., and Wang, J.
\newblock Causalvae: Structured causal disentanglement in variational
  autoencoder.
\newblock \emph{arXiv preprint arXiv:2004.08697}, 2020.

\bibitem[Yao et~al.(2018)Yao, Li, Li, Huai, Gao, and
  Zhang]{yao2018representation}
Yao, L., Li, S., Li, Y., Huai, M., Gao, J., and Zhang, A.
\newblock Representation learning for treatment effect estimation from
  observational data.
\newblock In \emph{Advances in Neural Information Processing Systems}, pp.\
  2633--2643, 2018.

\bibitem[Yoon et~al.(2018)Yoon, Jordon, and van~der Schaar]{yoon2018ganite}
Yoon, J., Jordon, J., and van~der Schaar, M.
\newblock {GANITE}: Estimation of individualized treatment effects using
  generative adversarial nets.
\newblock In \emph{International Conference on Learning Representations}, 2018.
\newblock URL \url{https://openreview.net/forum?id=ByKWUeWA-}.

\end{thebibliography}
\bibliographystyle{icml2021}

\clearpage

\section{Proofs}
Here, we give proofs of results presented in the main text. More theoretical results and their proof can be found in ``Theoretical exposition''.

By a slight abuse of symbol, we will overload $|$ to collect the dependence on $\rt=t$, e.g., \eqref{estimator} can be written as $    \mu(\vx)=\E_{\mathcal{D}|\vx}(\vf'(\vz_{\bm\phi'}))|t$. 

Theorem \ref{cate_by_bts} and Corollary \ref{cate_by_bs} are rather straightforward from \eqref{eq:ps_indp} (see Proposition \ref{prop:pscore}) and the definitions of P*/B*-scores, and thus the proofs are omitted.

\textbf{Note on notation related to treatment assignment.} We'd better make it clear that, in this paper, when specifying a generating process, if the dependence on $\rt$ is written explicitly (often in subscripts, e.g., $F_{\rt}$ in Lemma 1 and $\z_{\bphi',\rt}$ in Corollary \ref{th:estimation}), then it is meant to be causally affected by the treatment assignment. Otherwise, the random variables may correlate to $\rt$, but are not affected by treatment assignment (e.g., $\BS$ and $\beps$ in Lemma 1). Please have a look at the comments below the proof of Lemma 1, particularly when footnotes in the proof of Theorem \ref{id_tion} are not clear to you.
\begin{proof}[Proof of Lemma 1]
1)
By definition of the generating process, $\y(t)= F_t(\BS)+\beps,t \in \{0,1\}$ (note that consistency of counterfactuals should be satisfied, and treatment assignment only affects $\y$ though $F$). Plug $\y(t)$ into the definition of B0/B1/B-score, we quickly see the results. We should note that,
\begin{equation}
\begin{split}
    \E(\bm\epsilon|\BS,\x)&=\E(\bm\epsilon|\BS,\rt) \iff \\ \E(\bm\epsilon|\BS,\x)&=\E(\bm\epsilon|\BS,\rt=t) \text{ for \textit{both} } t,
\end{split}
\end{equation}
because $\beps$ is not affected by treatment assignment. Also, given $\BS$ is not affected by treatment assignment, $\BS$ is a B-score iff.~when it is a Bt-score. See the comments below this proof, about what happens when $\beps$ or $\BS$ are affected by treatment assignment.


2) Due to the assumptions, if $\x,\rt$ is given, the generating process can be seen as a mean $F_{t}(\BS)$ plus a noise whose distribution does \textit{not} change with the mean. Now, the distribution $p(\y|\x,\rt)$ can be thought as following: first we have $p(\BS|\x,\rt)$, then $\E(\y(t)|\x,\rt=t)$ is \textit{determined} totally by $F_{t}(\BS)$, and $\y(t)$ is the former plus fixed noise (that might not be zero-mean). Thus, due to the \textit{fixed} noise, to have $p'(\y|\x,\rt)=p(\y|\x,\rt)$, the densities of the means (seen as two r.v.s) should match exactly at every point. 

\ref{inj_model} 

(a) is obvious from part 2). 

(b) Use eq.~\eqref{eq:cate_by_bts}, and note $\F_t(\BS_t)=\E(\y(t)|\BS_t,\rt=t)$, we have
\begin{equation}
\begin{split}
    \mu_t(\vx)&=\textstyle\int \F_t(B_t)p(B_{t}|\vx)dB_{t} \\
    &=\textstyle\int_{\BS} F_t(B)p(B|\vx,t)dB
\end{split}
\end{equation}
First equality uses that $\BS$ is a Bt-score, and second equality applies $\F=F$ and change of variable $\BS=\BS_t\text{ if }\rt=t$.
For the same reason,
\begin{equation}
    \mu'_t(\vx)=\textstyle\int_{\BS'} F'_t(B)p'(B|\vx,t)dB.
\end{equation}
Apply the change of variable $B'=D'_t(B)$ from (a) to (\theequation), we have the result.
\end{proof}
In 1) of Lemma 1, if $\beps$ is affected by treatment assignment, then, borrow the notation $\y(t)$ to $\beps$, we have $\y(t)= F_t(\BS)+\beps(t)$, and
\begin{equation}
\label{eq:cf_noise}
    \begin{split}
    \E(\bm\epsilon|\BS,\x)&=\E(\bm\epsilon|\BS,\rt) \implies \\ \E(\bm\epsilon(t)|\BS,\x)&=\E(\bm\epsilon(t)|\BS,\rt=t) \text{ for \textit{both} } t,
\end{split}
\end{equation}
but not the opposite. Still, $\BS$ is a B-score given either one of the conditions.

However, if $\BS$ is affected by treatment assignment, then we have $\BS_t$ if $\rt=t$, two different random variables w.r.t.~two treatment assignments. Then, under the condition
\begin{equation}
\label{eq:cf_bscore}
    \E(\bm\epsilon|\BS_t,\x)=\E(\bm\epsilon|\BS_t,\rt=t) \text{ for \textit{both} } t,
\end{equation}
we only have a \textit{Bt-score}. 

As we can see in Theorem \ref{id_tion} and its proof, both the noises in truth generating process and our decoder are causally affected by treatment assignment (see footnote \ref{ft:noise}). And, both our conditional prior and posterior do involve treatment assignment, thus, both only give Bt-scores (see footnote \ref{ft:z}).

\begin{proof}[Proof of Theorem \ref{id_tion}]

We first prove the results for $\z_{\bar{\bm\lambda}}$, then extend the results to $\z_{\bar{\vf},\bar{\bm\lambda}}$.

We check the assumptions for \ref{inj_model} in Lemma 1. 

First, co-injectivity is given. 

In the true generating process, we have $\BS\st$ is a B-score (by 1) of Lemma 1)\footnote{\label{ft:noise}Not exactly, see also the comment around \eqref{eq:cf_noise}. Note, intuitively, when specifying assumption \ref{ass:gen_n}, our intention is \textit{causal}. That is, we \textit{assign} $\rt=t$ and $\bm\sigma_{a,t}$ as variance of $\rve_a$, and, we \textit{could have} assign $\rt=1-t$ and $\bm\sigma_{a,1-t}$. In fact, we have $\rve_{a,t} \independent \rvx,\rconf,\rve_b|\BS\st$ from the exogeneity of $\rve_a$ given $\rt$, thus $\BS\st$ is also a \textit{P-score} of $(\rvx,\rconf,\rve_b)$. Compare to Example 1, here the required independence is given by conditioning on $\rt=t$, which is a result of treatment assignment.}, since $\rve_a$ is zero-mean. We have $\rve_a \independent \BS\st|\x,\rt$ since $\rve_a$ is exogenous (independent of any variables under study) \textit{given} $\rt$.

Similarly for our decoder, we check $\beps$ and get the conclusions for $\z_{\bar{\bm\lambda}}$. So, we have conclusion 1) for $\z_{\bar{\bm\lambda}}$, since it is a Bt-score\footnote{\label{ft:z}Again, by a variation of 1) of Lemma 1, see also the comment around \eqref{eq:cf_bscore}. We can also see it as a \textit{Pt-score} of $(\x,\beps \sim \mathcal{N}(\bm0,\mI))$. Note again, we should see the conditional prior as the generating process $\rz_{i,\bar{\lambda}_i}=h_{i,t}(\x)+\sqrt{k_{i,t}(\x)}\epsilon_i$, which involves the assignment $\rt=t$.} for the decoder. Also, we have conclusion 2) for $\z_{\bar{\bm\lambda}}$, since $\beps \deq \bm\sigma_{a,\rt}\xt$ by \ref{ass:gen_n} and \ref{ass:model_g} and $\bar{\y}\deq\y\st\xt$ by the ``if'' part of conclusions.

Now that we have the conditions for \ref{inj_model} in Lemma 1, for conclusion 3), we only need to derive the equation for $\bar{\mu_t}(\vx)$. This is by a direct application of \eqref{eq:cate_by_bts}. We should pay attention to the counterfactuals defined for the decoder, given in 3). Note also the support of $\x$ is matched between our model and the truth, by assumption.

We extend the result to $\z_{\bar{\vf},\bar{\bm\lambda}} \sim  p_{\bar{\vf},\bar{\bm\lambda}}(\rvz|\rvx,\rvy,\rt)$. We follow the reasoning of Lemma 1. Note that, further given $\y=\vy$, the \textit{means} of $\y(t),\bar{\y}(t)$ should still be identically distributed (i.d) given $\x,\rt$, again because the \textit{distribution} of noise (in decoder and truth) does not depend on $\y$. Thus, we have, given any $(\vx,\vy,t)$ (add condition on $\y$), the \textit{posterior} means of $\y(t),\bar{\y}(t)$ should be i.d. That is, $\bar{\vf}_{\rt}(\z_{\bar{\vf},\bar{\bm\lambda}})\overset{d}{=}\vf\st_{\rt}(\BS\st)|_{\x,\y,\rt}$. The rest is the same as the reasoning for conditional prior $\z_{\bar{\bm\lambda}}$.
To understand $\z_{\bar{\vf},\bar{\bm\lambda}}$ as a Bt-score, it might be helpful to imagine $\z_{\bar{\vf},\bar{\bm\lambda}}$ was generated from $\x,\y,\rt$, and then input into our decoder (which, as you may know, is done through variational inference in the ELBO, particularly our encoder). 
Note also that, the support of $\z_{\bar{\vf},\bar{\bm\lambda}}$ is contained in that of $\z_{\bar{\bm\lambda}}$, due to the additional information given by $\y$.
\end{proof}

The definition of counterfactuals for decoder deserves more words. It introduces intervention (i.e., assignment of treatment value) into our model, because we also have consistent counterfacuals in the model: $\bar{\rvy}(t)\coloneqq \vf_t(\z)+\beps=\bar{\y}$ if $\rt=t$. And this is important to understand Theorem \ref{th:bestimation}, where we will assign \textit{counterfactual} treatment value, that may not be the same as the value of $\rt$.


In estimation, we need \textit{consistency}\footnote{This is the statistical consistency of an estimator. Do not confuse with the consistency of counterfactuals.} of our VAE (Proposition \ref{consistency}), to learn an observational distribution same as truth in the limit of infinite data, and to have $\z_{\vtheta',\rt}=\z_{\bm\phi',\rt}$. 
\begin{proof}[Proof of Corollary \ref{th:estimation}]
From Theorem \ref{id_tion} and the consistency of VAE (Proposition \ref{consistency}), we have $\z_{\vtheta',\rt}=\z_{\bm\phi',\rt}$ is a Bt-score, and $\vf'_t(\z_{\bm\phi',t})=\E(\rvy(t)|\z_{\bm\phi',t},\rt=t)$, so we have \eqref{estimator}. 
\end{proof}

For Theorem \ref{th:bestimation}, it is important that the learned parameters $\vtheta'$ is identified in the equivalence class defined by \eqref{eq:class} in Theorem \ref{idmodel} (model identifiability).
\begin{proof}[Proof of Theorem \ref{th:bestimation}]
As an expository step, since $\z_{\blambda',0}\deq\z_{\blambda',1}\deq\vd(\BS\st)|_{\x}$ trivially from \ref{ass:bcovar} and Theorem \ref{id_tion}, $\z_{\blambda'}$ becomes a \textit{B-score}, and
we have
\begin{equation}
\begin{split}
    \mu_{1-t}(\vx)&=\textstyle\int_{\BS} F_{1-t}(B)p(B|\vx,{1-t})dB \\
    &=\textstyle\int_{\vz} \vf'_{1-t}(\vz)p_{\blambda'}(\vz|\vx,1-t)d\vz \\
    &=\textstyle\int_{\vz} \vf'_{1-t}(\vz)p_{\blambda'}(\vz|\vx,t)d\vz=\mu'_{1-t}(\vx).
\end{split}
\end{equation} 
The second equality uses the same technique in the proof of \ref{inj_model} in Lemma 1, and the third uses $p_{\blambda'}(\z|\vx,0)=p_{\blambda'}(\z|\vx,1)$. 

Note, particularly, the third line of (\theequation) can be written in another way: $\mu'_{1-t}(\vx)=\E(\vf'_{1-t}(\z_{\blambda',t})|\vx)$, and understood as: the \textit{counterfactual} conditional outcome $\mu'_{1-t}(\vx)$ can be given by \textit{factual} distribution of score $p_{\blambda'}(\vz|\vx,t)$. 

We go on to prove, for posterior $\z_{\btheta'}$, we have similar results; $\z_{\btheta',0}\deq\z_{\btheta',1}\deq\vd(\BS\st)|_{\x,\y}$ are B-scores, and  $\mu'_{1-t}(\vx)=\E(\vf'_{1-t}(\z_{\btheta',t})|\vx)$, then our final goal is just a corollary of the consistency of VAE.

To consider the posterior $\z_{\btheta'}$, we need to use the identifiability of our model (Theorem \ref{idmodel}, next section)\footnote{We would better include iii) in Theorem \ref{idmodel} also in the statement of Theorem \ref{th:bestimation}, but we hope it would not be confusing without the inclusion since we stated clearly that Theorem \ref{th:bestimation} is based on Theorem \ref{idmodel}.}. Specifically, parameter $\vf_t$ is identified up to an affine transformation:
\begin{equation}
    \vf' = \vf \circ \mathcal{A} |t
\end{equation}
where $\vf,\vf'$ are any pair of optimal parameters satisfying $\y'\deq\y\deq\y\st\xt$. 

For $\vf=\vf^*$ in our model, there should exists $\bm\lambda=\bm\lambda^*$ such that $\z_{\vtheta,\rt} \sim \vaepostparam$ and $\BS^*$ have the same distribution, and $\vaeobsparam=\trueobs$. That is, $(\vf^*, \bm\lambda^*)$ should be in the set of optimal parameters. Thus,
\begin{equation}
    \vf' = \vf\st \circ \mathcal{A} |t
\end{equation}

In the proof of Theorem \ref{idmodel}, we can see $\mathcal{A}_t$ depends on $t$ only through $\blambda_t$. Then, $\mathcal{A}_0=\mathcal{A}_1$ trivially by $\blambda_0=\blambda_1$.
Thus, applying 2) of Theorem \ref{id_tion}, we have $\vd_0=\vd_1=\mathcal{A}\inv$, and $\z_{\btheta',0}\deq\z_{\btheta',1}\deq\mathcal{A}\inv(\BS\st)|_{\x,\y}$.
\end{proof}
In the proof, We see again the view that our decoder is a generating process with treatment assignment. We also see, to have $\mathcal{A}_0=\mathcal{A}_1$, it seems unnecessary to require $\blambda_0=\blambda_1$, though the latter is convenient and still widely applicable. Indeed, we derive a general balancing assumption in Sec.~\ref{bcovar}, of which $\blambda_0=\blambda_1$ is just a extreme case.

With balanced representation, in theory, we can also use in \eqref{bestimator} counterfactual representation that is the same as factual representation. However, it is safer to use factual representation, because it is possible in practice that balancing failed for some sample points and factual representation would give smaller error.

A final word about \textit{positivity}. With Theorem \ref{id_tion} and Corollary \ref{th:estimation}, we further need, and in practice we often have, the positivity of the $\z$'s ($p(\rt|\z)>0$ always) for identification and estimation of CATE for every $\vx \in \mathcal{X}$. Nevertheless, we will in fact use, as in our experiments, the balanced estimator in Theorem \ref{th:bestimation} where positivity is ensured for every $\vx \in \mathcal{X}$ since the $\z$'s are B-scores.

\section{Additional backgrounds}

\subsection{Prognostic score and balancing score}

In the fundamental work of \citet{hansen2008prognostic}, prognostic score is defined equivalently to our P0-score, but it in addition requires no effect modification to work for $\y(1)$. Thus, a useful prognostic score corresponds to our Pt-score. We give main properties of Pt-score as following. 
\begin{proposition}
\label{prop:pscore}
If $\rvv$ gives weak ignorability, and $\PS_{\rt}(\rvv)$ is a Pt-score, then $\rvy(t)\independent \rvv,\rt|\PS_t$.
\end{proposition}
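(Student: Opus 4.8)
The plan is to unwind the definitions and use the exchangeability hypothesis built into weak ignorability. Recall that $\rvv$ giving weak ignorability means exchangeability $\rvy(t)\independent\rt\mid\rvv$ (for both $t$) together with positivity $p(\rt\mid\rvv)>0$, and that $\PS_t(\rvv)$ being a Pt-score means $\rvy(t)\independent\rvv\mid\PS_t$. The goal is the joint independence $\rvy(t)\independent\rvv,\rt\mid\PS_t$. First I would fix $t$ and observe that, since $\PS_t$ is a (measurable) function of $\rvv$, conditioning on $\rvv$ is finer than conditioning on $\PS_t$; this is the structural fact that lets us chain the two given independences.

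The key steps, in order: (1) From the Pt-score property $\rvy(t)\independent\rvv\mid\PS_t$ and the fact that $\PS_t=\PS_t(\rvv)$ is $\sigma(\rvv)$-measurable, deduce that conditioning on $\rvv$ is the same as conditioning on the pair $(\rvv,\PS_t)$, so trivially $\rvy(t)\independent(\rvv,\PS_t)\mid\PS_t$ — i.e. the conditional law of $\rvy(t)$ given $\rvv$ depends on $\rvv$ only through $\PS_t$. (2) From exchangeability $\rvy(t)\independent\rt\mid\rvv$, the conditional law of $\rvy(t)$ given $(\rvv,\rt)$ does not depend on $\rt$, hence equals the conditional law given $\rvv$, which by step (1) depends only on $\PS_t$. (3) Combining, the conditional law of $\rvy(t)$ given $(\rvv,\rt)$ equals its conditional law given $\PS_t$; by the standard characterization of conditional independence (the conditional distribution of $\rvy(t)$ given $(\rvv,\rt)$ is a function of $\PS_t$ alone), this is exactly $\rvy(t)\independent(\rvv,\rt)\mid\PS_t$. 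I would phrase step (3) using the "contraction/weak union" manipulation of conditional independence: from $\rvy(t)\independent\rvv\mid\PS_t$ and $\rvy(t)\independent\rt\mid(\PS_t,\rvv)$ (the latter being exchangeability, since conditioning on $\rvv$ subsumes conditioning on $\PS_t$), the contraction property yields $\rvy(t)\independent(\rvv,\rt)\mid\PS_t$. Positivity is what guarantees all these conditional distributions are well-defined on the relevant events.

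The main obstacle — really the only subtle point — is step (3): making sure the two elementary independences are being combined with a genuinely valid rule. One must be careful that exchangeability is used in the form $\rvy(t)\independent\rt\mid\rvv$, which, because $\sigma(\PS_t)\subseteq\sigma(\rvv)$, upgrades to $\rvy(t)\independent\rt\mid(\rvv,\PS_t)$; then together with $\rvy(t)\independent\rvv\mid\PS_t$ the contraction axiom of conditional independence gives the claim. I would state the contraction property explicitly (if $A\independent B\mid C$ and $A\independent D\mid(B,C)$ then $A\independent(B,D)\mid C$, with $A=\rvy(t)$, $B=\rvv$, $C=\PS_t$, $D=\rt$) to keep the argument airtight, and note that all conditionals are well-posed thanks to positivity. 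Finally, since the argument holds for each $t\in\{0,1\}$, the conclusion holds for both $t$, matching the paper's convention.
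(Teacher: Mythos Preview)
Your proposal is correct and follows essentially the same route as the paper: upgrade exchangeability $\rvy(t)\independent\rt\mid\rvv$ to $\rvy(t)\independent\rt\mid(\rvv,\PS_t)$ using that $\PS_t$ is a function of $\rvv$, then combine with the Pt-score property $\rvy(t)\independent\rvv\mid\PS_t$ via the contraction axiom to obtain $\rvy(t)\independent(\rvv,\rt)\mid\PS_t$. The paper's proof is terser and does not invoke positivity explicitly, but the logical skeleton is identical.
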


The following three properties of conditional independence will be used repeatedly in proofs.
\begin{proposition}[Properties of conditional independence]
\label{indep_prop}
\citep[1.1.55]{pearl2009causality} For random variables $\rvw, \rvx, \rvy, \rvz$. We have:
\begin{equation*}
    \begin{split}
        \rvx \independent \rvy|\rvz \land \rvx \independent \rvw|\rvy, \rvz &\implies \rvx \independent \rvw,\rvy|\rvz \text{ (Contraction)}. \\
        \rvx \independent \rvw,\rvy|\rvz &\implies \rvx \independent \rvy|\rvw,\rvz \text{ (Weak union)}. \\ 
        \rvx \independent \rvw,\rvy|\rvz &\implies \rvx \independent \rvy|\rvz \text{ (Decomposition)}.
    \end{split}
\end{equation*}
\end{proposition}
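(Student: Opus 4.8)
The plan is to work with the elementary density (or, in the discrete case, probability-mass) characterization of conditional independence: $\rvx \independent \rvy \mid \rvz$ holds exactly when $p(x \mid y, z) = p(x \mid z)$ wherever the conditioning event has positive density, equivalently $p(x, y \mid z) = p(x \mid z)\, p(y \mid z)$. These three statements are the standard semi-graphoid axioms, and each reduces to a short manipulation of conditional densities. I would prove \emph{Decomposition} first, use it to obtain \emph{Weak union}, and finish \emph{Contraction} by a direct chain.

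For Decomposition, the hypothesis $\rvx \independent \rvw, \rvy \mid \rvz$ reads $p(x \mid w, y, z) = p(x \mid z)$. Marginalizing $\rvw$ against its conditional law given $(y,z)$ gives
\begin{equation*}
p(x \mid y, z) = \int p(x \mid w, y, z)\, p(w \mid y, z)\, dw = p(x \mid z)\int p(w \mid y, z)\, dw = p(x \mid z),
\end{equation*}
which is $\rvx \independent \rvy \mid \rvz$. For Weak union I want $p(x \mid y, w, z) = p(x \mid w, z)$. The left-hand side equals $p(x \mid w, y, z) = p(x \mid z)$ directly from the hypothesis, since reordering the conditioning variables changes nothing; for the right-hand side, applying Decomposition with the roles of $\rvw$ and $\rvy$ interchanged yields $\rvx \independent \rvw \mid \rvz$, i.e.\ $p(x \mid w, z) = p(x \mid z)$. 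Hence both sides equal $p(x \mid z)$.

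For Contraction, the two hypotheses read $p(x \mid y, z) = p(x \mid z)$ and $p(x \mid w, y, z) = p(x \mid y, z)$, so chaining them gives $p(x \mid w, y, z) = p(x \mid y, z) = p(x \mid z)$, which is exactly $\rvx \independent \rvw, \rvy \mid \rvz$. The only genuine obstacle is measure-theoretic hygiene rather than any real difficulty: the identities above should be read as holding almost everywhere on the relevant supports, and one must guarantee that the conditioning events have positive density so that the conditional densities are well defined. Under the positivity-type assumptions used elsewhere in the paper this is automatic, and the almost-everywhere equalities can be upgraded to the pointwise statements asserted; modulo this bookkeeping the argument is purely computational.
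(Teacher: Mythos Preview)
Your proof is correct and follows the standard density-based derivation of the semi-graphoid axioms. Note, however, that the paper does not actually prove this proposition: it is stated as a cited result from \cite[1.1.55]{pearl2009causality} and used as a black box in the subsequent arguments (e.g., the proof of Proposition~\ref{prop:pscore}). So there is no ``paper's own proof'' to compare against; you have simply supplied a valid justification where the paper defers to the reference.
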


\begin{proof}[Proof of Proposition \ref{prop:pscore}]
From $\rvy(t)\independent\rt|\rvv$ (\textit{weak ignorability} of $\rvv$), and since $\PS_t$ is a \textit{function} of $\rvv$, we have $\rvy(t)\independent\rt|\PS_t,\rvv$ (1).

From (1) and $\rvy(t)\independent\rvv|\PS_t(\rvv)$ (definition of Pt-score), using contraction rule, we have $\rvy(t)\independent\rt,\rvv|\PS_t$ for both $t$. 
\end{proof}
Apply the proposition to our setting, we have eq.~\eqref{eq:ps_indp}.

Note particularly, the proposition implies $\rvy(t)\independent \rt|\PS_t$ (using decomposition rule). Thus, if $\PS(\rvv)$ is a P-score, then $\PS$ also gives weak ignorability, which is a nice property shared with balancing score, as we will see immediately.

Prognostic scores are closely related to the important concept of balancing score \citep{rosenbaum1983central}. 
\begin{definition}[Balancing score]
\label{bscore}
$\bm\beta(\rvv)$, a function of random variable $\rvv$, is a balancing score if $\rt \independent \rvv|\bm\beta(\rvv)$.
\end{definition}
\begin{proposition}
Let $\bm\beta(\rvv)$ be a function of random variable $\rvv$. $\bm\beta(\rvv)$ is a balancing score if and only if $f(\bm\beta(\rvv))=p(\rt=1|\rvv)\coloneqq e(\rvv)$ for some function $f$ (or more formally, $e(\rvv)$ is $\bm\beta(\rvv)$-measurable). Assume further that $\rvv$ gives weak ignorability, then so does $\bm\beta(\rvv)$.
\end{proposition}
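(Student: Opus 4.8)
The plan is to prove the two assertions in turn, obtaining the second from the first.

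\emph{The equivalence.} For the ``if'' direction, suppose $e(\rvv)=f(\bm\beta(\rvv))$ for some $f$. Since $\bm\beta(\rvv)$ is a function of $\rvv$, conditioning on the pair $(\rvv,\bm\beta(\rvv))$ is the same as conditioning on $\rvv$, so $p(\rt=1|\rvv,\bm\beta(\rvv))=e(\rvv)=f(\bm\beta(\rvv))$. By the tower property, $p(\rt=1|\bm\beta(\rvv))=\E(e(\rvv)\mid\bm\beta(\rvv))=\E(f(\bm\beta(\rvv))\mid\bm\beta(\rvv))=f(\bm\beta(\rvv))$. The two conditional probabilities coincide, and since $\rt$ is binary this is exactly $\rt\independent\rvv|\bm\beta(\rvv)$, i.e.\ $\bm\beta(\rvv)$ is a balancing score. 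For the ``only if'' direction, assume $\bm\beta(\rvv)$ is a balancing score. Then $e(\rvv)=p(\rt=1|\rvv)=p(\rt=1|\rvv,\bm\beta(\rvv))=p(\rt=1|\bm\beta(\rvv))$, where the middle step again uses that $\bm\beta(\rvv)$ is determined by $\rvv$ and the last uses the defining conditional independence; hence $e(\rvv)$ equals the function $b\mapsto p(\rt=1|\bm\beta(\rvv)=b)$ evaluated at $\bm\beta(\rvv)$, so it is $\bm\beta(\rvv)$-measurable.

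\emph{Transfer of weak ignorability.} Now assume in addition that $\rvv$ gives weak ignorability, i.e.\ $\rvy(t)\independent\rt|\rvv$ and $p(\rt|\rvv)>0$ for both $t$. Fix $t$. Since $\rt$ is binary, it suffices to show $p(\rt=1|\rvy(t),\bm\beta(\rvv))=p(\rt=1|\bm\beta(\rvv))$. Using that $\bm\beta(\rvv)$ is a function of $\rvv$ together with iterated expectations and exchangeability of $\rvv$,
\begin{equation*}
p(\rt=1|\rvy(t),\bm\beta(\rvv))=\E\big(p(\rt=1|\rvy(t),\rvv)\mid\rvy(t),\bm\beta(\rvv)\big)=\E\big(e(\rvv)\mid\rvy(t),\bm\beta(\rvv)\big).
\end{equation*}
By the equivalence just proved, $e(\rvv)=f(\bm\beta(\rvv))$ is $\bm\beta(\rvv)$-measurable, so the conditional expectation collapses to $f(\bm\beta(\rvv))$. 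The same computation without $\rvy(t)$ gives $p(\rt=1|\bm\beta(\rvv))=\E(e(\rvv)\mid\bm\beta(\rvv))=f(\bm\beta(\rvv))$; hence the two sides agree and exchangeability holds for $\bm\beta(\rvv)$. Positivity transfers immediately: $p(\rt|\rvv)>0$ forces $e(\rvv)\in(0,1)$ almost surely, hence $p(\rt=1|\bm\beta(\rvv))=f(\bm\beta(\rvv))\in(0,1)$ almost surely, which is positivity of $\bm\beta(\rvv)$.

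\emph{Where the care is needed.} The whole argument is bookkeeping with conditional expectations; the only delicate points are the measure-theoretic facts that conditioning on $(\rvv,\bm\beta(\rvv))$ is conditioning on $\rvv$ and that a $\bm\beta(\rvv)$-measurable function passes unchanged through $\E(\cdot\mid\rvy(t),\bm\beta(\rvv))$ — i.e.\ keeping the $\sigma$-algebra inclusions $\sigma(\bm\beta(\rvv))\subseteq\sigma(\rvv)$ and $\sigma(\bm\beta(\rvv))\subseteq\sigma(\rvy(t),\bm\beta(\rvv))$ straight — plus the observation that the second part genuinely invokes the first, since it is the reduction $e(\rvv)=f(\bm\beta(\rvv))$ that makes the inner conditional expectation collapse. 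Handled at the level of conditional densities, as elsewhere in the paper, the proof is short, and the general version is identical with $\sigma$-algebra inclusions in place of the informal statements.
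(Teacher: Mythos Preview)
Your proof is correct. The paper itself does not supply a proof of this proposition; it is stated as background, essentially quoting the classical result of \citet{rosenbaum1983central}, and the surrounding text only draws the immediate corollaries (that the propensity score $e(\rvv)$ is a balancing score, and that composition with an invertible map preserves the property). Your argument is the standard Rosenbaum--Rubin one: reduce both directions of the equivalence to the identity $p(\rt=1|\rvv)=p(\rt=1|\bm\beta(\rvv))$ via the tower property, and then reuse that identity inside $\E(\,\cdot\mid\rvy(t),\bm\beta(\rvv))$ to transfer exchangeability. The explicit handling of positivity and the remark on $\sigma$-algebra inclusions are welcome additions that the paper omits.
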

Obviously, the \textit{propensity score} $e(\rvv):=p(\rvt=1|\rvv)$, the propensity of assigning the treatment given $\rvv$, is a balancing score (with $f$ be the identity function). Also, given any invertible function $\vv$, the composition $\vv \circ \bm\beta$ is also a balancing score since $f\circ \vv^{-1}(\vv \circ \bm\beta(\rvv))=f(\bm\beta(\rvv))=e(\rvv)$.

Compare the definition of balancing score and prognostic score, we can say balancing score is sufficient for the treatment $\rt$ ($\rt \independent \rvv|\bm\beta(\rvv)$), while prognostic score (Pt-score) is sufficient for the potential outcomes $\y(t)$ ($\y(t) \independent \rvv|\PS_t(\rvv)$). They complement each other; conditioning on either deconfounds the potential outcomes from treatment, with the former focuses on the treatment side, the latter on the outcomes side.

\subsection{VAE, Conditional VAE, and iVAE}
\label{vaes}
VAEs \citep{kingma2019introduction} are a class of latent variable models with latent variable $\rvz$, and observable $\rvy$ is generated by the decoder $p_\vtheta(\rvy|\rvz)$.
In the standard formulation \citep{DBLP:journals/corr/KingmaW13}, the variational lower bound $\mathcal{L}(\rvy;\vtheta,\bm\phi)$ of the
log-likelihood is derived as:
\begin{equation}
\label{elbo_vae}
\begin{split}
      \log p(\rvy) \geq \log p(\rvy) - \KL(q(\rvz|\rvy)\Vert p(\rvz|\rvy)) \\ 
      = \E_{\vz \sim q}\log p_\vtheta(\rvy|\vz) - \KL(q_{\bm\phi}(\rvz|\rvy)\Vert p(\rvz)),
\end{split}
\end{equation}
where $\KL$ denotes KL divergence and the encoder $q_{\bm\phi}(\rvz|\rvy)$ is introduced to approximate the true posterior $p(\rvz|\rvy)$. The decoder $p_\vtheta$ and encoder $q_{\bm\phi}$ are usually parametrized by NNs. We will omit the parameters $\vtheta,{\bm\phi}$ in notations when appropriate.

The parameters of the VAE can be learned with stochastic gradient variational Bayes. 
With Gaussian latent variables, the KL term of $\mathcal{L}$ has closed form, while the first term can be evaluated by drawing samples from the approximate posterior $q_{\bm\phi}$ using the reparameterization trick \citep{DBLP:journals/corr/KingmaW13}, then, optimizing the evidence lower bound (ELBO) $\E_{\rvy \sim \mathcal{D}}(\mathcal{L}(\vy))$ with data $\mathcal{D}$, we train the VAE efficiently. 

Conditional VAE (CVAE) \citep{sohn2015learning,kingma2014semi} adds a conditioning variable $\rvc$, usually a class label, to standard VAE (See Figure \ref{f:vae}).
With the conditioning variable, CVAE can give better reconstruction of each class. The variational lower bound is
\begin{equation}
    \log p(\rvy|\rvc) \geq \E_{\vz \sim q}\log p(\rvy|\vz,\rvc) - \KL(q(\rvz|\rvy,\rvc)\Vert p(\rvz|\rvc)). 
\end{equation}
The conditioning on $\rvc$ in the prior is usually omitted \citep{doersch2016tutorial}, i.e., the prior becomes $\rvz \sim \mathcal{N}(\bm0, \mI)$ as in standard VAE, since the dependence between $\rvc$ and the latent representation is also modeled in the encoder $q$. Moreover, unconditional prior in fact gives better reconstruction because it encourages learning representation independent of class, similarly to the idea of beta-VAE \citep{higgins2016beta}.

As mentioned, \textit{identifiable} VAE (iVAE) \citep{khemakhem2020variational} provides the first identifiability result for VAE, using auxiliary variable $\x$. It assumes $\rvy \independent \x|\rvz$, that is, $p(\rvy|\rvz,\x)=p(\rvy|\rvz)$. The variational lower bound is
\begin{equation}
\begin{split}
    &\log p(\rvy|\x) \geq \log p(\rvy|\x) - \KL(q(\rvz|\rvy,\x)\Vert p(\rvz|\rvy,\x)) \\
    &\E_{\vz \sim q}\log p_{\vf}(\rvy|\vz) - \KL(q(\rvz|\rvy,\x)\Vert p_{\bm T,\bm\lambda}(\rvz|\x)),
\end{split}
\end{equation}
where $\rvy=\vf(\rvz)+\bm\epsilon$, $\bm\epsilon$ is additive noise, and $\rvz$ has exponential family distribution with sufficient statistics $\bm T$ and parameter $\bm \lambda(\x)$. Note that, unlike CVAE, the decoder does \textit{not} depend on $\x$ due to the independence assumption.

Here, \textit{identifiability of the model} means that the functional \textit{parameters} $(\vf,\bm T,\bm\lambda)$ can be identified (learned) up to certain simple transformation. Further, in the limit of $\bm\epsilon \to \bm0$, iVAE solves the nonlinear ICA problem of recovering  $\rvz=\vf^{-1}(\rvy)$.

\subsection{Comparisons to CEVAE}
\paragraph{Motivation}
Our method is motivated by the sufficient scores. In particular, we introduce B*-scores, which are more applicable than prognostic scores \citep{hansen2008prognostic} and balancing scores \citep{rosenbaum1983central}. And our VAE model is directly based on equations \eqref{eq:cate_by_bts} and \eqref{eq:cate_by_bs} which give CATE from B*-scores. There is no need to recover the hidden confounder in our framework.

CEVAE is motivated by exploiting proxy variables, and its intuition is that the hidden confounder $\rconf$ can be recovered by VAE from proxy variables. 

\paragraph{Applicability}
As a result, proxy variable $\x$ is contained as a special case as shown in our Figure \ref{fig:setting}. 

CEVAE assumes a specific structure among the variables (their Figure 1). In particular, their covariate $\x$, 1) can only contain descendant proxies, 2) cannot affect the outcome directly, and 3), as implicitly assumed in their (2) for decoder, cannot affect the treatment also. That is, their problem setting is just our Figure \ref{fig:setting} with only one possibility $\x=\x_{p_d}$. 

\paragraph{Architecture}
Our model is naturally based on \eqref{eq:cate_by_bts}, particularly the independence properties of Bt-score. And as a consequence, our VAE architecture is a natural combination of iVAE and CVAE (see Figure \ref{f:vae}). Our ELBO \eqref{elbo} is derived by standard variational lower bound. 

On the other hand, the architecture of CEVAE is more ad hoc and complex. Its decoder follows the graphical model of descendant proxy mentioned above, but adds an ad hoc component to mimic TARnet \citep{shalit2017estimating}: it uses separated NNs for the two potential outcomes. We tried similar idea on IHDP dataset, and, as we show in Sec.~5.2, it has basically no merits for our method, because we have a principled balancing as in Sec.~\ref{sec:estimation}.

The encoder of CEVAE is more complex. To have post-treatment estimation, $q(\rt|\x)$ and $q(\y|\x,\rt)$ are added into the encoder. As a result, the ELBO of CEVAE has two additional likelihood terms corresponding to the two distributions. However, in our Intact-VAE, post-treatment estimation is given naturally by our standard encoder, thanks to the correspondence between our model and \eqref{eq:cate_by_bts}. 

\paragraph{Justification}
We give identification under specific and general assumptions in Theorem \ref{id_tion}, and consistent estimation in Corollary \ref{th:estimation}, given the consistency of VAE, which is widely assumed in practice. Moreover, we carefully distinguish assumptions on true generating process and assumptions on our model, and identify the assumptions that are important for causality. 

There are few theoretical justifications for CEVAE. Their Theorem 1 directly assumes the joint distribution $p\st(\x,\y,\rt,\rconf)$ including hidden confounder $\rconf$ is recovered, then identification is trivial by using the standard adjustment equation \eqref{eq:id}. The theorem is in essence no more than giving an example where \eqref{eq:id} works. 

However, as we mentioned in Introduction and Sec.~\ref{sec:setup}, the challenge is exactly that the confounder is hidden, unobserved. Many years of work was done in causal inference, to derive conditions under which hidden confounder can be (partially) recovered \citep{greenland1980effect, kuroki2014measurement, miao2018identifying}. In particular, \cite{miao2018identifying} gives the most recent identification result for proxy setting, which requires very specific two proxies structure, and other completeness assumptions on distributions. Thus, it is unreasonable to believe that VAE, with simple descendant proxies, can recover the hidden confounder. 

Moreover, the identifiability of VAE itself is a challenging problem. As mentioned in Introduction and Sec.~\ref{vaes}, \cite{khemakhem2020variational} is the first identifiability result for VAE, but it only identifies equivalence class, not a unique representation function. Thus, it is also unconvincing that VAE can learn a unique latent distribution, without certain assumptions. 

As we show in Sec.~5.1, for relatively simple synthetic dataset, CEVAE can not robustly recover the hidden confounder, even only up to transformation, while our method can (though, again, this is not needed for our method).

\section{Theoretical exposition}

\subsection{Identifiability of model parameters}
The main part of our model identifiability is essentially the same as that of Theorem 1 in \cite{khemakhem2020variational}, but now adapted to the dependency on $t$. Here we give an outline of the proof, and the details can be easily filled by referring to \cite{khemakhem2020variational}.

\begin{theorem}[Model identifiability]
\label{idmodel}
Given the family $p_{\vtheta}(\rvy,\rvz|\rvx,\rt)$ specified by \eqref{model_indep} and \eqref{model_param}\footnote{We specified factorized Gaussians in \eqref{model_param} and they show good performance in our experiments. But our theorems can be extended to general exponential families, see \cite{khemakhem2020variational}.}, for $\rt=t$, assume  

\renewcommand{\labelenumi}{\roman{enumi})}
\begin{enumerate}
\def\theenumi{\roman{enumi})}

\item $\vf(\rvz)$ \footnote{Here we mean $\vf_t(\vz)$. In \textit{this subsection and related Sec.~\ref{notenough}}, we will refer to quantities when $\rt=t$ is given, and we will omit the subscripts $t$ when appropriate.} is injective and differentiable;

\item $\vg$ is fixed (i.e. $\vg$ is in fact \emph{not} a parameter);

\item \label{ass:inv_jac} there exist $2n+1$ points $\vx_0,...,\vx_{2n}$ such that the $2n$-square matrix $\mL \coloneqq [\bgamma_0,...,\bgamma_{2n}]$ is invertible, where $\bgamma_k\coloneqq\blambda(\vx_k)-\blambda(\vx_0)$.
\end{enumerate}

Then, given $\rt=t$, the family is \emph{identifiable} up to an equivalence class. That is, if $p_{\vtheta}(\rvy|\rvx,\rt=t)=p_{\vtheta'}(\rvy|\rvx,\rt=t)$\footnote{$\vtheta'=(\vf',\vh',\vk')$ is another parameter giving the same distribution. In this paper, symbol $'$ (prime) always indicates another parameter (variable, etc.) in the equivalence class.
}, we have the relation between parameters 
\begin{equation}
\label{eq:class}
     \vf^{-1}(\rvy_t) = \diag(\va)\vf'^{-1}(\rvy_t) + \vb \coloneqq \mathcal{A}(\vf'^{-1}(\rvy_t))
\end{equation}
where $\rvy_t \in \mathcal{Y}_t$, the range of $\vf_t$, and $\diag(\va)$ is an invertible $n$-diagonal matrix and $\vb$ is a $n$-vector, both depend on $\bm\lambda_t$. 
\end{theorem}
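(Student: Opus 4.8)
The plan is to reduce the statement to the identifiability theorem for iVAE (Theorem~1 of \citet{khemakhem2020variational}), treating the treatment value $t$ as a fixed label carried in subscripts throughout and letting $\rvx$ play the role of the auxiliary variable. First I would record the relevant exponential-family structure: with $\rt=t$ held fixed, the conditional prior in \eqref{model_param} is an exponential family in $\rvz$ with sufficient statistic $\bm T(\vz)=(z_1,z_1^2,\dots,z_n,z_n^2)^\top$ whose natural parameter is a smooth invertible reparametrization of $\blambda_t(\vx)=(\vh_t(\vx),\vk_t(\vx))\in\R^{2n}$, while the decoder is $\rvy=\vf_t(\rvz)+\beps$ with $\beps\sim\mathcal N(\vzero,\diag(\vg_t))$ and $\vg_t$ fixed by ii). Thus the model matches exactly the setting to which the iVAE argument applies, and in particular the noise has an everywhere-nonzero characteristic function, so the ``a.e.\ nonzero'' hypothesis there is automatic.

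Then I would run the standard chain of reductions, now with $t$ fixed. (a) From $p_{\btheta}(\rvy|\rvx,t)=p_{\btheta'}(\rvy|\rvx,t)$ for all $\vx$, use the change of variables $\rvz\mapsto\vf_t(\rvz)$ -- well defined by injectivity and differentiability of $\vf_t$ (assumption i)) -- to rewrite each side as the convolution of the pushforward of the conditional prior under $\vf_t$ (resp.\ $\vf_t'$) with the fixed Gaussian noise density. (b) Take Fourier transforms and divide out the characteristic function of $\beps$, obtaining equality of the two pushforward densities on the range of $\vf_t$, for every $\vx$. (c) Take logarithms and insert the exponential-family form, yielding an identity linear in the natural parameter: schematically $\langle\bm T(\vf_t^{-1}(\rvy)),\blambda_t(\vx)\rangle-\log Z_t(\blambda_t(\vx))+r_t(\rvy)$ equals the primed version, where $r_t$ gathers the log-volume and noise terms depending only on $\rvy$. (d) Evaluate at the $2n+1$ points $\vx_0,\dots,\vx_{2n}$ of iii), subtract the equation at $\vx_0$ from the remaining $2n$, stack them, and -- using invertibility of the matrix $\mL$ of iii) -- solve to get $\bm T(\vf_t^{-1}(\rvy))=\mM_t\,\bm T(\vf_t'^{-1}(\rvy))+\vd_t$, where $\mM_t$ and $\vd_t$ are assembled only from $\blambda_t,\blambda_t'$ at the chosen points.

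The main obstacle, and the step that is not mere bookkeeping, is passing from this relation between the transformed sufficient statistics to the clean affine relation \eqref{eq:class}. This needs two things, both transcribed from \citet{khemakhem2020variational}: first, invertibility of $\mM_t$, which does \emph{not} follow from invertibility of $\mL$ alone but from a linear-independence argument on the components of $\bm T$ combined with sufficient variability of the auxiliary variable; and second, exploiting the specific $(z_i,z_i^2)$ form of the Gaussian sufficient statistic -- differentiating $\bm T(\vf_t^{-1}(\rvy))=\mM_t\bm T(\vf_t'^{-1}(\rvy))+\vd_t$ and using that $(1,2z_i)$ are linearly independent forces $\mM_t$ to be a coordinatewise rescaling composed with a permutation, which, absorbing the permutation into the indexing, gives $\vf_t^{-1}(\rvy)=\diag(\va)\vf_t'^{-1}(\rvy)+\vb$. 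Finally I would note that $\va,\vb$ are built from $\mM_t,\vd_t$ and hence depend on $t$ only through $\blambda_t$, which is the sharpening over the bare iVAE statement needed for Theorem~\ref{th:bestimation}; everything past step (d) can then be cited from \citet{khemakhem2020variational} with $t$ fixed.
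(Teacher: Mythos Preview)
Your proposal is correct and follows essentially the same route as the paper: both adapt the iVAE argument of \citet{khemakhem2020variational} with $t$ held fixed, passing from equality of $p(\rvy|\rvx,t)$ to equality of the noiseless pushforward densities, evaluating at the $2n+1$ auxiliary points to obtain an affine relation $\vt(\vf^{-1}(\rvy))=\mA\,\vt(\vf'^{-1}(\rvy))+\vc$ between sufficient statistics, and then exploiting the Gaussian $(z_i,z_i^2)$ structure to force the block-diagonal form of $\mA$. The paper is terser---it writes $\mA=\mL^{-T}\mL'^{T}$ and $\vc=\mL^{-T}(\bbeta-\bbeta')$ explicitly and cites \citet{sorrenson2019disentanglement} for the block-structure step---but the content is the same, and your closing remark that $\va,\vb$ depend on $t$ only through $\blambda_t$ is precisely what the paper extracts from the explicit form of $\mA,\vc$ for use in Theorem~\ref{th:bestimation}.
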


The assumptions are all inherited from iVAE. Note that, to have \eqref{eq:class}, we only need the same \textit{observational} distribution $p(\rvy|\rvx,\rt=t)$, but this leaves room for different \textit{latent} distributions. Also, by definition of inverse, we have $\vf' = \vf \circ \mathcal{A}$, and this is the essence of the identifiability. 

Intuitively, if iii) does \textit{not} hold, then the support of $\blambda(\x)$ should be in a $(2n-1)$-dimensional space. Thus, iii) holds easily in practice, if the dimensions of $\blambda(\x)$ are \textit{linearly independent}.

In the proof, all equations and variables should condition on $t$, and we omit the conditioning in notation for convenience. 

\begin{proof}[Proof of Theorem \ref{idmodel}]

Using i) and ii) , we transform $p_{\vf,\bm\lambda}(\rvy|\rvx,t)=p_{\vf',\bm\lambda'}(\rvy|\rvx,t)$ into equality of noiseless distributions, that is, 
\begin{equation}
    q_{\vf',\bm\lambda'}(\rvy)=q_{\vf,\bm\lambda}(\rvy)\coloneqq p_{\bm\lambda}(\vf^{-1}(\rvy)|\rvx,t)vol(\mJ_{\vf^{-1}}(\rvy))\mathbb{I}_{\mathcal{Y}}(\rvy)
\end{equation}
where $p_{\bm\lambda}$ is the Gaussian density function of the conditional prior defined in \eqref{model_param} and $vol(A)=\sqrt{\det A A^T}$.
$q_{\vf',\bm\lambda'}$ is defined similarly to $q_{\vf,\bm\lambda}$.

Then, plug \eqref{model_param} into the above equation, and take derivative on both side at the $\vx^o$ in \ref{ass:inv_jac}, we have
\begin{equation}
    \mathcal{F'}(\rvy)=\mathcal{F}(\rvy)\coloneqq\mL^T\vt(\vf^{-1}(\rvy))-\bbeta
\end{equation}
where $\vt(\rvz)\coloneqq(\rvz,\rvz^2)^T$ is the sufficient statistics of factorized Gaussian, and $\bbeta_t\coloneqq(\alpha_t(\vx_1)-\alpha_t(\vx_0),...,\alpha_t(\vx_{2n})-\alpha_t(\vx_0))^T$ where $\alpha_t(\rvx;\bm\lambda_t)$ is the log-partition function of the conditional prior in \eqref{model_param}. $\mathcal{F'}$ is defined similarly to $\mathcal{F}$, but with $\vf',\bm\lambda',\alpha'$

Since $\mL$ is invertible, we have 
\begin{equation}
    \vt(\vf^{-1}(\rvy))=\mA\vt(\vf'^{-1}(\rvy))+\vc
\end{equation}
where $\mA = \mL^{-T}\mL'^{T}$ and $\vc = \mL^{-T}(\bbeta-\bbeta')$.

The final part of the proof is to show, by following the same reasoning as in Appendix B of \cite{sorrenson2019disentanglement}, that $\mA$ is a sparse matrix such that
\begin{equation}
    \mA=\begin{pmatrix}
    \diag(\va) & \mO \\
    \diag(\vu) & \diag(\va^2)
    \end{pmatrix}
\end{equation}
where $\mA$ is partitioned into four $n$-square matrices. Thus
\begin{equation}
    \vf^{-1}(\rvy)=\diag(\va)\vf'^{-1}(\rvy)+\vb
\end{equation}
where $\vb$ is the first half of $\vc$.
\end{proof}

\subsection{General balancing for Theorem \ref{th:bestimation}}
\label{bcovar}

\begin{theorem}[Theorem \ref{th:bestimation}, generalized]
In Theorem \ref{th:bestimation}, replace \ref{ass:bcovar} with the following, we have the same result.
\renewcommand{\labelenumi}{\roman{enumi})}
\begin{enumerate}
\def\theenumi{\roman{enumi})}
\item \label{ass:g_bcovar} \emph{(General balancing).} There exist $2n+1$ points $\vx_0,...,\vx_{2n}$, \emph{invertible} $2n$-square matrix $\mC$, and $2n$-vector $\vd$, such that $\mL_0\inv\mL_1=\mC$ and $\bbeta_0-\mC^{-T}\bbeta_1=\vd$, where $\mL$ as defined by iii) in Theorem \ref{idmodel}, $\bbeta_t\coloneqq(\alpha_t(\vx_1)-\alpha_t(\vx_0),...,\alpha_t(\vx_{2n})-\alpha_t(\vx_0))^T$, and $\alpha_t(\rvx;\bm\lambda_t)$ is the log-partition function of the conditional prior in \eqref{model_param};
\end{enumerate}
\end{theorem}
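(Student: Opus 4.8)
The plan is to re-run the proof of Theorem~\ref{th:bestimation} essentially verbatim, changing only the single line that used the hard constraint, namely the deduction ``$\mathcal{A}_0=\mathcal{A}_1$ trivially by $\blambda_0=\blambda_1$''. Everything around it is untouched: since the hypotheses of Corollary~\ref{th:estimation} (hence of Theorem~\ref{id_tion}) remain in force, Theorem~\ref{id_tion}~(2) still gives $\z_{\blambda',\rt}\deq\vd_{\rt}(\BS\st)\xt$ and $\z_{\vtheta',\rt}\deq\vd_{\rt}(\BS\st)\xyt$ with $\vd_t\coloneqq(\vf'_t)\inv\circ\vf\st_t$; model identifiability (Theorem~\ref{idmodel}, whose differentiability hypothesis is assumption~i) of Theorem~\ref{th:bestimation}), applied to the truth-matching parameter $(\vf\st,\blambda\st)$ — which lies in the optimal set — and to the learned optimum $(\vf',\blambda')$, still gives $\vf'=\vf\st\circ\mathcal{A}\,|t$, hence $\vd_t=\mathcal{A}_t\inv$. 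Once $\mathcal{A}_0=\mathcal{A}_1=:\mathcal{A}$ is known, $\vd_0=\vd_1=\mathcal{A}\inv$, the two learned B-scores share the common law $\mathcal{A}\inv(\BS\st)$, and the remaining chain of equalities for $\mu'_{1-t}(\vx)$ — change of variable through $\vd_{1-t}$, then the balanced-prior step $p_{\blambda'}(\z|\vx,1-t)=p_{\blambda'}(\z|\vx,t)$ — together with VAE consistency yields $\mu_{\hat t}(\vx)=\mu'_{\hat t}(\vx)=\E_{\gD|\vx}(\vf'_{\hat t}(\vz_{\bphi',t}))$, which is \eqref{bestimator}.

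So the whole burden is the new derivation of $\mathcal{A}_0=\mathcal{A}_1$, which I would extract directly from the proof of Theorem~\ref{idmodel}. At the fixed $2n+1$ points of (\ref{ass:g_bcovar}), comparing $\vf\st$ (latent parameter $\blambda\st$, matrices $\mL\st_t,\bbeta\st_t$) with $\vf'$ (latent parameter $\blambda'$, matrices $\mL'_t,\bbeta'_t$), the affine map $\mathcal{A}_t(\vz)=\diag(\va_t)\vz+\vb_t$ is read off from
\begin{equation*}
\mA_t=(\mL\st_t)^{-T}(\mL'_t)^{T},\qquad \vc_t=(\mL\st_t)^{-T}(\bbeta\st_t-\bbeta'_t),
\end{equation*}
with $\diag(\va_t)$ the upper-left block of $\mA_t$ and $\vb_t$ the first half of $\vc_t$. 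Reading (\ref{ass:g_bcovar}) as a constraint on the entire optimal equivalence class, both $\blambda\st$ and $\blambda'$ satisfy $\mL_1=\mL_0\mC$ and $\bbeta_1=\mC^{T}(\bbeta_0-\vd)$ with the \emph{same} $\mC,\vd$; substituting the first,
\begin{equation*}
\mA_1=(\mL\st_0\mC)^{-T}(\mL'_0\mC)^{T}=(\mL\st_0)^{-T}\mC^{-T}\mC^{T}(\mL'_0)^{T}=(\mL\st_0)^{-T}(\mL'_0)^{T}=\mA_0,
\end{equation*}
and substituting the second into $\vc_1=(\mL\st_1)^{-T}(\bbeta\st_1-\bbeta'_1)$ the $\mC$'s cancel and $\vd$ cancels inside the difference, so $\vc_1=\vc_0$. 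Hence $\va_0=\va_1,\;\vb_0=\vb_1$, i.e.~$\mathcal{A}_0=\mathcal{A}_1$; the special case $\mC=\mI,\vd=\vzero$ is exactly $\blambda_0=\blambda_1$ and recovers the original one-liner.

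The step I expect to be the main obstacle is making rigorous the transfer of (\ref{ass:g_bcovar}), with the \emph{same} $(\mC,\vd)$, from the learned parameters to the truth-matching parameters $(\vf\st,\blambda\st)$: this is precisely the mechanism by which the ``twist'' $(\mC,\vd)$ of the conditional prior gets absorbed into the affine outcome indeterminacy $\mathcal{A}_t$, so it must be encoded as a restriction on the parameterization of $p_{\blambda}(\z|\x,\rt)$ at those fixed points, paralleling the way $\blambda_0=\blambda_1$ is imposed in Theorem~\ref{th:bestimation}. A second, smaller point to pin down — needed both to identify the common law of the two learned B-scores and to justify the balanced-prior equality — is that $\BS\st\mid\x$ carry no residual $\rt$-dependence, which is the regime the theorem targets (in the IHDP and Pokec experiments the operative B-score is, up to negligible slack, a function of the covariates alone) and should be stated as a hypothesis.
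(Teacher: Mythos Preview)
Your plan and the derivation of $\mathcal{A}_0=\mathcal{A}_1$ are correct and mirror the paper's proof: the paper likewise reduces everything to this single equality, writing $\mL_0^{-1}\mL_1=\mL_0'^{-1}\mL'_1=\mC \iff \mA_0=\mA_1$ and then obtaining $\vc_0=\vc_1$ by the same substitution you give. You also correctly flag the key subtlety---that \ref{ass:g_bcovar} must be read as a constraint on the model parameterization so that both $(\vf\st,\blambda\st)$ and $(\vf',\blambda')$ satisfy it with the \emph{same} $(\mC,\vd)$---which is exactly how the paper uses it (and the paper additionally notes that \ref{ass:g_bcovar} implies assumption~\ref{ass:inv_jac} of Theorem~\ref{idmodel} since $\mC$ is invertible, so Theorem~\ref{idmodel} is indeed available).

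One correction: remove the ``balanced-prior step $p_{\blambda'}(\z|\vx,1-t)=p_{\blambda'}(\z|\vx,t)$'' from your chain. That equality was the \emph{expository} prior-based argument in the original proof of Theorem~\ref{th:bestimation} and relied specifically on $\blambda_0=\blambda_1$; under general balancing it need not hold---the paper says so explicitly right after the generalized proof (``the general balancing assumption only ensures the posterior is balanced, while the conditional prior may not''). The conclusion \eqref{bestimator} concerns the encoder representation $\z_{\bphi',t}$, and what carries it is the \emph{posterior} balance $\z_{\btheta',0}\deq\z_{\btheta',1}\deq\mathcal{A}\inv(\BS\st)|_{\x,\y}$ (which you already state correctly), followed by VAE consistency $\z_{\btheta'}=\z_{\bphi'}$. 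Your second concern, about residual $\rt$-dependence in $\BS\st$, is not specific to the generalization: it is present equally in the original Theorem~\ref{th:bestimation} and is handled by the setup of Theorem~\ref{id_tion}, where $\BS\st=\BS\st(\x,\rconf,\rve_b)$ is specified as not a function of $\rt$.
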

Note, $\bbeta_t$ was seen once in the proof of Theorem \ref{idmodel}. Assumption \ref{ass:g_bcovar} is a \textit{necessary and sufficient} condition for $\mathcal{A}_0=\mathcal{A}_1$. Thus, we can see it as a general balancing, because the balanced estimator \eqref{bestimator} is enabled by it.

\begin{proof}[Proof of generalized Theorem \ref{th:bestimation}] 
\ref{ass:g_bcovar} implies \ref{ass:inv_jac} in Theorem \ref{idmodel}, since ${\mC}$ is invertible. So, we have all assumptions for Theorem \ref{idmodel}. 

We only need to further prove $\mathcal{A}_0=\mathcal{A}_1$, and the rest is the same as Theorem \ref{th:bestimation}. Also, it is apparent from below that \ref{ass:g_bcovar} is a \textit{necessary and sufficient} condition for $\mathcal{A}_0=\mathcal{A}_1$, if other assumptions of Theorem \ref{idmodel} are given.



We repeat the core quantities from Theorem \ref{idmodel} here: $\mA_t = \mL_t^{-T}\mL_t'^{T}$ and $\vc_t = \mL_t^{-T}(\bbeta_t-\bbeta'_t)$.

From \ref{ass:g_bcovar}, we immediately have
\begin{equation}
    \mL_0^{-1}\mL_1=\mL_0'^{-1}\mL'_1={\mC} \iff \mA_0=\mA_1
\end{equation}

Also from \ref{ass:bcovar},
\begin{equation}
    \begin{split}
        \mL_0\inv\mL_1&=\mC \iff \\
        \mL_0^{-T}\mC^{-T}&=\mL_1^{-T} \\
        \bbeta_0-\mC^{-T}\bbeta_1&=\bbeta'_0-\mC^{-T}\bbeta'_1=\vd \iff \\
        \mC^{T}(\bbeta_0-\bbeta'_0)&=\bbeta_1-\bbeta'_1
    \end{split}
\end{equation}
Multiply line 2 and 4 of (\theequation), we have $\vc_0=\vc_1$. Now we have $\mathcal{A}_0=\mathcal{A}_1\coloneqq\mathcal{A}$.
\end{proof}

Assumption \ref{ass:g_bcovar} is general despite (or thanks to) the involved formulation. Let us see its generality even under a highly special case: 
$\mC=c\mI$ and $\vd=\bm0$. Then, $\mL_0\inv\mL_1=c\mI$ requires that, $\vh_1(\vx_k)-c\vh_0(\vx_k)$ is the same for $2n+1$ points $\vx_k$. This is easily satisfied except for $n \gg m$ where $m$ is the dimension of $\x$, which \textit{rarely} happens in practice. And, $\bbeta_0-\mC^{-T}\bbeta_1=\vd$ becomes just $\bbeta_1=c\bbeta_0$. This is equivalent to $\alpha_1(\vx_k)-c\alpha_0(\vx_k)$ same for $2n+1$ points, again fine in practice. 

However, the high generality comes with price. First, the general balancing assumption only ensures the posterior is balanced, while the conditional prior may not. Second, verifying \ref{ass:g_bcovar} using data is challenging, particularly with high-dimensional covariate and latent variable. Although we believe fast algorithms for this purpose could be developed, the effort would be nontrivial. 

The applicability under the special case, together with the two possible limitations, motivate us to use the extreme case in \ref{ass:bcovar} of Theorem \ref{th:bestimation}, which corresponds to $\mC=\mI$ and $\vd=\bm0$. Given the above analysis, we are confident that the extreme case usually works better in practice, and our experiments support this.



\subsection{Consistency of VAE}
The following is a refined version of Theorem 4 in \cite{khemakhem2020variational}.
The result is proved by assuming: i) our VAE is flexible enough to ensure the ELBO is tight (equals to the true log likelihood) for some parameters; ii) the optimization algorithm can achieve the \textit{global} maximum of ELBO (again equals to the log likelihood).
\begin{proposition}[Consistency of VAE]
\label{consistency}
Given the VAE model \eqref{model_indep}--\eqref{eq:enc}, and let $p^*(\rvx,\rvy,\rt)$ be the true observational distribution, assume 

\renewcommand{\labelenumi}{\roman{enumi})}
\begin{enumerate}
\def\theenumi{\roman{enumi})}
    \item there exists $(\bar{\vtheta}, \bar{\bm\phi})$ such that $p_{\bar{\vtheta}}(\rvy|\rvx,\rt)=p^*(\rvy|\rvx,\rt)$ and $p_{\bar{\vtheta}}(\rvz|\rvx,\rvy,\rt)=q_{\bar{\bm\phi}}(\rvz|\rvx,\rvy,\rt)$;
    
    \item \label{ass:prime} the ELBO $\E_{\mathcal{D} \sim p^*}(\mathcal{L}(\rvx,\rvy,\rt; \vtheta, \bm\phi))$ \eqref{elbo} can be optimized to its global maximum at $(\vtheta', \bm\phi')$;
\end{enumerate}

Then, in the limit of infinite data, $p_{\vtheta'}(\rvy|\rvx,\rt)=p^*(\rvy|\rvx,\rt)$ and $p_{\vtheta'}(\rvz|\rvx,\rvy,\rt)=q_{\bm\phi'}(\rvz|\rvx,\rvy,\rt)$.
\end{proposition}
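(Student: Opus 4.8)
To prove Proposition~\ref{consistency}, the plan is to run the standard variational consistency argument (the treatment-conditional analogue of Theorem~4 of \cite{khemakhem2020variational}): express the population ELBO as a parameter-free constant minus two nonnegative KL terms, and then extract the conclusion from assumptions i) and ii).

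First I would pass to the infinite-data limit. By the law of large numbers the empirical objective $\E_{\mathcal{D}}(\mathcal{L}(\rvx,\rvy,\rt;\vtheta,\bm\phi))$ converges to the population objective $J(\vtheta,\bm\phi)\coloneqq\E_{p^*(\rvx,\rvy,\rt)}\big(\mathcal{L}(\rvx,\rvy,\rt;\vtheta,\bm\phi)\big)$, so in the limit the maximizer $(\vtheta',\bm\phi')$ of ii) maximizes $J$. Next I would use the exact identity already recorded in \eqref{elbo},
\[
\mathcal{L}(\rvx,\rvy,\rt;\vtheta,\bm\phi)=\log p_{\vtheta}(\rvy|\rvx,\rt)-\KL\big(q_{\bm\phi}(\rvz|\rvx,\rvy,\rt)\,\Vert\,p_{\vtheta}(\rvz|\rvx,\rvy,\rt)\big),
\]
together with the factorization $p^*(\rvx,\rvy,\rt)=p^*(\rvx,\rt)\,p^*(\rvy|\rvx,\rt)$ and the cross-entropy identity $\E_{p^*(\rvy|\rvx,\rt)}[\log p_{\vtheta}(\rvy|\rvx,\rt)]=-H(p^*(\rvy|\rvx,\rt))-\KL\big(p^*(\rvy|\rvx,\rt)\,\Vert\,p_{\vtheta}(\rvy|\rvx,\rt)\big)$. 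Averaging over $p^*(\rvx,\rt)$ and setting $c^*\coloneqq\E_{p^*(\rvx,\rt)}[H(p^*(\rvy|\rvx,\rt))]$, which does not depend on the parameters, this gives
\[
J(\vtheta,\bm\phi)=-c^*-\E_{p^*(\rvx,\rt)}\big[\KL(p^*(\rvy|\rvx,\rt)\Vert p_{\vtheta}(\rvy|\rvx,\rt))\big]-\E_{p^*}\big[\KL(q_{\bm\phi}(\rvz|\rvx,\rvy,\rt)\Vert p_{\vtheta}(\rvz|\rvx,\rvy,\rt))\big].
\]

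The remainder is immediate. Both averaged KL terms are nonnegative, so $J\le -c^*$, with equality if and only if $p_{\vtheta}(\rvy|\rvx,\rt)=p^*(\rvy|\rvx,\rt)$ and $q_{\bm\phi}(\rvz|\rvx,\rvy,\rt)=p_{\vtheta}(\rvz|\rvx,\rvy,\rt)$, both for $p^*$-almost every $(\rvx,\rvy,\rt)$. Assumption i) exhibits $(\bar{\vtheta},\bar{\bm\phi})$ at which both equalities hold, so the bound $-c^*$ is attained and is the global maximum of $J$; assumption ii) says $(\vtheta',\bm\phi')$ also attains this global maximum, hence the equality conditions hold at $(\vtheta',\bm\phi')$, which is precisely $p_{\vtheta'}(\rvy|\rvx,\rt)=p^*(\rvy|\rvx,\rt)$ and $p_{\vtheta'}(\rvz|\rvx,\rvy,\rt)=q_{\bm\phi'}(\rvz|\rvx,\rvy,\rt)$, as claimed.

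I expect the only genuinely delicate points, and where I would concentrate the rigor, to be measure-theoretic rather than conceptual: checking that $c^*$ and the densities entering the KL decomposition are finite, so that the rearrangement of $J$ into a constant minus two KL terms is legitimate (this holds under the nondegenerate Gaussian parameterization \eqref{model_param}--\eqref{eq:enc}), and making the transfer of the maximizer through the infinite-data limit precise, which is a standard M-estimation/uniform-convergence argument. Everything else is bookkeeping.
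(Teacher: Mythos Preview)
Your proposal is correct and follows essentially the same route as the paper's proof: both use the ELBO identity \eqref{elbo} to bound $J$ above by the expected true log-likelihood, invoke assumption i) to show the bound is attained, and then read off the two equalities from assumption ii). Your version is in fact slightly more explicit, spelling out the cross-entropy/KL decomposition of $\E_{p^*}[\log p_{\vtheta}(\rvy|\rvx,\rt)]$ that the paper leaves implicit.
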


\begin{proof}[Proof]
From i), we have $\mathcal{L}(\rvx,\rvy,\rt; \bar{\vtheta}, \bar{\bm\phi})=\log p^*(\rvy|\rvx,\rt)$. But we know $\mathcal{L}$ is upper-bounded by $\log p^*(\rvy|\rvx,\rt)$. So, $\E_{\mathcal{D} \sim p^*}(\log p^*(\rvy|\rvx,\rt))$ should be the global maximum of the ELBO (even if the data is finite).

Moreover, note that, for any $(\vtheta, \bm\phi)$, we have $\KL(p_{\vtheta}(\rvz|\rvx,\rvy,\rt) \Vert q_{\bm\phi}(\rvz|\rvx,\rvy,\rt) \geq 0$ and, in the limit of infinite data, $\E_{\mathcal{D} \sim p^*}(\log p_{\vtheta}(\rvy|\rvx,\rt)) \leq \E_{\mathcal{D} \sim p^*}(\log p^*(\rvy|\rvx,\rt))$. Thus, the global maximum of ELBO is achieved \textit{only} when $p_{\vtheta}(\rvy|\rvx,\rt)=p^*(\rvy|\rvx,\rt)$ and $p_{\vtheta}(\rvz|\rvx,\rvy,\rt)=q_{\bm\phi}(\rvz|\rvx,\rvy,\rt)$.
\end{proof}


\subsection{Details and formal results for example generating processes}
The propositions and proofs here give the intuition for development of Lemma 1 and Theorem \ref{id_tion}.

\begin{proposition}[Identification for Example 1]
\label{id_tion1}
Given the family $p_{{\vf},{\bm\lambda}}(\rvy,\rvz|\rvx,\rt)$ specified by \eqref{model_indep} and \eqref{model_param}, 
and true data distribution $p^*(\rvx, \rvy, \rt,\rconf, \rve)$ generated as in Example 1,
assume, in our model,

 


\renewcommand{\labelenumi}{\roman{enumi})}
\begin{enumerate}
\def\theenumi{\roman{enumi})}
    

    \item $\vf_t$ is injective;
    \item $\vg_t(\rvz) = \bm0$;
    \item $\rvz$ is \emph{not} lower-dimensional than $\PS^*$;

    
    
    
    
    
    
\end{enumerate}

Then, 
$\bar{\z}\coloneqq{\bar{\vf}}^{-1}_{\rt}(\rvy)$ is a Pt-score (and also Bt-score) of $(\rvx,\rconf,\rve_b)$, where 
 $(\rvx, \rvy,\rt, \rconf,\rve) \sim p^*$.


\end{proposition}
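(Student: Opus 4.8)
The plan is to realize $\bar\z$ as an injective reparametrization of the true P-score $\PS^*$, and then transport the sufficiency properties along that reparametrization. First I would unpack the hypotheses. In the identifiable case of Example~1 we have $\rve_a=\bm0$ and every $\vf^*_t$ injective, so $\rvy(t)=\vf^*_t(\PS^*(\rvx,\rconf,\rve_b))$ is a deterministic function of $\PS^*$, and $\PS^*$ is a P-score of $\rvv\coloneqq(\rvx,\rconf,\rve_b)$, i.e.\ $\rvy(t)\independent\rvv\mid\PS^*$ for both $t$ (here this holds trivially, since $\rvy(t)$ is $\sigma(\PS^*)$-measurable). By consistency of counterfactuals $\rvy=\rvy(\rt)=\vf^*_{\rt}(\PS^*)$, hence $\bar\z=\bar\vf^{-1}_{\rt}(\rvy)=\bigl(\bar\vf^{-1}_{\rt}\circ\vf^*_{\rt}\bigr)(\PS^*)$.

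Next I would verify that, for each $t$, the map $\bar\PS_t\coloneqq\bar\vf^{-1}_t\circ\vf^*_t\circ\PS^*$ is a well-defined, measurable, injective function of $\PS^*$ with measurable inverse: $\vf^*_t$ is injective (Example~1) and $\bar\vf_t$ is injective by hypothesis i), so the composition is injective on the range of $\vf^*_t$; assumption iii) ($\rvz$ not lower-dimensional than $\PS^*$) removes the obstruction to such an injection landing in $\R^n$; and hypothesis ii) ($\vg_t=\bm0$, matching $\rve_a=\bm0$) together with $\bar\vf_{\rt}$ covering the observed support of $\rvy$ makes $\bar\vf^{-1}_{\rt}(\rvy)$ defined $p^*$-a.s. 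Consequently $\bar\PS_t$ and $\PS^*$ generate the same $\sigma$-algebra, and $\bar\z=\bar\PS_{\rt}$. Now $\rvy(t)\independent\rvv\mid\PS^*$ immediately yields $\rvy(t)\independent\rvv\mid\bar\PS_t$, so $\bar\PS_0$ is a P0-score and $\bar\PS_1$ a P1-score of $\rvv$; by the definition of Pt-score this means $\bar\z=\bar\PS_{\rt}$ is a Pt-score of $(\rvx,\rconf,\rve_b)$. For the Bt-score claim, the decomposition rule (Proposition~\ref{indep_prop}), with $\rvx$ a coordinate of $\rvv$, gives $\rvy(t)\independent\rvx\mid\bar\PS_t$, hence $\E(\rvy(t)\mid\bar\PS_t,\rvx)=\E(\rvy(t)\mid\bar\PS_t)$; and since $\rvy(t)=\vf^*_t(\PS^*)$ is $\sigma(\bar\PS_t)$-measurable, also $\E(\rvy(t)\mid\bar\PS_t,\rt=t)=\rvy(t)=\E(\rvy(t)\mid\bar\PS_t)$, so the two conditional means coincide, which is exactly the requirement of Definition~\ref{def:bscore} with $\F_t(\bar\PS_t)=\rvy(t)$. (Equivalently, by Proposition~\ref{prop:pscore} and Theorem~\ref{cate_by_bts}, every Pt-score is a Bt-score.)

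I expect the only real obstacle to be the bookkeeping in the second step — ensuring $\bar\vf^{-1}_{\rt}(\rvy)$ is genuinely well-defined for $p^*$-almost every observation, and that $\bar\vf^{-1}_t\circ\vf^*_t$ is injective \emph{with measurable inverse} (not merely injective), so that conditioning on $\bar\PS_t$ is the same as conditioning on $\PS^*$. This is precisely where assumption iii), the degeneracy $\vg_t=\bm0$, the two injectivity hypotheses, and the fact that $\bar\vf$ ranges over the support of the data are all used. Once this is settled, the core of the argument — an injective reparametrization of a P-score is again a P-score, and a Pt-score is automatically a Bt-score — is routine and parallels the intuition, already noted after Example~1, that fitting an injective outcome model recovers the score up to an injective transformation.
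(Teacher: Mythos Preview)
Your proposal is correct and follows essentially the same route as the paper: write $\bar\z=\bar\vf^{-1}_{\rt}(\rvy)=\vd_{\rt}(\PS^*)$ with $\vd_t\coloneqq\bar\vf^{-1}_t\circ\vf^*_t$ injective (using i), iii), and the injectivity of $\vf^*_t$), then note that an injective reparametrization of the P-score $\PS^*$ yields a Pt-score. Your version is more explicit about the $\sigma$-algebra argument and the Bt-score consequence, whereas the paper states these more tersely, but the underlying argument is the same.
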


\begin{proof}[Proof of Proposition \ref{id_tion1}]

We have $\vf^*_{\rt}(\PS^*)=\y$ in the true generating process. From ii), we have $\vf_{\rt}(\rvz)=\rvy$ in our decoder. 
Set $\vf_{\rt}=\bar{\vf}_{\rt}$ (any injective function)
in our decoder, 
then, for any $\bm\lambda$, we have degenerate posterior $p_{\bar{\vf},\bm\lambda}(\z|\rvy,\rvx,\rt)=\delta(\rvz-\bar{\vf}^{-1}_{\rt}(\rvy))$. 



For any injective $\bar{\vf}_t$, since $\vf^*_t$ is also injective, and further from iii), we can define $\vd_t\coloneqq\bar{\vf}_t^{-1}\circ\vf^*_t$, and $\vd_t$ is injective. So we have $\bar{\vf}_{\rt}(\rvz)=\vf^*_{\rt}(\vd_{\rt}^{-1}(\rvz))=\rvy$, where $(\rvx,\rvy,\rt) \sim p^*$. Due to the injectivity of $\vf^*_t$, this can happen only when $\vd_{\rt}^{-1}(\rvz)=\PS^*$ for all $\rvx$. Due to the injectivity of $\vd_t$, for any $\bar{\vf}$, $\rvz=\vd_{\rt}(\PS^*)$ is a Pt-score. 
\end{proof}

The essence of Proposition \ref{id_tion1} can be captured in one equation, which says the counterfactual prediction, given by $\bar{\vf}$ and the respective Pt-score $\bar{\z}$, is the same as truth:
\begin{equation}
\label{essence}
\begin{split}
    \bar{y}_{1-t}\coloneqq \bar{\vf}_{1-t}(\bar{\z}_{1-t})
    &=\bar{\vf}_{1-t}\circ\vd_{1-t}(\PS\st) \\
    &=\vf\st_{1-t}({\PS\st})=y({1-t}).
\end{split}
\end{equation}


Interestingly, this also implies we have the identification of \textit{individual} treatment effects (defined by $\rvy(1)-\rvy(0)$). Also, in the true generating process, $\y$ itself is in fact a Pt-score, since it is a function of $\PS\st$. Both are not possible if we have non-zero outcome noise $\rve_a$.



Next, we examine Example 2. We first give a simple, linear outcome special case where $\BS\st$ is a B-score of $\x$. Other cases prevail.

For Example 2, $\E(\y(t)|\BS\st(\x),\x)=\E(\vf^*_t(\BS^*, \rconf,\rve_b)|\rvx)+\E(\rve_a|\x)$. If, in particular,  $\vf^*_t$ is \textit{linear}, then the first term becomes $\vf^*_t(\BS^*, \E(\rconf|\rvx),\E(\rve_b|\rvx))$. Now, if $\E(\rconf|\rvx)=\E(\rconf|\BS^*,\rt=t)$, $\E(\rve_b|\rvx)=\E(\rve_b|\BS^*,t)$, and $\E(\rve_a|\x)=\E(\rve_a|\BS\st,t)$ (or simply, $\rconf,\rve$ satisfy mean exchangeability \citep{dahabreh2019generalizing} given $\BS\st$), then $\BS^*$ is a B-score with $\F_t(\BS^*) = \vf^*_t(\BS^*, \E(\rvz|\BS^*,t),\E(\rve_b|\BS^*,t))+\E(\rve_a|\BS\st,t)$.


In the following proposition, we assume $\rve_a$ to be Gaussian, to enable the ``noise matching'' condition we use many times in this paper. Note, however, as we mentioned in the main text, this is not a real restriction because the outcome noise in our model can be readily extended to non-Gaussian. 

\begin{proposition}[Identification for Example 2]
\label{id_tion2}
Given the family $p_{{\vf},{\bm\lambda}}(\rvy,\rvz|\rvx,\rt)$ specified by \eqref{model_indep} and \eqref{model_param}, 
and true data distribution $p^*(\rvx, \rvy, \rt,\rconf, \rve)$ generated as in Example 2,
assume 

\renewcommand{\labelenumi}{\roman{enumi})}
\begin{enumerate}
\def\theenumi{\roman{enumi})}
    \item $\F_t$ is injective;
    \item $\rve_a$ is factorized Gaussian with zero-mean and $\bm\sigma_{a,\rt}$;

\hspace{-8mm} And in our model,

    \item $\vf_t$ is injective;
    \item $\vg_t(\rvz)= \bm\sigma_{a,t}^2$ and $\vk_t(\rvx) = \bm0$;
    \item $\rvz$ is \emph{not} lower-dimensional than $\BS^*$;

\end{enumerate}

Then, $p_{\bar{\vf},\bar{\bm\lambda}}(\rvy|\rvx,\rt)=p^*(\rvy|\rvx,\rt)$ implies $\bar{\z}\coloneqq\bar{\vh}_{\rt}(\rvx)$ is a Bt-score (but not Pt-score) of $\rvx$
where $(\rvx, \rt) \sim p^*$.

\end{proposition}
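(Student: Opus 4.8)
The plan is to follow the blueprint of the proof of Theorem~\ref{id_tion}, but with the \emph{degenerate conditional prior} playing the role that the posterior played there, and using the Gaussian outcome noise to cope with the fact that the outcome is now genuinely random given the score. Throughout I work in the case, implicit in assumption~i), where $\BS^*$ is a $B$-score of $\rvx$ with factual regression $\F_t$, and I restrict attention to $\rvx$ in the common support of $p^*$ and the model.

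First I would make the two generating mechanisms explicit. Since $\vk_t(\rvx)=\bm0$, the conditional prior $p_{\bar{\vf},\bar{\bm\lambda}}(\rvz|\rvx,\rt)$ is the point mass at $\bar{\vh}_{\rt}(\rvx)$, so $\bar{\z}=\bar{\vh}_{\rt}(\rvx)$ is a deterministic function of $(\rvx,\rt)$; together with $\vg_t(\rvz)=\bm\sigma_{a,t}^2$ this makes the model's observational law $p_{\bar{\vf},\bar{\bm\lambda}}(\rvy|\rvx,\rt=t)=\mathcal{N}(\rvy;\bar{\vf}_t(\bar{\vh}_t(\rvx)),\diag(\bm\sigma_{a,t}))$. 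On the truth side, the Example~2 process $\rvy=\vf^*_{\rt}(\BS^*(\rvx),\rconf,\rve_b)+\rve_a$ can be rewritten as the injective outcome model $\rvy=\F_{\rt}(\BS^*)+\bm\epsilon^*$ with $\bm\epsilon^*\coloneqq\rvy-\F_{\rt}(\BS^*)$; by part~1) of Lemma~1 (used with $F_t=\F_t$) the fact that $\BS^*$ is a $B$-score gives $\E(\bm\epsilon^*|\BS^*,\rvx)=\E(\bm\epsilon^*|\BS^*,\rt)=\bm0$, and $\F_t$ is injective by~i). Both $\bm\epsilon^*$ and the decoder noise are trivially independent of their respective scores given $(\rvx,\rt)$, because $\BS^*$ and $\bar{\z}$ are $\rvx$-measurable; so all hypotheses of parts~2)--3) of Lemma~1 are in place except noise matching.

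Establishing noise matching is the main obstacle, and the place where assumptions~ii) and~v) enter. The hypothesis $p_{\bar{\vf},\bar{\bm\lambda}}(\rvy|\rvx,\rt)=p^*(\rvy|\rvx,\rt)$ forces $p^*(\rvy|\rvx,\rt=t)$ to be Gaussian with covariance exactly $\diag(\bm\sigma_{a,t})$. But the truth obtains this law by adding the exogenous Gaussian $\rve_a\sim\mathcal{N}(\bm0,\diag(\bm\sigma_{a,t}))$ on top of $\vf^*_t(\BS^*(\rvx),\rconf,\rve_b)$; by uniqueness of characteristic functions, a convolution of an independent law with a nondegenerate Gaussian can reproduce a Gaussian of the \emph{same} covariance only if that law is a point mass. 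Hence $\vf^*_t(\BS^*(\rvx),\rconf,\rve_b)$ is a.s.\ constant given $(\rvx,\rt=t)$, so conditionally on $(\rvx,\rt=t)$ the only remaining randomness in $\rvy$ is $\rve_a$ itself, i.e.\ $\bm\epsilon^*\overset{d}{=}\rve_a\overset{d}{=}\bm\epsilon\,|_{\rvx,\rt}$; combined with the $B$-score identity this also pins down $\E(\rvy(t)|\rvx)=\F_t(\BS^*(\rvx))=\bar{\vf}_t(\bar{\vh}_t(\rvx))$. I expect the fiddly part to be the exogeneity bookkeeping for $\rve_a$ (so that the deconvolution is legitimate and so that the factual and counterfactual regressions of $\rvy(t)$ coincide, which is exactly what the $B$-score definition is built to guarantee).

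With noise matching in hand, I would invoke \ref{inj_model} in Lemma~1 with the truth as the unprimed model ($F_t=\F_t$, $\BS=\BS^*$) and the decoder as the primed one ($F'_t=\bar{\vf}_t$, $\BS'=\bar{\z}$): part~(a) gives $\bar{\z}\overset{d}{=}\vd_{\rt}(\BS^*)\,|_{\rvx,\rt}$ with $\vd_t\coloneqq\bar{\vf}_t^{-1}\circ\F_t$ injective (composition of injectives, by~i) and~iii), and since $\bar{\z}$ and $\BS^*$ are both $\rvx$-measurable this is the pointwise identity $\bar{\vh}_t(\rvx)=\vd_t(\BS^*(\rvx))$, so $\bar{\z}$ is an injective reparametrization of $\BS^*$ on each treatment arm (assumption~v) ensures $\rvz$ has enough coordinates for this). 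Injectivity of $\vd_t$ makes $\sigma(\bar{\z})=\sigma(\BS^*)$ on each arm, so the defining identities of the $B$-score $\BS^*$ transfer verbatim: $\E(\rvy(t)|\bar{\z},\rvx)=\E(\rvy(t)|\BS^*,\rvx)=\F_t(\vd_t^{-1}(\bar{\z}))=\E(\rvy(t)|\BS^*,\rt=t)=\E(\rvy(t)|\bar{\z},\rt=t)$, which says precisely that $\bar{\z}=\bar{\vh}_{\rt}(\rvx)$ is a $Bt$-score of $\rvx$ (with regression $\bar{\vf}_t$). Finally, $\bar{\z}$ is \emph{not} a $Pt$-score: as noted after Example~2, $\rvy(t)\notindependent\rconf\mid\BS^*$ in general, and since $\vd_t$ is injective the same violation of \eqref{eq:ps_indp} holds for $\bar{\z}$, which depends on $\rvx$ alone and hence cannot block the genuine confounder $\rconf$.
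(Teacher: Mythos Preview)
Your proposal is correct and follows essentially the same route as the paper. Both arguments hinge on (i) the degeneracy of the prior under $\vk_t=\bm0$, so that $\bar{\z}=\bar{\vh}_{\rt}(\rvx)$ is deterministic, (ii) matching the Gaussian outcome noise in truth and model to equate the ``mean parts'' given $(\rvx,\rt)$, and (iii) using co-injectivity of $\F_t$ and $\bar{\vf}_t$ to obtain $\bar{\vh}_t=\vd_t\circ\BS^*$ with $\vd_t$ injective, whence the Bt-score property transfers from $\BS^*$ to $\bar{\z}$. The paper packages step~(ii) as an appeal to part~2) of Lemma~1 (``the distributions of means of $\bar{\rvy}$ and $\rvy^*$ should be the same given $\rvx,\rt$''); you instead spell out the underlying deconvolution explicitly via the characteristic-function/Cram\'er argument, which is a legitimate and somewhat more rigorous way to justify exactly the same conclusion. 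Your final paragraph, reading off the Bt-score identities through the injective reparametrization $\vd_t$ and noting the failure of the Pt-score independence~\eqref{eq:ps_indp}, matches the paper's concluding sentence almost verbatim.
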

We set $\vk_t(\rvx) = \bm0$ in the model because we know the B-score is a function of $\x$. Then, our conditional prior is degenerate like the posterior in Example 1, and this is needed to have $\bar{\vh}_{\rt}(\rvx)$ as a Bt-score. But, without the degeneration, the prior still gives Bt-score, as seen in Theorem \ref{id_tion}.

Also, it seems, from the above example, that the non-zero mean of $\rve_a$ as a function of $\BS\st$ can be allowed. This, however, is in essence the same as current formulation, because we can equivalently subtract the mean from noise, and add it to $\vf\st_t$ term. As we can see in Lemma 1 and its proof, the real requirement is that the final $\F_t$ corresponding to the B-score is injective.

\begin{proof}[Proof of Proposition \ref{id_tion2}]
Note, unlike Proposition \ref{id_tion1}, we require $p_{\bar{\vf},\bar{\bm\lambda}}(\rvy|\rvx,\rt)=p^*(\rvy|\rvx,\rt)$. This, together with ii) and $\vg_t(\rvz)= \bm\sigma_{a,t}^2$, implies that the distributions of means of $\bar{\y}$ and $\y\st$ should be the same given $\x,\rt$. That is, in the notion introduced in Lemma 1, $\F_{\rt}(\BS\st)\deq\bar{\vf}_{\rt}(\z)\xt$.

Also, note the conditional prior is degenerate: $p_{\bm\lambda}(\z|\rvx,\rt)=\delta(\rvz-\bar{\vh}_{\rt}(\rvx))$.

Finally, using the co-injectivity of $\F_t,\vf_t$, we have $\BS\st\deq\F_t\inv\circ\bar{\vf}_t(\bar{\vh}_{\rt}(\rvx))\xt$, and, importantly, $\F_t\inv\circ\bar{\vf}_t$ is also injective. Note that, $\BS\st$ is a function of $\x$, so the functions $\BS\st$ and $\bar{\vh}_t$ are different only up to an injective mapping. Thus, $\bar{\vh}_{\rt}(\rvx)$ should be a Bt-score.
\end{proof}

\subsection{Identifiability of representation (is not enough)}
\label{notenough}

Nowhere in the main text refers this and the next subsection, so you can omit them if not interested. However, if reading, you may gain insight of how we came to our final theoretical formulation.

Here we explain that the model identifiability given in Theorem \ref{idmodel} alone is, albeit interesting, not enough for estimation of treatment effects. 

The importance of model identifiability can be seen clearly in the following corollary. That is, given $\rt=t$, the latent representation can be identified up to an invertible element-wise affine transformation. It can be easily understood by noting that, with the small noise and the injective $\vf$, the decoder degenerates to deterministic function and the latent representation $\rvz=\vf^{-1}(\rvy)$.

\begin{corollary}
\label{idrepre}
In Theorem \ref{idmodel}, let $\bm\sigma_{\rvy,t} = \bm0$, then $\rvz = \mathcal{A}_t(\rvz')$. 
\end{corollary}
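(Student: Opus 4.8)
The plan is to read the corollary off Theorem~\ref{idmodel} after one observation: in the noiseless limit the VAE decoder collapses to a deterministic invertible map, so the latent variable becomes a deterministic function of the observable. First I would note that setting $\bm\sigma_{\rvy,t}=\bm0$ (i.e.\ $\vg_t\equiv\bm0$) turns the Gaussian decoder $p_{\vf}(\rvy|\rvz,\rt=t)$ of \eqref{model_param} into the point mass $\delta(\rvy-\vf_t(\rvz))$, so that $\rvy=\vf_t(\rvz)$ holds almost surely on the support. Since $\vf_t$ is injective by assumption~i) of Theorem~\ref{idmodel}, this is invertible on $\mathcal{Y}_t$ (the range of $\vf_t$), giving $\rvz=\vf_t^{-1}(\rvy)$; likewise $\rvz'={\vf'_t}^{-1}(\rvy)$ for the second parameter $\vtheta'$. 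This is exactly the degeneracy of the posterior already used, for instance, in the proof of Proposition~\ref{id_tion1}, where $p_{\bar{\vf},\bm\lambda}(\z|\rvy,\rvx,\rt)=\delta(\rvz-\bar{\vf}^{-1}_{\rt}(\rvy))$.

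Second, I would invoke Theorem~\ref{idmodel} itself: its hypothesis $p_{\vtheta}(\rvy|\rvx,\rt=t)=p_{\vtheta'}(\rvy|\rvx,\rt=t)$ is in force, so \eqref{eq:class} gives $\vf_t^{-1}(\rvy)=\mathcal{A}_t({\vf'_t}^{-1}(\rvy))$ for every $\rvy\in\mathcal{Y}_t$, with $\mathcal{A}_t$ the invertible element-wise affine map depending only on $\blambda_t$. Chaining the two identities yields, for each realization $\rvy$,
\[
\rvz=\vf_t^{-1}(\rvy)=\mathcal{A}_t({\vf'_t}^{-1}(\rvy))=\mathcal{A}_t(\rvz'),
\]
which is the claim. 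As a byproduct, the conditional law of $\rvz$ given $\rvx,\rt=t$ (from the prior, or equivalently from the degenerate posterior) is the pushforward under $\mathcal{A}_t$ of the corresponding law of $\rvz'$.

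The only delicate points — and the closest thing to an obstacle — are bookkeeping about supports and stating the conclusion precisely. One should check $\mathcal{Y}_t=\mathcal{Y}_t'$ so that $\vf_t^{-1}$ and ${\vf'_t}^{-1}$ are defined on the same set: the latent prior $p_{\blambda}(\rvz|\rvx,t)$ is still a nondegenerate Gaussian (we zeroed only the outcome noise $\vg_t$, not the latent variance $\vk_t$), hence the support of $\rvy|\rvx,\rt=t$ is all of $\vf_t(\R^n)=\mathcal{Y}_t$, and equality of the observational distributions forces $\vf_t(\R^n)=\vf'_t(\R^n)$. I would also make explicit that ``$\rvz=\mathcal{A}_t(\rvz')$'' is the pathwise identity obtained by feeding the same $\rvy$ through both inverse decoders, from which the distributional version follows; no further estimate is needed.
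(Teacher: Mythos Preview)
Your proof is correct and follows essentially the same approach as the paper's: observe that $\bm\sigma_{\rvy,t}=\bm0$ collapses the decoder to $\rvy=\vf_t(\rvz)$, invert using injectivity of $\vf_t$, and substitute into \eqref{eq:class}. You are in fact slightly more careful than the paper about the support issue: the paper's own proof works only on the common support of $\rvy,\rvy'$ and flags as a separate technical caveat that the relation may fail off this set (to be resolved later by consistency of learning), whereas you argue directly that the nondegenerate Gaussian prior on $\rvz$ forces $\mathcal{Y}_t=\mathcal{Y}_t'$.
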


The good news is that, all the possible latent representations in our model are equivalent if we consider their independence relationships with any random variables, because any two of them are related by an \textit{invertible} mapping. However, the bad news is that, this holds only given $\rt=t$, while the definition of B/P-score involves both $t$. 

Consider how the \textit{recovered} $\rvz'$ would be used. For a control group ($t=0$) data point $(\vx, y, 0)$, the real challenge under finite sample is to predict the counterfactual outcome $y(1)$. Taking the observation, the encoder will output a posterior sample point $\vz'_0=\vf_0'^{-1}(y)=\mathcal{A}_0^{-1}(\vz_0)$ (with zero outcome noise, the encoder degenerates to a delta function: $q(\rvz|\vx, y, 0)=\delta(\rvz-\vf_0'^{-1}(y))$). Then, we should do \textit{counterfactual inference}, using decoder with counterfactual assignment $t=1$: $y_1'=\vf'_1(\vz'_0)=\vf_1\circ\mathcal{A}_1(\mathcal{A}_0^{-1}(\vz_0))$. This prediction can be arbitrary far from the truth $y(1)=\vf_1(\vz_0)$, due to the difference between $\mathcal{A}_1$ and $\mathcal{A}_0$. More concretely, this is because when learning the decoder, only the posterior sample of the treatment group ($t=1$) is fed to $\vf'_1$, and the posterior sample is different to the true value by the affine transformation $\mathcal{A}_1$, while it is $\mathcal{A}_0$ for $\vz_0'$.

Now we know what we need: $\mathcal{A}_0=\mathcal{A}_1$ so that the equivalence of independence holds unconditionally; and, there exists at least one representation that is indeed a B-score. Then, \textit{any} representation in our model will be a B-score. These indeed are what we have in Sec.~\ref{sec:estimation}.

\begin{proof}[Proof of Corollary 1]
In this proof, all equations and variables should condition on $t$, and we omit the conditioning in notation for convenience. 

When $\bm\sigma_{\rvy} = \bm0$, the decoder degenerates to a delta function: $p(\rvy|\rvz)=\delta(\rvy-\vf(\rvz))$, we have $\rvy=\vf(\rvz)$ and $\rvy'=\vf'(\rvz')$. 
For any $\vy$ in the common support of $\rvy,\rvy'$, there exist a \textit{unique} $\vz$ and a \textit{unique} $\vz'$ satisfy $\vy=\vf(\vz)=\vf'(\vz')$ (use injectivity). Substitute $\vy=\vf(\vz)$ into the l.h.s of \eqref{eq:class}, and $\vy=\vf'(\vz')$ into the r.h.s, 
so we get $\rvz=\mathcal{A}(\rvz')$. The result for $\vf$ follows. 
\end{proof}
A technical detail is that, $\vz, \vz'$ might not always be related by $\mathcal{A}$, because we used the \textit{common} support of $\rvy,\rvy'$ in the proof. Thus, the relation holds for partial supports of $\rvz, \rvz'$ correspond to the common support of $\rvy,\rvy'$. This problem disappears if we have the a consistent learning method (see Proposition \ref{consistency}).

\subsection{Balancing covariate and its two special cases}
Here we demonstrate part of our old, limited, theoretical formulation, and extract some insights from it.

The following definition was used in the old theory.
The importance of this definition is immediate from the definition of balancing score, that is, if a balancing \textit{covariate} is also a function of $\rvv$, then it is a balancing \textit{score}.

\begin{definition}[Balancing covariate]
Random variable $\rvx$ is a {\em balancing covariate} of random variable $\rvv$ if $\rt \independent \rvv|\rvx$. We also simply say $\rvx$ is \textit{balancing} (or \textit{non}-balancing if it does not satisfy this definition).
\end{definition}

Given that a balancing score of the true (hidden or not) confounder is sufficient for weak ignorability, a natural and interesting question is that, does a balancing covariate of the true confounder also satisfies weak ignorability? The answer is \textit{no}. To see why,
we give the next Proposition indicating that a balancing covariate of the true confounder might \textit{not} satisfy \textit{exchangeability}. 


\begin{proposition}
Let $\rvx$ be a balancing covariate of $\rvv$. If $\rvv$ satisfies exchangeability \emph{and} $\rvy(t) \independent \rvx|\rvv,\rt$, then so does $\rvx$.
\end{proposition}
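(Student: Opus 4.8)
The plan is to obtain the exchangeability of $\rvx$, namely $\rvy(t)\independent\rt\,|\,\rvx$ for both $t$, purely from the three given conditional independences by chaining the graphoid axioms of Proposition~\ref{indep_prop}. The available facts are: (a) $\rt\independent\rvv\,|\,\rvx$ (balancing covariate); (b) $\rvy(t)\independent\rt\,|\,\rvv$ (exchangeability of $\rvv$); and (c) $\rvy(t)\independent\rvx\,|\,\rvv,\rt$. Everything is asserted for both $t$, so the final conclusion will hold for both $t$ as well.

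First I would apply contraction to (b) and (c), with $\rvy(t)$ playing the role of the common left variable, $\rt$ that of ``$\rvy$'', $\rvv$ that of ``$\rvz$'', and $\rvx$ that of ``$\rvw$''; since $\rvy(t)\independent\rt\,|\,\rvv$ and $\rvy(t)\independent\rvx\,|\,\rt,\rvv$, this yields $\rvy(t)\independent\rt,\rvx\,|\,\rvv$. Weak union then gives $\rvy(t)\independent\rt\,|\,\rvx,\rvv$, equivalently $\rt\independent\rvy(t)\,|\,\rvx,\rvv$.

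Next I would feed this together with the balancing assumption (a) into contraction a second time, now with $\rt$ as the common left variable, $\rvv$ as ``$\rvy$'', $\rvx$ as ``$\rvz$'', and $\rvy(t)$ as ``$\rvw$'': from $\rt\independent\rvv\,|\,\rvx$ and $\rt\independent\rvy(t)\,|\,\rvv,\rvx$ we get $\rt\independent\rvy(t),\rvv\,|\,\rvx$. Finally, decomposition drops $\rvv$ and leaves $\rt\independent\rvy(t)\,|\,\rvx$, which is exactly exchangeability of $\rvx$.

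There is no analytic difficulty here; the work is entirely bookkeeping with the three rules. The one point that needs care is correctly matching the asymmetric contraction pattern: the second use must place the \emph{treatment} $\rt$ (not the potential outcome) in the role of the conjuncts' shared variable, which forces one to first rewrite $\rvy(t)\independent\rt\,|\,\rvx,\rvv$ symmetrically as $\rt\independent\rvy(t)\,|\,\rvx,\rvv$ before applying it. It is also worth a remark that hypothesis (c) is doing real work: were $\rvx$ a function of $\rvv$ it would be automatic, but a balancing covariate need not be, and (c) is precisely what repairs the argument in that case.
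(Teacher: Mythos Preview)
Your proposal is correct and follows essentially the same argument as the paper's proof: contraction on (b) and (c) to get $\rvy(t)\independent\rt,\rvx\,|\,\rvv$, weak union to isolate $\rvy(t)\independent\rt\,|\,\rvx,\rvv$, a second contraction with (a) to obtain $\rt\independent\rvy(t),\rvv\,|\,\rvx$, and finally decomposition. Your explicit bookkeeping of the symmetry flip before the second contraction and the closing remark on the role of hypothesis (c) are helpful additions, but the logical skeleton is identical.
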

The proof will use the properties of conditional independence (Proposition \ref{indep_prop}).
\begin{proof}[Proof]
Let $\rvw\coloneqq \rvy(t)$ for convenience. We first write our assumptions in conditional independence, as \textit{A1}. $ \rt \independent \rvv|\rvx$ (balancing covariate), \textit{A2}. $\rvw \independent \rt| \rvv$ (exchangeability given $\rvv$), and \textit{A3}. $\rvw \independent \rvx|\rvv,\rt$.

Now, from \textit{A2} and \textit{A3}, using contraction, we have $\rvw \independent \rvx,\rt|\rvv$, then using weak union, we have $\rvw \independent \rt|\rvx,\rvv$. From this last independence and \textit{A1}, using contraction, we have $\rt \independent \rvv,\rvw|\rvx$. Then $\rt \independent \rvw|\rvx$ follows by decomposition.
\end{proof}

Given this proposition, we know assumptions
\begin{equation}
\label{old_ass}
    \begin{split}
        \text{i) }& \rvy(t) \independent \rt|\rvv \text{ (exchangeability given $\rvv$), } \\
        \text{ii) }& \rt \independent \rvv|\rvx \text{ ($\x$ is a balancing covariate of $\rvv$), and} \\
        \text{iii) }& \rvy \independent \rvx|\rvv,\rt
    \end{split}
\end{equation}
do not imply exchangeability given $\x$, thus seem to be reasonable. 
Note the independence $\rvy(t) \independent \rvx|\rvv,\rt$ assumed in the above proposition implies, but is not implied by, $\rvy \independent \rvx|\rvv,\rt$.
This is because, in general, $\rvy(0) \independent \rvx|\rvv,\rt=1$ and $\rvy(1) \independent \rvx|\rvv,\rt=0$ do not hold.

The assumptions in \eqref{old_ass} were assumed by our old theory, with $\rvv$ is hidden confounder $\rconf$ plus observed confounder $\x_c$. And also note that, iii) is the independence shared by Bt-score.



We examine two important special cases of balancing covariate, which provide further evidence that balancing covariate does not make the problem trivial.

\begin{definition}[Noiseless proxy]
Random variable $\rvx$ is a noiseless proxy of random variable $\rvv$ if $\rvv$ is a function of $\rvx$ ($\rvv=\bm\omega(\rvx)$). 
\end{definition}
Noiseless proxy is a special case of balancing covariate because if $\rvx=\vx$ is given, we know $\vv=\bm\omega(\vx)$ and $\bm\omega$ is a deterministic function, then $p(\rvv|\rvx=\vx)=p(\rvv|\rvx=\vx,\rt)=\delta(\rvv-\bm\omega(\vx))$.
Also note that, a noiseless proxy always has higher dimensionality than $\rvv$, or at least the same. 

Intuitively, if the value of $\rvx$ is given, there is no further uncertainty about $\vv$, so the observation of $\vx$ may work equally well to adjust for confounding. But, as we will see soon, a noiseless proxy of the true confounder does \textit{not} satisfy positivity. 

\begin{definition} [Injective proxy]
Random variable $\rvx$ is an injective proxy of random variable $\rvv$ if $\rvx$ is an injective function of $\rvv$ ($\rvx=\bm\chi(\rvv)$, $\bm\chi$ is injective). 
\end{definition}
Injective proxy is again a special case of noiseless proxy, since, by injectivity, $\rvv=\bm\chi^{-1}(\rvx)$, i.e. $\rvv$ is also a function of $\rvx$. 

Under this very special case, that is, if $\rvx$ is an injective proxy of the true confounder $\rvv$, we finally have $\rvx$ is a balancing score and satisfies weak ignorability, since $\rvx$ is a balancing covariate and a function of $\rvv$. To see this in another way, let $f=e \circ \bm\chi^{-1}$ and $\bm\beta=\bm\chi$ in Proposition \ref{bscore}, then $f(\rvx)=f(\bm\beta(\rvv))=e(\rvv)$. By weak ignorability of $\rvx$, \eqref{eq:id} has a simpler counterpart $\mu_t(\vx) = \E(\rvy(t)|\rvx=\vx) = \E(\rvy|\rvx=\vx,\rt=t)$. Thus, a naive regression of $\rvy$ on $(\rvx, \rt)$ will give a valid estimator of CATE and ATE.  

However, a noiseless but \textit{non}-injective proxy is \textit{not} a balancing score, in particular, positivity might \textit{not} hold. Here, a naive regression will not do. This is exactly because $\bm\omega$ is non-injective, hence multiple values of $\rvx$ that cause non-overlapped supports of $p(\rt=t|\rvx=\vx),t=0,1$ might be mapped to the same value of $\rvv$. An extreme example would be $\rt=\mathbb{I}(\rx>0),\rz=|\rx|$. We can see $p(\rt=t|\rx)$ are totally non-overlapped, but $\forall t,z \neq 0: p(\rt=t|\rz=z)=1/2$.

So far, so good. In the end, what is the problem of balancing covariate? Here it is. \textit{If} the we have the positivity of $\x$ ($p(\rt|\x)>0$ always), then, using the positivity and balancing to get $p(\conf|\vx)=p(\conf|\vx,\rt=t)$ for all $\vx$, we follow \eqref{eq:id},
\begin{equation}
\begin{split}
    \mu_t(\vx)&=\textstyle \int (\int p(y|\conf,\vx,t)ydy)p(\conf|\vx)d\conf \\
    &=\textstyle \int (\int p(y|\conf,\vx,t)ydy)p(\conf|\vx,\rt=t)d\conf \\
    &=\textstyle \int (\int p(y,\conf|\vx,t)d\conf)ydy=\E(\y|\vx,t).
\end{split}
\end{equation}
Naive estimator just works! Thus, \textit{if} $\x$ indeed was a balancing covariate of true confounder, we gave a better method than naive estimator only in the sense that it works without positivity of $\x$. It seems what our old theory really addressed was lack of positivity, another important issue in causal inference \citep{d2020overlap}, but not confounding.

This limited formulation, together with the great experimental performance of our method, motivated us to develop a much more general theory, that is, the theory based on B*-scores in the main text.

There are several lessons learned from the old formulation. First, there may exist cases that exchangeability given $\x$ fails to hold even when positivity of $\x$ holds, but the naive estimator still works. This is related to the fact that the conditional independence based on which balancing score/covariate are defined is not necessary for identification. And we should be able to find weaker but still sufficient conditions for identification, and Bt-score is an example. Second, balancing covariate assumption in \eqref{old_ass} is strong, though may not make a trivial problem. It basically means $\x$, only one of the observables, is sufficient information for treatment assignment. This inspires us to consider both $\x,\y$ in our theory, as in the Bt-score given by our posterior and encoder.



\section{Details and additional results for experiments}

\subsection{Synthetic data}

\begin{figure*}[h]
    \centering
    \includegraphics[width=.9\textwidth]{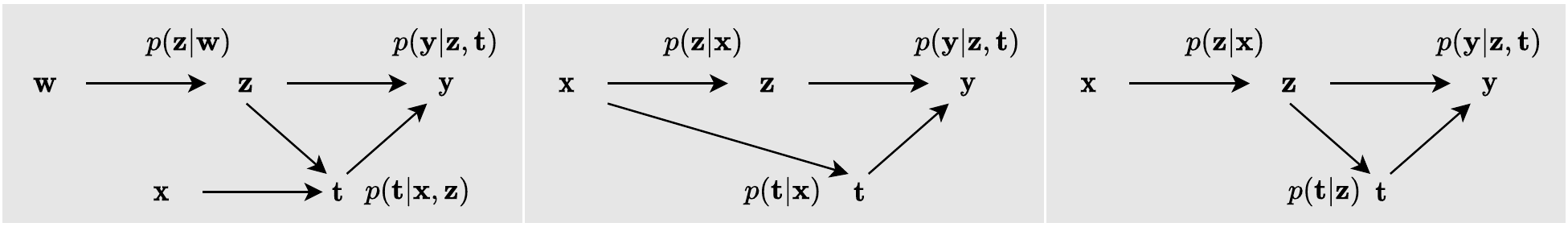}
    \caption{\footnotesize{Graphical models for generating synthetic datasets. From left: IV $\rvx$, ignorability given $\rvx$, and proxy $\rvx$. Note that in the latter two cases, reversing the arrow between $\rvx, \rvz$ does not change any independence relationships, and causal interpretations of the graphs remain the same.}}
    \label{fig:art}
\end{figure*}

We generate data following \eqref{art_model} with $\rz$, $\rvy$ 1-dimensional and $\rvx$ 3-dimensional. $\mu_i$ and $\sigma_i$ are randomly generated in range $(-0.2, 0.2)$ and $(0, 0.2)$, respectively. 

We generate two kinds of outcome models, depending on the type of $f$: linear and nonlinear outcome models use random linear functions and NNs with invertible activations and random weights, respectively. 

We experiment on \textit{three different causal settings} by variations on \eqref{art_model} as following. Also see Figure \ref{fig:art} for graphical models of these three cases.
Instead of taking inputs $\rvx, \rz$ in $l$, we consider two special cases: $l\coloneqq l(\rvx)$, then $\rvx$ fully adjusts for confounding, we are in fact \textit{unconfounded}; and $l\coloneqq l(\rz)$, then we have unobserved confounder $\rz$ and \textit{proxy} $\rvx$ of $\rz$. To introduce $\rvx$ as \textit{instrumental variable}, we generate another 1-dimensional random source $\rw$ in the same way as $\rvx$, and use $\rw$ instead of $\rvx$ to generate $\rz|\rw \sim \mathcal{N}(h(\rw), \beta k(\rw))$. Except indicated above, other aspects of the models are specified by \eqref{art_model}. 

\begin{wrapfigure}{r}{0.25\textwidth}

\vspace{-.4in}
  \begin{center}
    \includegraphics[width=0.25\textwidth]{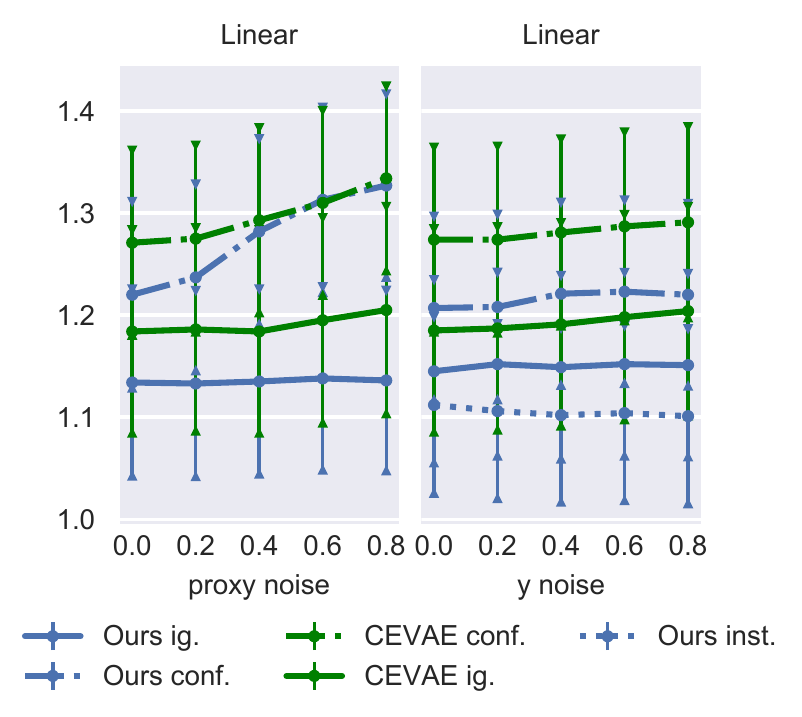}
  \end{center}
  \vspace{-.2in}
  
  \caption{\footnotesize{$\sqrt{\epsilon_i}$ on linear synthetic dataset. Error bar on 100 random models. We adjust one of $\alpha,\beta$ at a time. Results for ATE and post-treatment are similar.}}
\vspace{-.1in}
\label{lin_art}
\end{wrapfigure}

We adjust the outcome and proxy noise level by $\alpha,\beta$ respectively.
The output of $f$ is normalized by $C\coloneqq\Var_{\{\mathcal{D}|\rt=t\}}(f(\rz))$. This means we need to use $0 \leq \alpha < 1$ to have a reasonable level of noise on $\rvy$ (the scales of mean and variance are comparable). Similar reasoning applies to $\rz|\rvx$; outputs of $h,k$ have approximately the same range of values since the functions' coefficients are generated by the same weight initializer.

As shown in Figure \ref{lin_art}, our method again constantly outperforms CEVAE under linear outcome models. Interestingly, linear outcome models seem harder for both methods\footnote{Note that, after generating the outcomes and before the data is used, we normalize the distribution of ATE of the 100 generating models, so the errors on linear and nonlinear settings are basically comparable.}. While we did not dig into this point because this would be a digress from our purpose, we give two possible reasons: 1)  under similar noise levels, the observed outcome values under nonlinear outcome models might be more informative about the values of $\rz$, because nonlinear models are often steeper than linear models for many values of $\rz$; 2) the two true linear outcome models for $t=0,1$ are more similar, particularly when the two potential outcomes are in small and similar ranges, and it is harder to distinguish and learning the two outcome models.

You can find more plots for latent recovery at the end of the paper.

\subsection{IHDP} 
IHDP is based on an RCT where each data point represents a child with 25 features about their birth and mothers. \texttt{Race} is introduced as a confounder by artificially removing all treated children with nonwhite mothers. There are 747 subjects left in the dataset. The outcome is synthesized by taking the covariates (features excluding \texttt{Race}) as input, hence \textit{unconfoundedness} holds given the covariates. Following previous work, we split the dataset by 63:27:10 for training, validation, and testing.

The generating process is as following \citep[Sec.~4.1]{hill2011bayesian}.
\begin{equation}
    \ry(0) \sim \mathcal{N}(e^{\va^T(\x+\vb)},1),\quad \ry(1) \sim \mathcal{N}(\va^T\x-o,1),
\end{equation}
where $\vb$ is a constant bias with all elements equal to $0.5$, $\va$ is a random coefficient, and $o$ is a random parameter adjusting degree of overlapping between the treatment groups.

As we can see, this is an invertible noise model with $\va^T\x$ as the true (necessary) B-score.


\subsection{Pokec} 
To extract information from the network structure, we use Graph Convolutional Network (GCN) \citep{DBLP:conf/iclr/KipfW17} in conditional prior and encoder of Intact-VAE. A difficulty is that, the network $\mG$ and covariates $\mX$ of \textit{all} users are always needed by GCN, regardless of whether it is in training, validation, or testing phase. However, the separation can still make sense if we take care that the treatment and outcome are used only in the respective phase, e.g., $(y_m,t_m)$ of a testing user $m$ is only used in testing.

GCN takes the network matrix $\mG$ and the \textit{whole} covariates matrix $\mX \coloneqq (\vx_1^T,\dotsc,\vx_M^T)^T$, where $M$ is user number, and outputs a representation matrix $\mR$, again for all users. During training, we \textit{select} the rows in $\mR$ that correspond to users in training set. Then, treat this \textit{training representation matrix} as if it is the covariates matrix for a non-networked dataset, that is, the downstream networks in conditional prior and encoder are the same as in the other two experiments, but take $(\mR_{m,:})^T$ where $\vx_m$ was expected as input. And we have respective selection operations for validation and testing. We can still train Intact-VAE including GCN by Adam, simply setting the gradients of non-seleted rows of $\mR$ to 0. 

Note that GCN cannot be trained using mini-batch, instead, we perform batch gradient decent using full dataset for each iteration, with initial learning rate $10^{-2}$. We use dropout \citep{srivastava2014dropout} with rate 0.1 to prevent overfitting.

The pre-treatment $\sqrt{\epsilon_i}$ for \texttt{Age}, \texttt{District}, and \texttt{Join date} confounders are 1.085, 0.686, and 0.699 respectively, practically the same as the ATE \textit{errors}.


\subsection{Additional plots on synthetic datasets}
See last pages.

\section{Discussions}

As we see in the balanced estimator \eqref{bestimator}, our representation $\z$ is in fact capable of \textit{counterfactual inference}: $\hat{t}$ can be different to factual $\rt=t$. Experiments on counterfactual generation, like those in \citet[CausalGAN]{kocaoglu2017causalgan} and \citet[CausalVAE]{yang2020causalvae}, are on the way. 

Since our method works without the recovery of either hidden confounder or true score distribution, we often cannot see apparent relationships between recovered latent representation and the true hidden confounder/scores. It would be nice to directly see the learned representation preserves causal properties, for example, by some causally-specialized metrics, e.g. \cite{suter2019robustly}. 

Despite the formal requirement in Theorem \ref{idmodel} of fixed distribution of noise on $\rvy$, inherited from \cite{khemakhem2020variational}, the experiments show evidence that our method can learn the outcome noise. We observed that, in most cases, allowing the noise distribution to be learned depending on $\rvz,\rt$ improves performance.  Theoretical analysis of this phenomenon is an interesting direction for future work. 

We conjecture that, it is possible to extend model identifiability to conditional noise models $\vg_t(\z)$.
And we expect that the noise on $\y$ can also be identified up to some eq.~class (or joint eq.~class together with $\vf$). In that case, the model identifiability may also be sufficient for causal inference, under some respective assumptions on true generating process (note the current \ref{ass:gen} in Theorem \ref{id_tion} also seems stronger than needed), and our current \ref{ass:gen_n} and \ref{ass:model_latent} in Theorem \ref{id_tion} can be \textit{relaxed} to large extent. Similarly to current $\vf$, we may have identification for a general class of noises. 

Also, our causal theory does not in principle require continuous latent distributions, though in Theorem \ref{th:bestimation} for balanced estimation, differentiability of $\vf$ is inherited from iVAE. Given the fact that currently all nonlinear ICA based identifiability requires differentiable mapping between the latent and observables, directly based on it, theoretical extensions to \textit{discrete} latent variable would be challenging. However, as we see in Sec.~\ref{sec:estimation} and \ref{bcovar}, what is essential for CATE identification is the \textit{same} transformation between true and recovered score distribution for both $t$, but the transformation needs \textit{not} to be affine, and, possibly, neither injective. This opens directions for future extensions, based not necessarily on nonlinear ICA.


\clearpage
\onecolumn
\begin{figure}[h] 
    \vspace*{-0.1in}
    \centering
    \includegraphics[width=.9\textwidth]{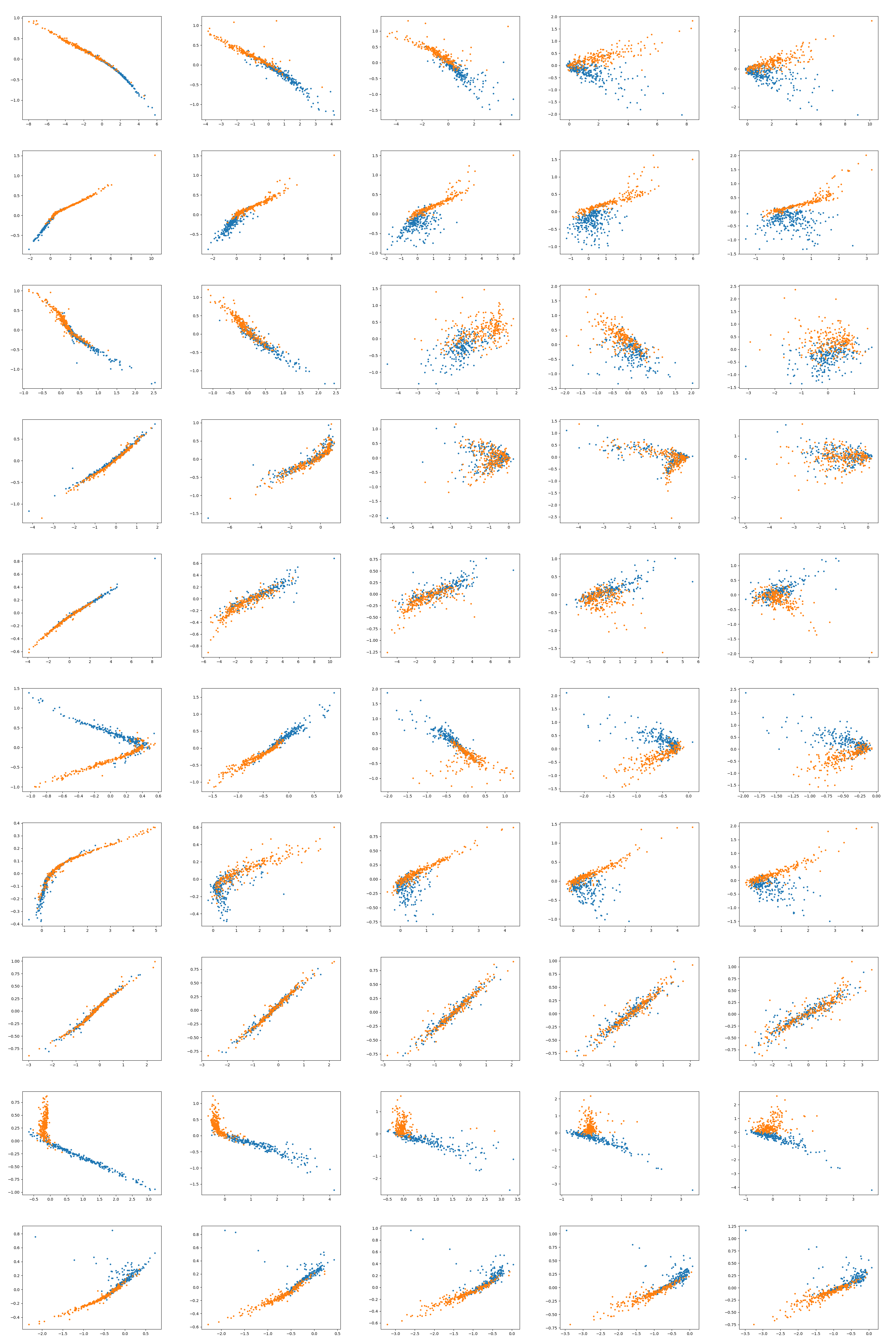}
    \vspace*{-0.1in}
    \caption{Plots of recovered-true latent under \textit{unobserved confounding}. Rows: first 10 nonlinear random models, columns: \textit{proxy} noise level.}
    \vspace*{-0.1in}
\end{figure}

\newpage

\begin{figure}[h] 
    \vspace*{-0.1in}
    \centering
    \includegraphics[width=.9\textwidth]{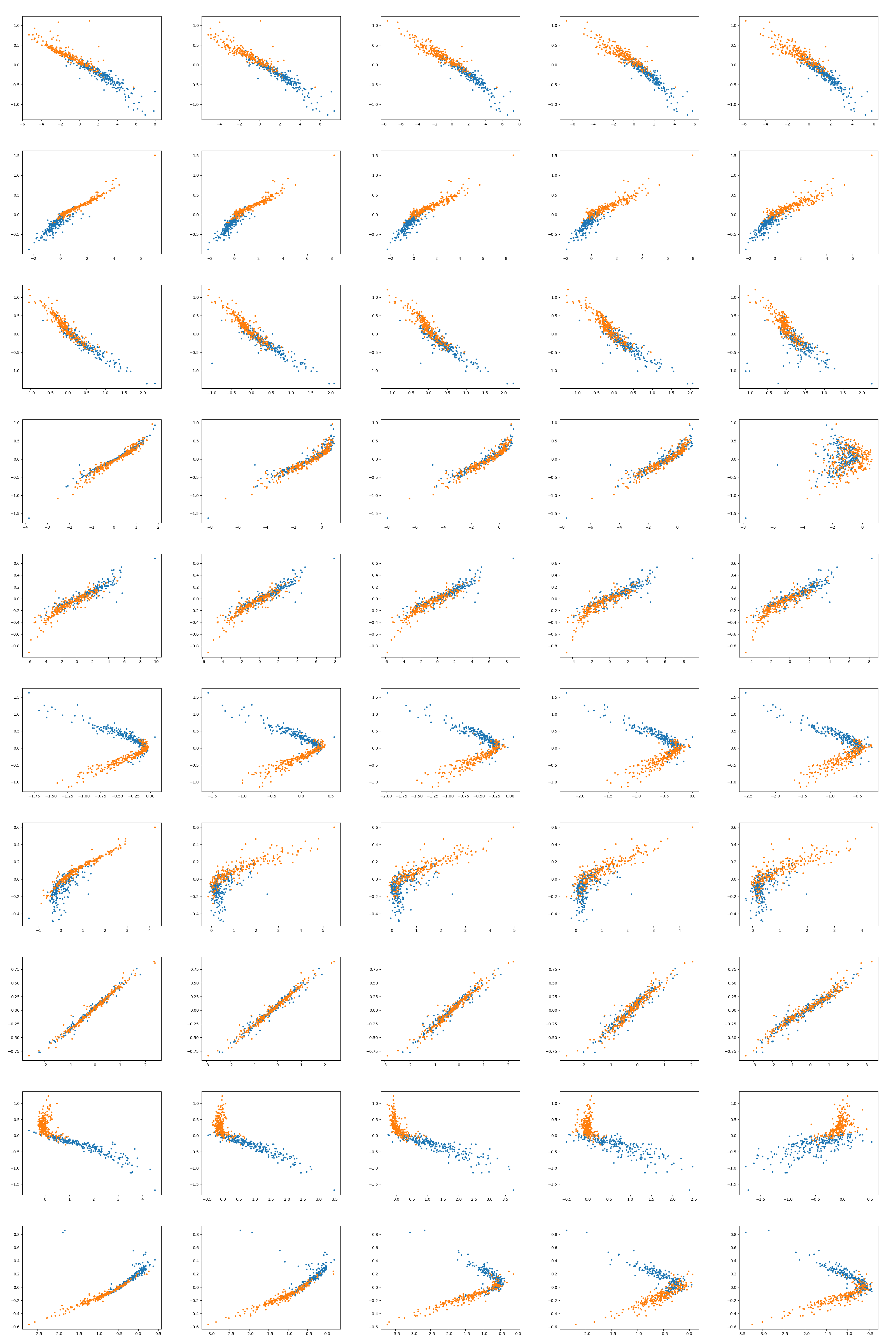}
    \vspace*{-0.1in}
    \caption{Plots of recovered-true latent under \textit{unobserved confounding}. Rows: first 10 nonlinear random models, columns: \textit{outcome} noise level.}
    \vspace*{-0.1in}
\end{figure}

\begin{figure}[h] 
    \vspace*{-0.1in}
    \centering
    \includegraphics[width=.9\textwidth]{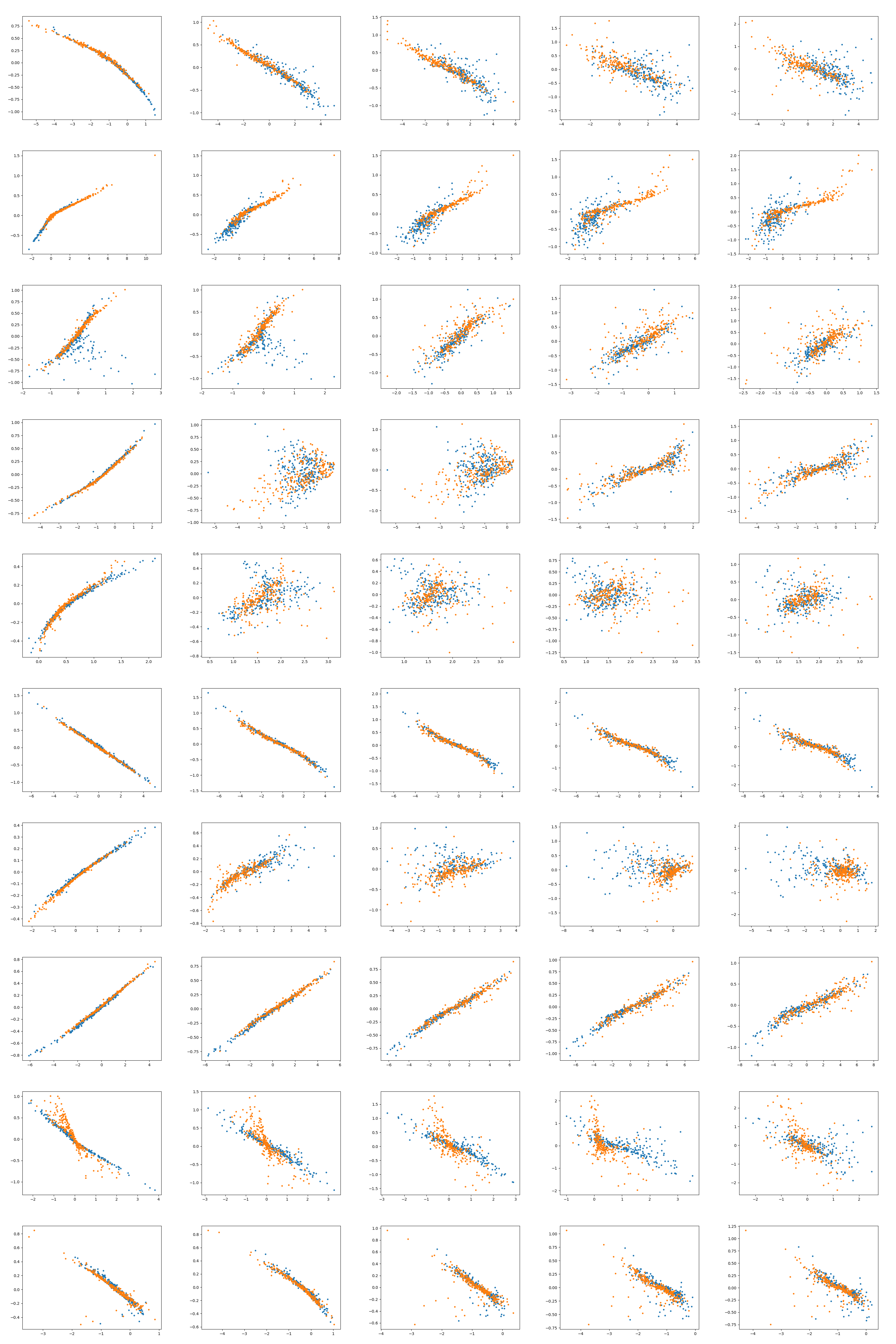}
    \vspace*{-0.1in}
    \caption{Plots of recovered-true latent when \textit{ignorability} holds. Rows: first 10 nonlinear random models, columns: \textit{proxy} noise level.}
    \vspace*{-0.1in}
\end{figure}

\begin{figure}[h] 
    \vspace*{-0.1in}
    \centering
    \includegraphics[width=.9\textwidth]{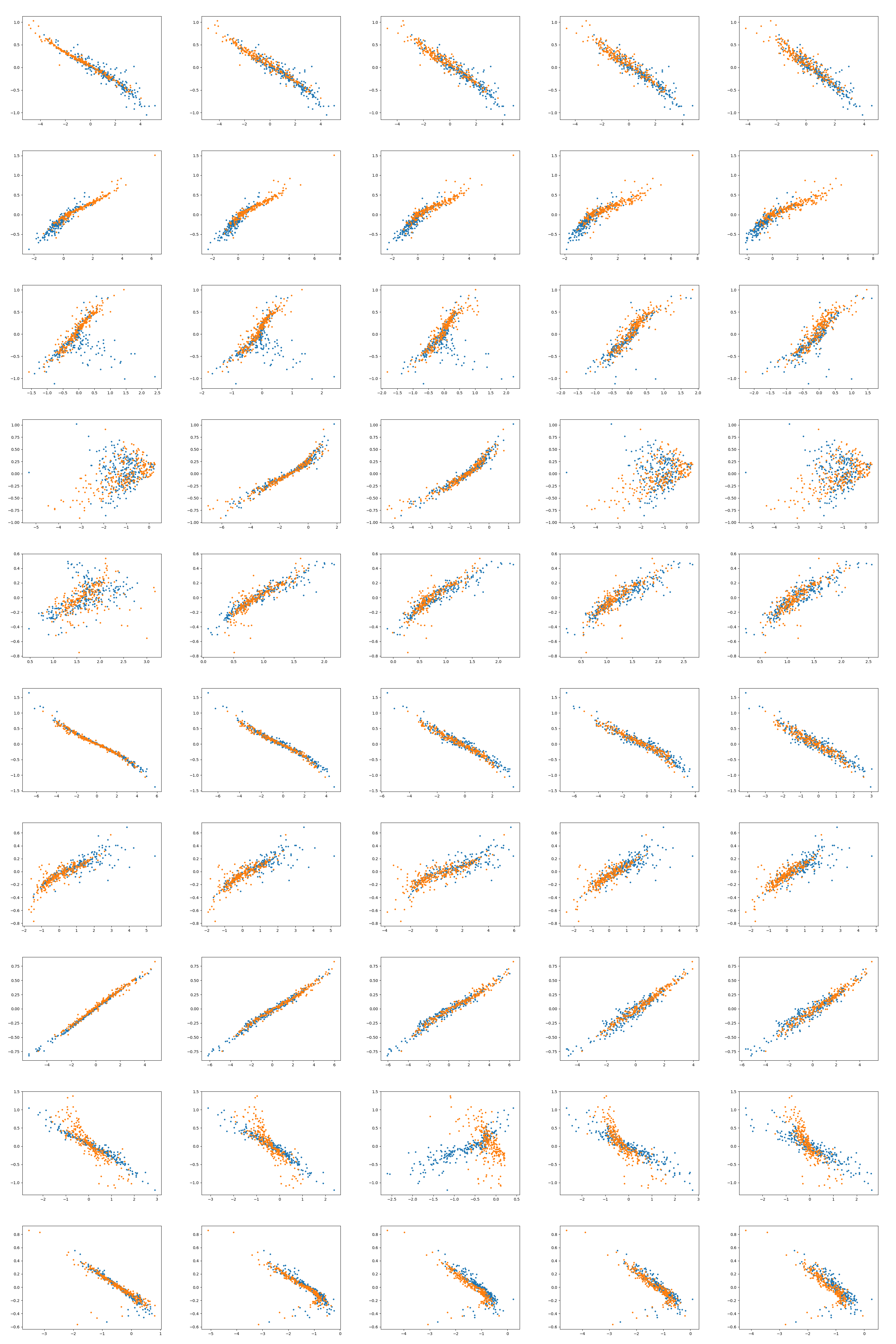}
    \vspace*{-0.1in}
    \caption{Plots of recovered-true latent when \textit{ignorability} holds. Rows: first 10 nonlinear random models, columns: \textit{outcome} noise level.}
    \vspace*{-0.1in}
\end{figure}

\begin{figure}[h] 
    \vspace*{-0.1in}
    \centering
    \includegraphics[width=.9\textwidth]{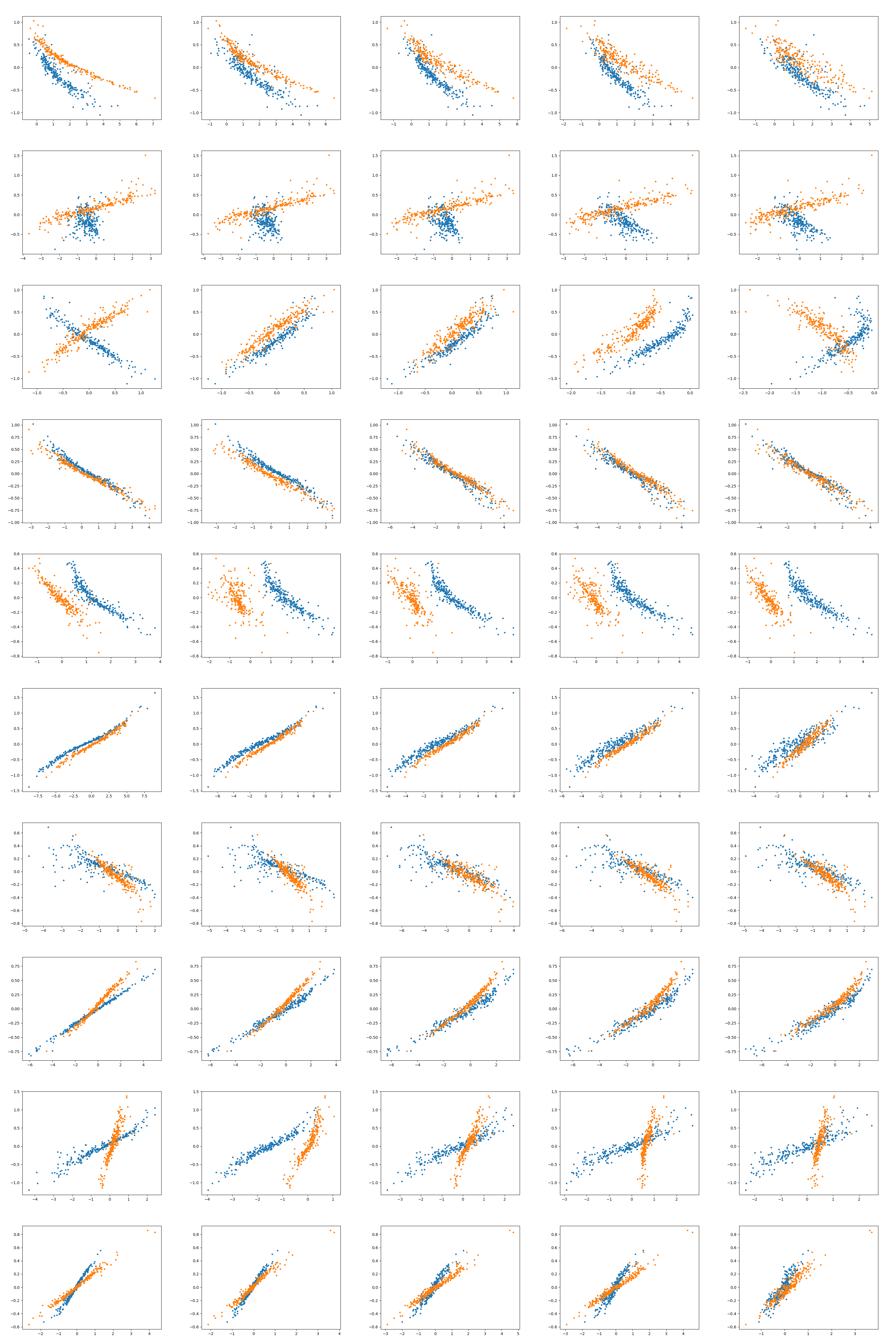}
    \vspace*{-0.1in}
    \caption{Plots of recovered-true latent when \textit{ignorability} holds. Conditional prior \textit{depends} on $t$. Rows: first 10 nonlinear random models, columns: \textit{outcome} noise level. Compare to the previous figure, we can see the transformations for $t=0,1$ are \textit{not} the same, confirming our Theorem \ref{th:bestimation}.}
    \vspace*{-0.1in}
\end{figure}

\begin{figure}[h] 
    \vspace*{-0.1in}
    \centering
    \includegraphics[width=.9\textwidth]{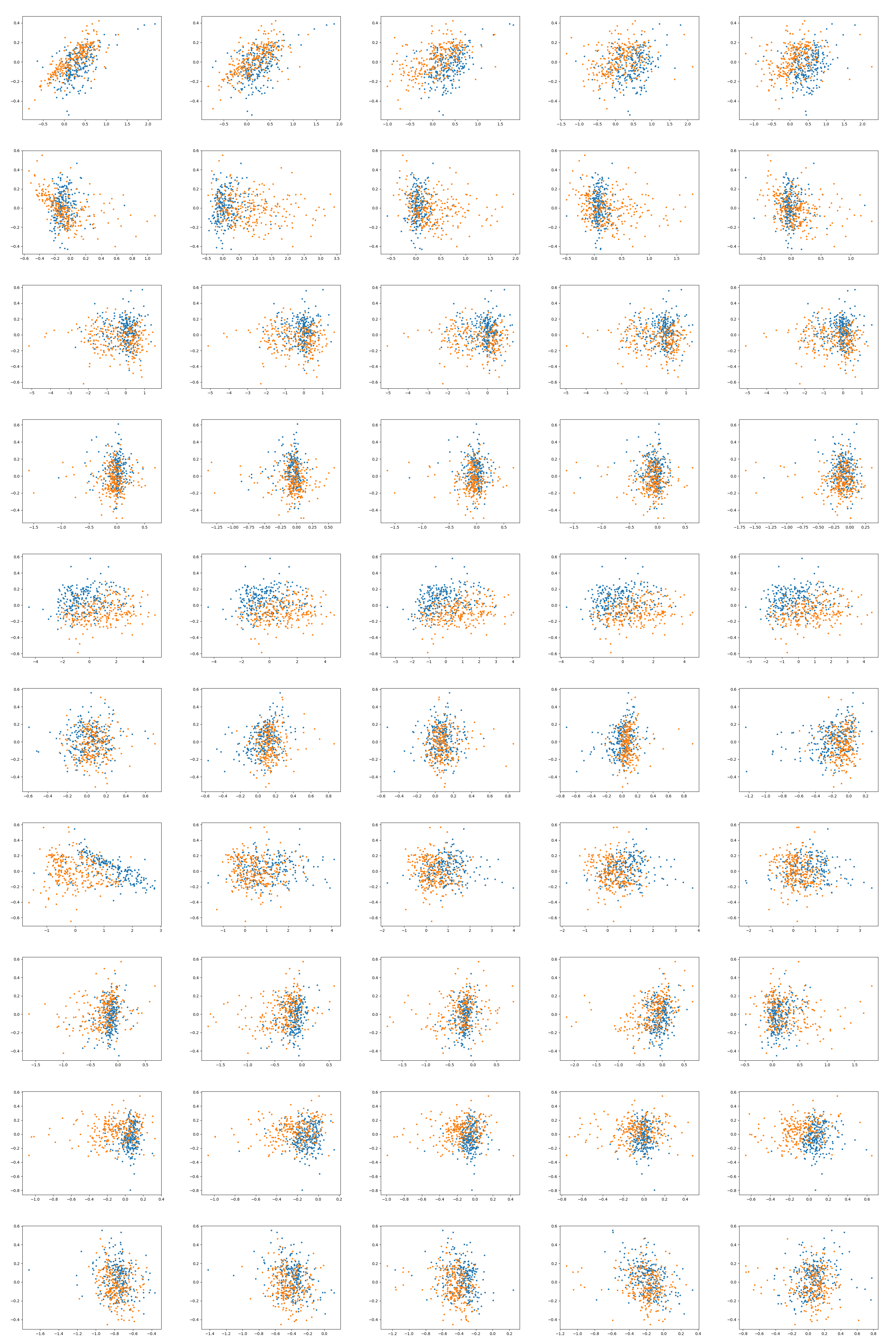}
    \vspace*{-0.1in}
    \caption{Plots of recovered-true latent on \textit{IVs}. Rows: first 10 nonlinear random models, columns: \textit{outcome} noise level.}
    \vspace*{-0.1in}
\end{figure}

\end{document}